	\def\ps@pprintTitle{%
 	\let\@oddhead\@empty
	\let\@evenhead\@empty
	\def\@oddfoot{\centerline{\thepage}}%
	\let\@evenfoot\@oddfoot}
\pgfplotsset{compat=newest} 
\pgfplotsset{plot coordinates/math parser=false} 
\newlength\fwidth
\newlength\fheight
\newtheorem{proposition}{Proposition}
\theoremstyle{definition}
\newtheorem{remark}{Remark}
\DeclareMathOperator{\sech}{sech}
\newcommand{\real}{\mathbb{R}}
\newcommand{\bvarphi}{ \boldsymbol \varphi}
\newcommand{\bvarphihat}{\widehat{ \bvarphi}}
\newcommand{\bphihat}{\widehat{ \bphi}}
\newcommand{\dt}{\Delta t}
\newcommand{\y}{\mathbf y}
\newcommand{\dd}{\rm d}
\newcommand{\f}{\mathbf{f}}
\newcommand{\q}{\mathbf q}
\newcommand{\w}{\mathbf w}
\newcommand{\qdot}{\dot{\q}}
\newcommand{\pdot}{\dot{\p}}
\newcommand{\ydot}{\dot{\y}}
\newcommand{\ybar}{\bar{\y}}
\newcommand{\nbar}{\bar{n}}
\newcommand{\what}{\widehat{\w}}
\newcommand{\What}{\widehat{\W}}
\newcommand{\phat}{\widehat{\p}}
\newcommand{\p}{\mathbf p}
\newcommand{\x}{\mathbf x}
\newcommand{\bphi}{\boldsymbol \phi}
\newcommand{\bPhi}{\mathbf{\Phi}}
\newcommand{\V}{\mathbf V}
\newcommand{\F}{\mathbf F}
\newcommand{\Fhat}{\widehat \F}
\newcommand{\fn}{\f_{\text{non}}}
\newcommand{\Y}{\mathbf Y}
\newcommand{\Hhat}{\widehat{\mathbf H}}
\newcommand{\Yhat}{\widehat{\mathbf Y}}
\newcommand{\A}{\mathbf A}
\newcommand{\D}{\mathbf D}
\newcommand{\bH}{\mathbf H}
\newcommand{\Dhat}{\widehat{\D}}
\newcommand{\B}{\mathbf B}
\newcommand{\Q}{\mathbf Q}
\newcommand{\W}{\mathbf W}
\newcommand{\qhat}{\widehat{\q}}
\newcommand{\qhatdot}{\mathbf{\dot{\widehat q}}}
\newcommand{\phatdot}{\mathbf{\dot{\widehat p}}}
\newcommand{\Qhat}{\mathbf{\widehat Q}}
\newcommand{\Phat}{\mathbf{\widehat P}}
\newcommand{\Qhatdot}{\mathbf{\dot{\widehat Q}}}
\newcommand{\Phatdot}{\mathbf{\dot{\widehat P}}}
\newcommand{\Yhatdot}{\mathbf{\dot{\widehat Y}}}
\newcommand{\Pp}{\mathbf P}
\newcommand{\In}{\mathbf I_n}
\begin{document}
\begin{frontmatter}
\title{Structure-preserving Lift \& Learn: Scientific machine learning for nonlinear conservative partial differential equations }

 		\author[affil1]{Harsh Sharma\corref{cor1}}
		 		\cortext[cor1]{Corresponding author}
		\ead{hasharma@ucsd.edu}
 		\author[affil1]{Juan Diego Draxl Giannoni}
 		\author[affil1]{Boris Kramer}

			\address[affil1]{Department of Mechanical and Aerospace Engineering, University of California San Diego, CA, United States}

\begin{abstract}
This work presents \textit{structure-preserving Lift \& Learn}, a scientific machine learning method that employs lifting variable transformations to learn structure-preserving reduced-order models for nonlinear partial differential equations (PDEs) with conservation laws. We propose a hybrid learning approach based on a recently developed energy-quadratization strategy that uses knowledge of the nonlinearity at the PDE level to derive an equivalent quadratic lifted system with quadratic system energy. The lifted dynamics obtained via energy quadratization are linear in the old variables, making model learning very effective in the lifted setting. Based on the lifted quadratic PDE model form, the proposed method derives quadratic reduced terms analytically and then uses those derived terms to formulate a constrained optimization problem to learn the remaining linear reduced operators in a structure-preserving way. The proposed hybrid learning approach yields computationally efficient quadratic reduced-order models that respect the underlying physics of the high-dimensional problem. We demonstrate the generalizability of quadratic models learned via the proposed structure-preserving Lift \& Learn method through three numerical examples: the one-dimensional wave equation with exponential nonlinearity, the two-dimensional sine-Gordon equation,  and the two-dimensional Klein-Gordon-Zakharov equations. The numerical results show that the proposed learning approach is competitive with the state-of-the-art structure-preserving data-driven model reduction method in terms of both accuracy and computational efficiency.
 \end{abstract}
\end{frontmatter}
\section{Introduction}
\label{sec:introduction}
Model reduction methods seek to derive reduced-order models (ROMs) that provide accurate approximations of the  full-order model (FOM) solutions. The low-dimensional nature of ROMs  enables many-query tasks such as optimization, design, uncertainty quantification, and control, where models need to be evaluated many times for different settings. For applications where the high-dimensional problem possesses additional qualitative features like conservations laws, it is desirable to construct a ROM that respects the underlying physics of the problem. The field of structure-preserving model reduction has developed such physics-preserving ROMs for a wide class of applications ranging from computational physics~\cite{klein2024energy,tyranowski2023symplectic,hesthaven2024adaptive} to structural mechanics~\cite{carlberg2015preserving,sharma2024lagrangian} to soft robotics~\cite{adibnazari2023full,lepri2023neural,sharma2024data}. 

In this work, we focus on structure-preserving nonintrusive model reduction for conservative PDEs of the form
\begin{equation}
\frac{\partial^2 \phi (\x,t)}{\partial t^2}=\mathbf \Delta \phi (\x,t) - f_{\text{non}}(\phi (\x,t)), 
\label{eq:pde}
\end{equation}
where $\x=(x_1,x_2,\cdots,x_d) \in \Omega$ is the spatial variable, $t$ is time, $\mathbf \Delta$ is the Laplacian operator in $\real^d$, $\phi (\x,t)$ is the scalar state field,  and the nonlinear component of the vector field, $f_{\text{non}}(\phi (\x,t)):=\nabla_{\phi}(g(\phi(\x,t)))$, is derived from a smooth nonlinear function $g(\phi(\x,t))$. A key property of nonlinear wave equations of the form~\eqref{eq:pde} is that the total energy
\begin{equation}
\mathcal E[\phi(\x,t)]:=\int_{\Omega} \left( \frac{1}{2}\left( \frac{\partial \phi(\x,t)}{\partial t} \right)^2 + \frac{1}{2}\left(\nabla \phi(\x,t) \right)^2  + g(\phi(\x,t)) \right) \dd \x,
\label{eq:epde}
\end{equation}
is a conserved quantity provided the boundary conditions enforce zero normal energy flux through the boundary, i.e., $\partial_n \phi(\x,t)\,\frac{\partial \phi(\x,t)}{\partial t} = 0$ on $\partial\Omega$. This includes common conservative settings such as periodic boundary conditions, homogeneous Neumann boundary conditions, and unbounded domains where $\phi(\x,t)$ and its gradients decay sufficiently fast at infinity.
The integrand in~\eqref{eq:epde} typically has the physical interpretation of energy density with the first term in the integrand representing the kinetic energy and the other two terms representing the potential energy. 

Structure-preserving model reduction for conservative PDEs was first explored from the Hamiltonian viewpoint in~\cite{peng2016symplectic} where the authors derived Hamiltonian ROMs by projecting the Hamiltonian FOM operators onto a symplectic subspace obtained via proper symplectic decomposition (PSD). In a similar direction, the work in~\cite{afkham2017structure} developed reduced basis methods for structure-preserving model reduction of parametric Hamiltonian systems. The authors in~\cite{gong2017structure} presented a modified projection technique based on the proper orthogonal decomposition (POD) basis such that the Hamiltonian structure is preserved after the Galerkin projection step. Building on these methods,  a variety of structure-preserving model reduction methods have been developed recently~\cite{hesthaven2021structureb, buchfink2023symplectic, sharma2023symplectic, gruber2024variationally}. All of these methods, however, are \textit{intrusive}, which means that the projection step requires access to the high-dimensional FOM operators. Moreoever, all of the aforementioned structure-preserving approaches suffer from computational efficiency issues as the evaluation of the nonlinear components of the structure-preserving ROM vector field still scales with the FOM dimension. To tackle this computational bottleneck, the authors in~\cite{wang2021structure} developed a structure-preserving variant of the discrete empirical interpolation method (DEIM) for Hamiltonian systems. The authors in~\cite{pagliantini2023gradient} improved on this earlier work and presented a gradient-preserving hyper-reduction approach with theoretical guarantees for the preservation of the FOM Hamiltonian. This approach, however, requires a computationally demanding offline phase that constructs the DEIM basis from the Jacobian snapshot data matrix.

In another research direction, projection-based model reduction via lifting~\cite{gu2011qlmor, benner2015two, kramer2019nonlinear,benner2018mathcal,kramer2019balanced} has emerged as a promising approach for nonlinear dynamical systems. By quadratizing the nonlinear dynamics before projection, these methods eliminate the need for the additional hyper-reduction step.  In general, however, quadratic ROMs derived via lifting are not guaranteed to respect the underlying physics of the high-dimensional problem. The authors in~\cite{sharma2025nonlinear} presented an energy-quadratization strategy for structure-preserving model reduction for conservative PDEs of the form~\eqref{eq:pde}. Even though the energy-quadratization strategy yields a computationally efficient structure-preserving ROM without any additional hyper-reduction, the Galerkin projection step is intrusive in the sense that it requires access to the linear and quadratic FOM operators in the lifted setting. For many practical applications, the lifted FOM operators are not available as this would require implementing a discretization for a new PDE, which motivates the need for a \textit{nonintrusive} approach for learning structure-preserving quadratic ROMs directly from data. 

The Operator Inference method for nonintrusive model reduction of FOMs with linear and/or low-order polynomial terms was introduced in~\cite{peherstorfer2016data,kramer2024learning}. This nonintrusive approach has been extended to nonlinear FOMs with nonpolynomial systems in~\cite{benner2020operator} where knowledge of nonpolynomial nonlinear terms at the PDE level is leveraged to learn the remaining reduced operators via Operator Inference. Building on this gray-box approach, the authors in~\cite{sharma2022hamiltonian} developed Hamiltonian Operator Inference for learning physics-preserving ROMs of canonical Hamiltonian systems. Since then a variety of structure-preserving Operator Inference methods have been developed for different classes of conservative FOMs~\cite{gruber2023canonical,sharma2024preserving,filanova2023operator,vijaywargiya2025tensor}. However, all of these structure-preserving Operator Inference approaches, similarly to their intrusive counterparts, suffer from computational efficiency issues for nonlinear systems and need an additional structure-preserving hyper-reduction step to reduce the computational cost in the online stage.

The authors in~\cite{qian2020lift} integrated lifting transformations with the Operator Inference framework to develop a method called Lift \& Learn for nonintrusively learning quadratic ROMs of general nonlinear FOMs. This method circumvents the need for hyper-reduction by learning quadratic ROMs from projections of the FOM snapshot data in the lifted setting. Even though this machine learning approach learns computationally efficient quadratic ROMs from data, the learned ROMs are not guaranteed to respect the underlying physics of the problem (see Section~\ref{sec:background}). The main goal of this work is to develop a structure-preserving Lift \& Learn approach to derive structure-preserving quadratic ROMs of nonlinear conservative PDEs. The main contributions of this work are:
\begin{enumerate}
\item We present a nonintrusive model reduction method--a special class of scientific machine learning--to derive structure-preserving quadratic ROMs from data generated by the high-dimensional nonlinear models of conservative PDEs. The proposed approach (see Figure~\ref{fig:schematic} for a schematic overview) leverages knowledge of the non-polynomial nonlinearity at the conservative PDE level to ensure that the learned quadratic ROMs respect the underlying physics of the problem. 
\item We present a theoretical result that shows that the structure-preserving Lift \& Learn ROMs conserve a perturbed lifted FOM energy exactly.
\item We learn structure-preserving quadratic ROMs for three nonlinear conservative PDEs. The numerical results demonstrate the learned ROMs' ability to provide accurate and stable predictions outside the training data regime while also achieving computational efficiency similar to the state-of-the-art nonintrusive Hamiltonian ROMs with structure-preserving hyper-reduction.
\end{enumerate}
The paper is structured as follows. Section~\ref{sec:background} summarizes the standard Lift \& Learn approach for learning quadratic ROMs of general nonlinear systems via lifting transformations.  Section~\ref{sec:method} introduces structure-preserving Lift \& Learn, a nonintrusive model reduction approach for learning quadratic ROMs that respect the underlying physics of the problem. Section~\ref{sec:numerical} demonstrates the proposed nonintrusive approach on three nonlinear conservative PDEs with increasing complexity: one-dimensional nonlinear wave equation with exponential nonlinearity, two-dimensional sine-Gordon equation, and  two-dimensional Klein-Gordon-Zakharov equation. Finally, Section~\ref{sec:conclusion} provides concluding remarks and suggests future research directions.

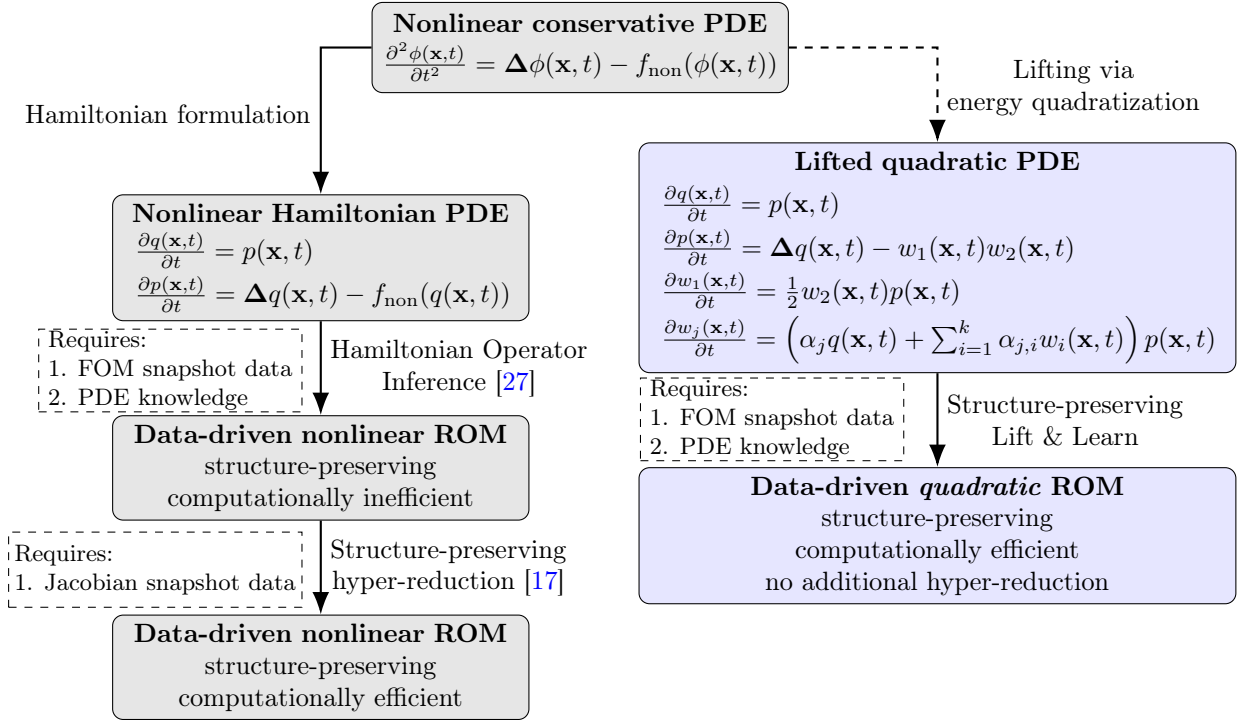
\begin{figure}
\centering
\begin{tikzpicture}[
  node distance=1.25cm and 3.2cm,
  box/.style={rectangle, draw=black!70!black, fill=gray!20!white, text width=5.3cm, align=center, rounded corners, minimum height=1.2cm},
  boxRight/.style={rectangle, draw=black!70!black, fill=blue!10!white, text width=7.7cm, align=center, rounded corners, minimum height=1.2cm},
  arrow/.style={thick, -{Latex[length=3mm]}},
  label/.style={font=\small\itshape}
]

\node[box] (nonlinearFOM) {
    \textbf{Nonlinear Hamiltonian PDE}\\ \vspace{0.1cm}
    \(
\begin{array}{l}\vspace{0.1cm}
\frac{\partial q (\x,t)}{\partial t}=p(\x,t) \\
\frac{\partial p (\x,t)}{\partial t}=\mathbf \Delta q (\x,t) - f_{\text{non}}(q (\x,t))
\end{array}
\)
    
};

\node[boxRight, right=of nonlinearFOM, xshift=-1.75cm] (quadraticFOM) {
    \textbf{Lifted quadratic PDE}\\ \vspace{0.1cm}
    \(
\begin{array}{l}\vspace{0.1cm}
 \frac{\partial q (\x,t)}{\partial t}=p(\x,t) \\ \vspace{0.1cm}
 \frac{\partial p (\x,t)}{\partial t}=\mathbf \Delta q (\x,t) - w_1(\x,t) w_2(\x,t) \\\vspace{0.1cm}
\frac{\partial w_1 (\x,t)}{\partial t}=\frac{1}{2}w_2(\x,t) p(\x,t) \\
  \frac{\partial w_j (\x,t)}{\partial t}=\left(\alpha_jq(\x,t) +\sum_{i=1}^{k}\alpha_{j,i}w_i(\x,t) \right) p(\x,t)
\end{array}
\)
};

\path (nonlinearFOM.north west) -- (quadraticFOM.north east) coordinate[pos=0.5] (midtop);
\node[box, anchor=south] (pde) at ($(midtop)+(-1.25,1)$) {
    \textbf{Nonlinear conservative PDE}\\ \vspace{0.1cm}
   
   $\frac{\partial^2 \phi (\x,t)}{\partial t^2}=\mathbf \Delta \phi (\x,t) - f_{\text{non}}(\phi (\x,t))$
    
};
\node[box, below=of nonlinearFOM] (nonlinearROM1) {
    \textbf{Data-driven nonlinear ROM}\\
    structure-preserving\\
    computationally inefficient
};

\node[boxRight, below=of quadraticFOM] (quadraticROM) {
    \textbf{Data-driven \textit{quadratic} ROM}\\
    structure-preserving\\
    computationally efficient\\
    no additional hyper-reduction
};

\node[box, below=of nonlinearROM1] (nonlinearROM2) {
    \textbf{Data-driven nonlinear ROM}\\
    structure-preserving\\
    computationally efficient
};
\draw[arrow] 
  (pde.west) -| ([yshift=0.2cm]nonlinearFOM.north) node[midway, left, yshift=-25pt, align=center] {Hamiltonian formulation} -- (nonlinearFOM.north);
  \draw[arrow,dashed] 
  (pde.east) -| ([yshift=0.2cm]quadraticFOM.north) node[midway, right, yshift=-15pt, align=center] {Lifting via \\
  energy quadratization} -- (quadraticFOM.north)   ;
             \node[draw,dashed, minimum width=3.5cm, minimum height=1.1cm, align=center] at (-2, -1.5) {};
         \node  [align=left,font=\small] at (-2,-1.5) {Requires: \\ 1. FOM snapshot data \\ 2. PDE knowledge};
                  \node[draw,dashed, minimum width=3.5cm, minimum height=1.1cm, align=center] at (6,-2.15) {};
              \node  [align=left,font=\small] at (6,-2.15) {Requires: \\ 1. FOM snapshot data \\ 2. PDE knowledge};
                                \node[draw,dashed, minimum width=3.9cm, minimum height=1cm, align=center] at (-2.2,-4.15) {};
            \node  [align=left,font=\small] at (-2.2,-4.15) {Requires: \\ 1. Jacobian snapshot data };
    \draw[arrow] (nonlinearFOM.south) -- (nonlinearROM1.north)
    node[midway,right, align=center]{Hamiltonian Operator \\ Inference~\cite{sharma2022hamiltonian} }; 
    
        \draw[arrow] (nonlinearROM1.south) -- (nonlinearROM2.north)
    node[midway, right, align=center]{Structure-preserving \\ hyper-reduction~\cite{pagliantini2023gradient}}; 
        \draw[arrow] (quadraticFOM.south) -- (quadraticROM.north)
    node[midway, right, align=center] {Structure-preserving \\Lift \& Learn};

\end{tikzpicture}

\caption{The proposed structure-preserving Lift \& Learn approach (right path) learns computationally efficient quadratic ROMs without any additional hyper-reduction step whereas the Hamiltonian Operator Inference approach (left path) requires additional Jacobian snapshot data for structure-preserving hyper-reduction. The dashed line style for the arrow from the nonlinear conservative PDE to the lifted quadratic FOM indicates that we only derive the symbolic form of the lifted quadratic FOM, and constructing the lifted FOM operators is \textit{not} required.}
\label{fig:schematic}
\end{figure}
\section{Background}
\label{sec:background}
In Section~\ref{sec:std_ll} we review Lift \& Learn~\cite{qian2020lift}, a model reduction method for deriving quadratic ROMs of general nonlinear FOMs nonintrusively from data. In Section~\ref{sec:motivation} we motivate the need for structure-preserving Lift \& Learn by demonstrating how the standard Lift \& Learn approach violates the conservative nature of the original high-dimensional problem. 
\subsection{Standard Lift \& Learn}
\label{sec:std_ll}
Consider a nonlinear PDE
\begin{equation}
\frac{\partial y(\x,t)}{\partial t}=f(y(\x,t)), \qquad y(\x,0)=y_0(\x),
\label{eq:gen_non_pde}
\end{equation}
where $f(y(\x,t))$ is a differentiable nonlinear function that maps the $d_s$-dimensional vector consisting of field variables $y(\x,t)$ to its time derivative and $y_0(\x)$ is the initial condition. The corresponding space-discretized model is a high-dimensional system of nonlinear ODEs
\begin{equation}
\dot{\y}(t)=\f(\y(t)), \qquad \y(0)=\y_0,
\label{eq:gen_non}
\end{equation}
where $\y(t) \in \real^{n\cdot d_s}$ is the spatially disretized state vector, $\y_0$ is the space-discretized initial condition, and $\f(\y(t))$ is an $n\cdot d_s$-dimensional vector of real-valued nonlinear functions. We summarize the three steps of the standard Lift \& Learn~\cite{qian2020lift} next, and summarize our notation in Table~\ref{tablex}.

\begin{enumerate}
\item \textbf{Expose quadratic structure via lifting transformations:}  Given a nonlinear PDE of the form~\eqref{eq:gen_non_pde}, exploit knowledge of the functional form of the nonlinearity in $f(y(\x,t))$ to identify a lifting transformation which yields an augmented vector of field variables in which the system dynamics have quadratic structure. The key idea of lifting is to find a quadratization $w \in \real^{d_w}$ of the nonlinear PDE~\eqref{eq:gen_non_pde} in terms of $d_w$ additional variables $w_i=\tau_i(y)$ for $i=1,\cdots, d_w$ and  then employ a transformation $\tau_{\text{lift}}: \real^{d_s} \to \real^{d_a}$ with $d_a=d_s+d_w$ that transforms the original nonlinear PDE in field variables $y \in \real^{d_s}$ into an equivalent lifted PDE in field variables $\bar{y}=[y^\top,w^\top]^\top \in  \real^{d_a}$ with quadratic dynamics, i.e., 
\begin{equation} 
\frac{\partial y(\x,t)}{\partial t}=f(y(\x,t)) \qquad \xrightarrow{\bar{y}=\tau_{\text{lift}}(y)} \qquad \frac{\partial \bar{y}(\x,t)}{\partial t}=\bar{a}(\bar{y}(\x,t)) + \bar{b}(\bar{y}(\x,t)),
\label{eq:lifted_pde}
\end{equation}
where $\bar{a}$ and $\bar{b}$ consist of linear functions $\bar{a}_j$ and quadratic functions $\bar{b}_j$, respectively, for $j=1,\cdots, d_a$. 
\item \textbf{Construct lifted training data:} At the space-discretized level, the lifting transformation transforms the nonlinear FOM~\eqref{eq:gen_non} into an equivalent lifted quadratic FOM, i.e.,
\begin{equation} 
\ydot(t)=\f(\y(t)) \qquad \xrightarrow{\ybar=\tau_{\text{lift}}(\y)} \qquad \dot{\ybar}(t)=\bar{\A}\ybar(t) + \bar{\B}(\ybar(t) \otimes \ybar(t)),
\label{eq:lifted_FOM}
\end{equation}
where $\ybar(t)\in \real^{\nbar}$ with $\nbar=n \cdot d_a$ is the  space-discretized state vector in the lifted setting, the notation $\otimes$ denotes the Kronecker product of vectors, and $\bar{\A} \in \real^{\nbar \times\nbar}$ and $\bar{\B}\in \real^{\nbar \times\nbar^2}$ are the linear and quadratic FOM operators in the lifted setting, respectively. To obtain the data in the lifted variables, the first step is to simulate the original non-lifted FOM~\eqref{eq:gen_non} for $K$ time steps to construct  the high-dimensional snapshot data $\Y\in \real^{n\cdot d_s \times K}$. Then, the lifting map is applied to this snapshot data matrix to obtain the lifted snapshot data matrix $\bar{\Y} \in \real^{\bar{n}\times K}$.  For projection-based model reduction, the lifted state is approximated as $\ybar(t) \approx \bar{\V}\ybar_r(t)$ with a basis matrix $\bar{\V} \in \real^{\nbar \times \bar{r}}$ and then the reduced snapshot data matrix $\Yhat_r \in \real^{\bar {r} \times K}$ is obtained by projecting $\bar{\Y}$ onto the subspace spanned by $\bar{\V}$. Finally, the reduced time-derivative data matrix $ \Yhatdot_r \in \real^{\bar{r} \times K}$ is constructed from $\Yhat_r$ using a finite difference scheme.
\item \textbf{Solve the least-squares problem via Operator Inference:}  The postulated quadratic form for the nonintrusive ROM based on the lifted FOM in~\eqref{eq:lifted_FOM} is given by
\begin{equation}
\dot{\ybar}_r(t)=\bar{\A}_r\ybar_r(t) + \bar{\B}_r(\ybar_r(t) \otimes \ybar_r(t)),
\label{eq:std_lifting}
\end{equation}
where the reduced operators $\bar{\A}_r \in \real^{\bar{r} \times\bar{r}}$ and $\bar{\B}_r\in \real^{\bar{r} \times\bar{r}^2}$  in the lifted setting are learned from data. Based on~\eqref{eq:std_lifting}, Lift \& Learn formulates the following minimization problem:
\begin{equation}
\min_{ \bar{\A}_r\in \real^{\bar{r} \times\bar{r}}, \bar{\B}_r\in \real^{\bar{r} \times\bar{r}^2}} \lVert \Yhatdot_r - \bar{\A}_r\Yhat_r - \bar{\B}_r(\Yhat_r\otimes \Yhat_r) \rVert_F.
\end{equation}
\end{enumerate}
The ability to learn the reduced operators in~\eqref{eq:std_lifting} without assuming access to the lifted FOM operators in~\eqref{eq:lifted_FOM} is a key advantage of the Lift \& Learn approach. However, this learning approach based on exploiting the quadratic structure in the lifted setting is not guaranteed to be structure-preserving for nonlinear conservative PDEs. This is best understood with an example. 
\begin{table}[h!]

\centering
\caption{Summary of notation and dimensions for the original (non-lifted) and lifted systems introduced in Section~\ref{sec:std_ll}.}
\label{tablex}
\setlength{\tabcolsep}{6pt}
\renewcommand{\arraystretch}{1.4}

\begin{tabular}{c|c|c}
 & Original variables (non-lifted)  & Lifted variables  \\ \hline

PDE level  &
 $\displaystyle \partial y(\x,t)/\partial t = f(y(\x,t))$   &
$\displaystyle \partial \bar{y}(\x,t)/ \partial t
   = \bar{a}(\bar{y}(\x,t)) + \bar{b}(\bar{y}(\x,t))$     \\ 
(space-time continuous)   &with $d_s$-dimensional state vector $y(\x,t)$    &  with $d_a$-dimensional state vector $\bar{y}(\x,t)$  
  \\ \hline

FOM level  &
$\displaystyle \dot{\y}(t) = \f(\y(t))$  &
$\displaystyle \dot{\ybar}(t)
= \bar{\A}\ybar(t) + \bar{\B}(\ybar(t) \otimes \ybar(t))$  \\ 
 (space-discretized)  &with $\y \in \real^{n \cdot d_s}$   & with $\bar{\y} \in \real^{n \cdot d_a}$  
  \\ \hline

ROM level &
$\displaystyle \dot{\y}_r(t)= \f_r(\y_r(t))$  &
 $\displaystyle \dot{\ybar}_r(t)
= \bar{\A}_r\ybar_r(t)
+ \bar{\B}_r(\ybar_r(t) \otimes \ybar_r(t))$  \\ 
(reduced-order)  &  with $\y_r \in \real^{r}$  & with $\bar{\y}_r \in \real^{\bar{r}}$  
  \\ 

\end{tabular}
\end{table}
\subsection{Motivation: Lift \& Learn does not conserve additional properties}
\label{sec:motivation}
Consider the one-dimensional sine-Gordon equation
\begin{equation}
\frac{\partial^2 \phi(x,t)}{\partial t^2}=\frac{\partial^2 \phi(x,t)}{\partial x^2}-\sin(\phi(x,t)).
 \label{eq:sg_pde}
\end{equation}
The boundary conditions are periodic and the initial conditions are $(\phi(x,0),\frac{\partial \phi}{\partial t}(x,0))=(0, 4/\cosh (x))$.
We define $q(x,t)=\phi(x,t)$ and $p(x,t)=\frac{\partial \phi(x,t)}{\partial t}$ to rewrite the nonlinear conservative PDE~\eqref{eq:sg_pde} in first-order form. A structure-preserving spatial discretization of these first-order PDEs would lead to a $2n$-dimensional nonlinear conservative FOM of the form
\begin{equation} 
    \dot{\q}(t)=\p(t), \quad
    \dot{\p}(t)
    =\D\q(t) - \sin(\q(t)),
 \label{eq:sg_cons}
\end{equation}
which conserves the space-discretized nonlinear FOM energy $E(\q,\p,t)=\frac{1}{2} \p(t)^\top \p(t) - \frac{1}{2}\q(t)^\top\D \q(t) + \sum_{i=1}^n \left( 1-\cos(q_i(t)) \right)$. Note that the symmetric discretization matrix $\D=\D^\top \in \real^{n \times n} $, also known as the symmetric discrete Laplacian, contains all the information about the spatial discretization scheme. From hereon, we simplify the notation by omitting explicit dependence on time: the space-discretized state vectors $\q(t)$ and $\p(t)$ at time $t$ are therefore denoted as $\q$ and $\p$, respectively.  

Since the lifting map $\tau_{\text{lift}}$ in~\eqref{eq:lifted_pde} is generally non-unique, there are multiple ways of deriving a quadratization for~\eqref{eq:sg_pde}. We presented an energy-quadratization strategy in~\cite{sharma2025nonlinear} that, in combination with intrusive Galerkin projection, leads to a structure-preserving quadratic ROM. Based on this energy-quadratization strategy, we introduce two auxiliary variables $w_1=\sin(q/2)$ and $w_2=\cos(q/2)$ that quadratize the nonlinear energy  and transform~\eqref{eq:sg_cons} into a $4n$-dimensional quadratic lifted FOM
\begin{align}
\label{eq:sg_lifted}
    \dot{\q}&=\p, \nonumber\\
    \dot{\p}&=\D\q - 2\w_1\odot \w_2, \\
    \dot{\w}_1&=\frac{1}{2}\w_2 \odot \p,\nonumber\\
    \dot{\w}_2&= -\frac{1}{2} \w_1\odot \p, \nonumber
\end{align}
where the notation $\odot$ denotes the (Hadamard) component-wise product of two vectors. This quadratic lifted FOM possesses a quadratic invariant in the lifted variables $E_{\text{lift}}(\q,\p,\w_1,\w_2)=\frac{1}{2}\p^\top\p - \frac{1}{2}\q^\top \D\q + 2\w_1^\top\w_1$. We use a block-diagonal basis matrix $\bar{\V}  \in \real^{4n \times 4r}$ that preserves the coupling structure
\begin{equation*}
\bar{\V} =\text{blkdiag}(\bPhi,\bPhi,\V_{1},\V_{2})\in \real^{4n\times 4r},
\end{equation*}
where $\bPhi$ is the proper symplectic decomposition (PSD) basis matrix computed using the cotangent lift algorithm in~\cite{peng2016symplectic}, and $ \V_{1}$ and $\V_{2}$ are the POD basis matrices for $\w_1$ and $\w_2$, respectively.\footnote{We showed in~\cite{sharma2025nonlinear} that choosing the PSD basis matrix for $\q$ and $\p$ ensures that the quadratic ROM derived via intrusive projection of the lifted FOM~\eqref{eq:sg_lifted} conserves the lifted FOM energy.}
 
We build a training dataset by simulating the nonlinear FOM of dimension $2n=400$ from $t=0$ to $t=10$ with fixed time step $\Delta t=0.005$. We then compute a block-diagonal basis matrix for the lifted FOM state and then learn quadratic ROMs from projected data. We numerically integrate  the quadratic ROMs using the implicit midpoint method which is a fully implicit energy-conserving integrator for quadratic vector fields. Figure~\ref{fig:comp_state} shows the relative state error in the training data regime  for learned quadratic ROMs of different dimensions. We observe that the state approximation error over the training data decreases monotonically from $2r=2$ to $2r=20$. However, the energy error plots in Figure~\ref{fig:comp_energy} show that standard Lift \& Learn ROMs demonstrate unbounded energy error growth despite using an energy-conserving numerical integrator. This illustrates that preserving the quadratic structure alone in Lift \& Learn is not enough for nonlinear FOMs with conservation laws.
\begin{figure}[tbp]
\small
\captionsetup[subfigure]{oneside,margin={1.8cm,0 cm}}
\begin{subfigure}{.45\textwidth}
       \setlength\fheight{6 cm}
        \setlength\fwidth{\textwidth}
%
%
\begin{tikzpicture}

\begin{axis}[%
width=0.983\fheight,
height=0.65\fheight,
at={(0\fheight,0\fheight)},
scale only axis,
xmin=0,
xmax=20,
xlabel style={font=\color{white!15!black}},
xlabel={Reduced dimension $2r$},
ymode=log,
ymin=1e-05,
ymax=1,
yminorticks=true,
ylabel style={font=\color{white!15!black}},
ylabel={Relative state error in $q$},
axis background/.style={fill=white},
xmajorgrids,
ymajorgrids,
legend style={at={(0.25,1.15)}, anchor=south west, legend cell align=left, align=left, draw=white!15!black}
]
\addplot [color=magenta, dashed, line width=2.0pt, mark size=3.0pt, mark=x, mark options={solid, magenta}]
  table[row sep=crcr]{%
2	0.0973250003216301\\
4	0.0166129207910852\\
6	0.00339195010394367\\
8	0.000806900615702213\\
10	0.000442085995030091\\
12	0.000320273447389573\\
14	0.000207188663860293\\
16	0.000169415065049292\\
18	0.000127493779381865\\
20	6.73850108377361e-05\\
};
\addlegendentry{standard Lift \& Learn}


\end{axis}

\begin{axis}[%
width=1.273\fheight,
height=0.955\fheight,
at={(-0.169\fheight,-0.133\fheight)},
scale only axis,
xmin=0,
xmax=1,
ymin=0,
ymax=1,
axis line style={draw=none},
ticks=none,
axis x line*=bottom,
axis y line*=left
]
\end{axis}
\end{tikzpicture}%
\caption{Relative state error over the training data }
\label{fig:comp_state}
    \end{subfigure}
    \hspace{0.2cm}
    \begin{subfigure}{.45\textwidth}
           \setlength\fheight{6 cm}
           \setlength\fwidth{\textwidth}
\raisebox{-5mm}{\input{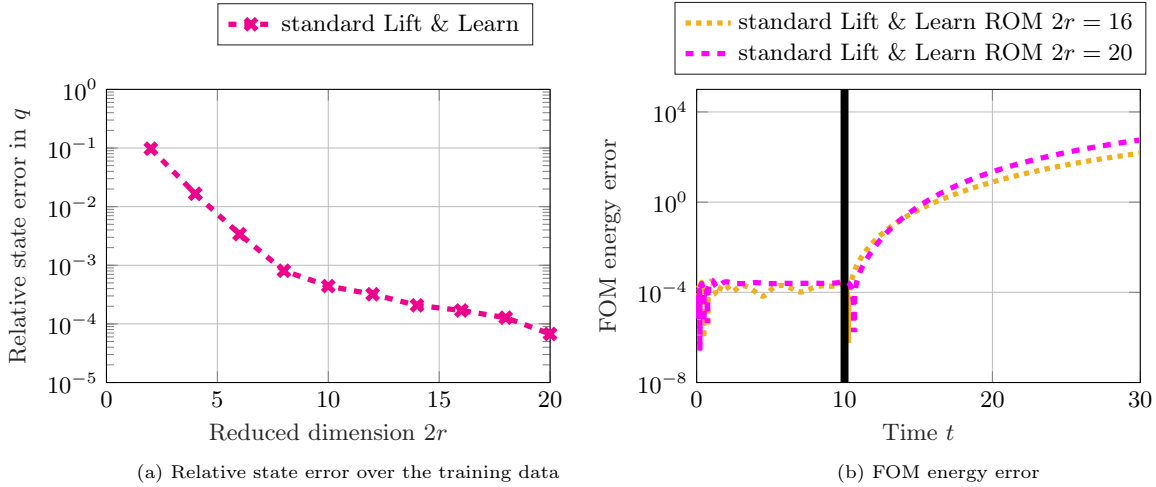}}
\caption{FOM energy error}
\label{fig:comp_energy}
    \end{subfigure}
\caption{One-dimensional sine-Gordon PDE. Plot (a) shows that the standard Lift \& Learn approach yields quadratic ROMs that achieve relative state error below $10^{-2}$ for ROMs of size $2r>6$ in the training data regime. 
However, the energy error comparison in plot (b) demonstrates that the standard learning approach yields unstable ROMs that exhibit unbounded energy error growth outside the training data regime. The solid black line in plot (b) indicates end of the training time interval.
}
 \label{fig:comp}
\end{figure} 
\section{Structure-preserving Lift \& Learn}
\label{sec:method} 
We introduce \textit{structure-preserving Lift \& Learn} (sp Lift \& Learn) as a nonintrusive model reduction method to learn structure-preserving ROMs for nonlinear conservative PDEs. We leverage energy-quadratizing lifting transformations that ensure the ROM is both energy-conserving \textit{and} its lifted dynamics have only linear and quadratic terms. This has the additional benefit that we explicitly know the quadratic terms in the ROM (they arise from the lifting) and we only need to learn the linear terms that contain the reduced spatial-derivative operators.

In Section~\ref{sec:conservative_model} we derive the nonlinear conservative FOM for nonlinear conservative PDEs of the form~\eqref{eq:pde}. In Section~\ref{sec:splifting} we describe energy-quadratizing lifting transformations at the PDE level. In Section~\ref{sec:splifting_data} we use these lifting transformations to obtain the lifted reduced data from a nonlinear conservative FOM simulation. In Section~\ref{sec:splifting_opinf} we introduce the proposed hybrid approach to learn structure-preserving quadratic ROMs from the lifted reduced data. In Section~\ref{sec:comp} we compare the proposed method with the Hamiltonian Operator Inference~\cite{sharma2022hamiltonian} method for nonlinear wave equations.  In Section~\ref{sec:comp_numerical} we revisit the motivational example from Section~\ref{sec:motivation} to demonstrate the advantages of using the proposed structure-preserving Lift \& Learn approach over standard Lift \& Learn.
\subsection{Model formulation}
\label{sec:conservative_model}
We rewrite~\eqref{eq:pde} in first-order form by defining $q(\x,t):=\phi(\x,t)$ and $p(\x,t):=\frac{\partial}{\partial t}(\phi(\x,t))$ to obtain
\begin{equation}
\frac{\partial q (\x,t)}{\partial t}=p(\x,t), \qquad \frac{\partial p (\x,t)}{\partial t}=\mathbf \Delta q (\x,t) - f_{\text{non}}(q (\x,t)), 
\label{eq:pde_first}
\end{equation}
which conserves the total energy
\begin{equation}
\mathcal E[q(\x,t),p(\x,t)]:=\int_{\Omega} \left( \frac{1}{2} p(\x,t)^2 + \frac{1}{2}\left(\nabla q(\x,t) \right)^2  + g(q(\x,t)) \right) \dd \x.
\label{eq:energy}
\end{equation}
We assume that the nonlinear potential energy component $g(q(\x,t))$ in~\eqref{eq:energy} is nonnegative which is typically the case for conservative PDEs. A structure-preserving spatial discretization of the first-order PDEs in~\eqref{eq:pde_first} would have the form
\begin{equation}
    \dot{\q}=\p, \qquad
    \dot{\p}=\D \q - \fn(\q),
\label{eq:cons_fom}
\end{equation}
where $\qdot \in \real^{n}$ and $\pdot\in \real^{n}$ denote the time derivatives of the space-discretized state vectors $\q\in \real^{n}$ and $\p\in \real^{n}$, respectively, $\D=\D^\top \in \real^{n \times n}$ is the symmetric discrete Laplacian, and $ \fn(\q) \in \real^{n}$ is the nonlinear component of the space-discretized vector field. The FOM~\eqref{eq:cons_fom} conserves the space-discretized nonlinear FOM energy $ E(\q,\p)=\frac{1}{2} \p^\top \p - \frac{1}{2}\q^\top\D \q + \sum_{i=1}^n  g(q_i) .$ We assume that we know the symbolic form of the PDE~\eqref{eq:pde_first}, but we do not know how it is discretized in space and time, and do not have access to the discretized operators in~\eqref{eq:cons_fom}.
\subsection{Exposing structure-preserving variables that quadratize energy}
\label{sec:splifting}
Given a nonlinear PDE of the form~\eqref{eq:pde_first} with nonlinear potential energy component $g(q)$ in~\eqref{eq:energy} satisfying the nonnegativity condition, we use the energy-quadratization strategy we presented in~\cite{sharma2025nonlinear} to introduce the first auxiliary variable $w_1=\tau_1(q)$ defined by $w_1^2=\kappa^2g(q)$, where $\kappa \in \real$ is a free scalar parameter.We then define the second auxiliary variable as $w_2=\tau_2(q)=\frac{f_{\text{non}}(q)}{\bar{\kappa}w_1}$ with another free scalar parameter $\bar{\kappa}\in \real$. The auxiliary state dynamics for $w_1$ can be written as $ \frac{\partial w_1 (\x,t)}{\partial t}=\frac{\kappa^2\bar{\kappa}}{2}w_2(\x,t) p(\x,t)$. Thus, these first two auxiliary variables ensure that the time evolution equations for $\{q,p,w_1\}$ are quadratic in terms of the lifted variables $\{q, p, w_1, w_2\}$, independent of the form of the nonlinearity in $g(q)$. We then compute auxiliary state dynamics for $w_2$ and if the dynamics are not quadratic in terms of the lifted variables $\{q, p, w_1, w_2\}$ then we introduce another auxiliary variable $w_3$ that yields quadratic dynamics for $w_2$ in terms of the lifted variables $\{q, p, w_1, w_2,w_3\}$. We continue this process of introducing auxiliary variables until we find a lifted PDE with quadratic dynamics in the lifted variables. 

Assuming the energy-quadratization strategy yields a quadratization with $k$ auxiliary variables, the resulting quadratic lifted PDE can be symbolically written as
\begin{align}
\label{eq:gen_lift_pde}
    \frac{\partial q (\x,t)}{\partial t}&=p(\x,t), \nonumber\\
    \frac{\partial p (\x,t)}{\partial t}&=\mathbf \Delta q (\x,t) -\bar{\kappa} w_1(\x,t) w_2(\x,t),\nonumber \\
    \frac{\partial w_1 (\x,t)}{\partial t}&=\frac{\kappa^2\bar{\kappa}}{2}w_2(\x,t) p(\x,t), \\
     \frac{\partial w_j (\x,t)}{\partial t}&=\left(\alpha_jq(\x,t) +\sum_{i=1}^{k}\alpha_{j,i}w_i(\x,t) \right) p(\x,t),  \qquad \text{for} \qquad j=2, \ldots,k, \nonumber 
    \end{align}
where $\alpha_{2},\ldots,\alpha_k$ and $\alpha_{i,1}, \ldots, \alpha_{i,k}$ for $i=2,\ldots,k$ are real-valued constant coefficients such that the constants in the set $\pmb \alpha_{i}:=\{\alpha_{i},\alpha_{i,1}, \ldots, \alpha_{i,k}\}$ can not be all zero for $i=2,\ldots,k$. This quadratic lifted PDE possesses a quadratic invariant in terms of the lifted variables
\begin{equation}
\mathcal E_{\text{lift}}[q(\x,t),p(\x,t),w_1(\x,t),\cdots,w_k(\x,t)]:=\int_{\Omega} \left( \frac{1}{2} p(\x,t)^2 + \frac{1}{2}\left(\nabla q(\x,t) \right)^2  +\frac{1}{\kappa^2}w_1(\x,t)^2 \right) \dd \x.
\label{eq:energy_quad}
\end{equation}
\subsection{Constructing reduced snapshot data in the lifted setting}
\label{sec:splifting_data}
If we were to discretize~\eqref{eq:gen_lift_pde}, the model would have the form
\begin{align}
\label{eq:gen_lift_fom}
    \dot{\q}&=\p, \nonumber\\
    \dot{\p}&=\D\q -\bar{\kappa} \w_1\odot \w_2,\nonumber \\
    \dot{\w}_1&=\frac{\kappa^2\bar{\kappa}}{2}\w_2 \odot \p, \\
    \dot{\w}_j&=\left(\alpha_j\q +\sum_{i=1}^{k}\alpha_{j,i}\w_i \right) \odot \p,  \qquad \text{for} \qquad j=2, \ldots,k. \nonumber 
    \end{align}
This lifted FOM can be rewritten in a standard quadratic form by expressing each Hadamard product as a Kronecker-product term:
\begin{align}
\label{eq:gen_lift_fom_kron}
    \dot{\q}&=\p, \nonumber\\
    \dot{\p}&=\D\q + \bH_{\p} (\w_1\otimes \w_2),\nonumber \\
    \dot{\w}_1&=\bH_{\w_1}(\w_2 \otimes \p), \\
    \dot{\w}_j&=\bH_{\w_j}(\q\otimes \p) +\sum_{i=1}^{k}\bH_{\w_{j,i}} (\w_i  \otimes \p),  \qquad \text{for} \qquad j=2, \ldots,k, \nonumber 
    \end{align}
where the matrices $\bH_{\p} \in \real^{n \times n^2}$, $\bH_{\w_1}\in \real^{n \times n^2}$, and  $\bH_{\w_{j}}\in \real^{n \times n^2}, \bH_{\w_{j,1}}\in \real^{n \times n^2},\cdots,\bH_{\w_{j,k}}\in \real^{n \times n^2}$ for $j=2, \ldots,k$ are sparse matrices of the coefficients of the quadratic terms in~\eqref{eq:gen_lift_fom}. 
In this lifted setting, the quadratic FOM~\eqref{eq:gen_lift_fom} conserves the lifted FOM energy
\begin{equation}
\label{eq: lift_energy}
E_{\text{lift}}(\q,\p,\w_1, \cdots,\w_{k})=\frac{1}{2}\p^\top\p - \frac{1}{2}\q^\top \D\q +\frac{1}{\kappa^2} \w_1^\top\w_1. 
\end{equation}
We emphasize that since we derived the algebraic relations of the lifted variables to the original variables, i.e., $w_i=\tau_i(q)$ for $i=1, \cdots, k$, we can directly insert those relationships into the discretized system, c.f., the quadratic terms in~\eqref{eq:gen_lift_fom}. In other words, we do not need to know the specific discretization scheme used for deriving~\eqref{eq:cons_fom} to determine those quadratic terms.

To construct the basis matrix for the augmented state vector in the lifted quadratic FOM~\eqref{eq:gen_lift_fom} without ever solving that system, we first build position and momentum snapshot data matrices by simulating the nonlinear conservative FOM~\eqref{eq:cons_fom}. Let $(\q_1,\p_1),\cdots,(\q_K,\p_K)$ be the solutions of~\eqref{eq:cons_fom} at time $t_1,\cdots,t_K$ computed with a structure-preserving time integrator~\cite{sharma2020review}. Then, the position and momentum snapshot data matrices are defined as
\begin{equation}
\Q=[\q_1, \cdots, \q_K] \in \real^{n \times K}, \qquad \Pp=[\p_1,\cdots, \p_K] \in \real^{n \times K}.
\end{equation}
We compute the lifted states as $\w_{i,j}=\tau_{i}(\q_j)$ and construct the lifted snapshot data matrix for each lifted variable 
\begin{equation}
\W_i=[\w_{i,1},\cdots,\w_{i,K}] \in \real^{n\times K}, \qquad i=1, \ldots,k.
\end{equation}
To obtain low-dimensional data via projection, we use a block-diagonal basis matrix $\bar{\V} \in \real^{\nbar\times \bar{r}}$ that preserves the coupling structure
\begin{equation}
\bar{\V} =\text{blkdiag}(\bPhi,\bPhi,\V_{1}, \cdots,\V_{k})\in \real^{\nbar\times \bar{r}},
\label{eq:basis}
\end{equation}
where $\bPhi$ is the PSD basis matrix for $\q$ and $\p$ computed using the cotangent lift algorithm, and $ \V_{i}$ is the POD basis matrix that contains as columns the POD basis vectors for $\w_i$ for $i=1, \ldots, k$.\footnote{The PSD basis matrix $\bPhi$ in the cotangent lift algorithm is computed via the singular value decomposition of the extended snapshot data matrix $\Y_e:=[\Q,\Pp] \in \real^{n \times 2K}$ whereas the POD basis matrix $ \V_{i}$ for $\w_i$ is computed via the singular value decomposition of the corresponding snapshot data matrix $\W_i\in \real^{n \times K}$.} We obtain projections of the snapshot data matrices as
\begin{equation}
\Qhat=\bPhi^\top\Q \in \real^{r \times K}, \qquad \Phat=\bPhi^\top\Pp \in \real^{r \times K}, \qquad \What_i=\V_{i}^\top\W_i \in \real^{r \times K}, \qquad i=1,\ldots,k.
\end{equation}
Using the reduced momentum trajectories, we obtain reduced time-derivative data $\Phatdot \in \real^{r \times K}$ via an eighth-order central finite difference scheme.  Although lower-order schemes (such as second- or third-order finite difference schemes) may suffice, we use an eighth-order scheme to make negligible derivative approximation errors that could affect the learned ROM operators. 
\subsection{Learning structure-preserving quadratic ROMs via constrained Operator Inference}
\label{sec:splifting_opinf}
Based on the block-diagonal basis matrix $\bar{\V}$ in~\eqref{eq:basis}, the postulated model form--which would arise if we were to project the lifted FOM~\eqref{eq:gen_lift_fom} with $\bar{\V}$--for the structure-preserving ROM is
\begin{align} 
\label{eq:learned_lift_rom_odot}
    \dot{\qhat}&=\phat,\nonumber \\
    \dot{\phat}&=\Dhat\qhat +  \bPhi^\top\bH_{\p}\left(\V_{1}\what_1 \otimes \V_{2}\what_2\right) ,\nonumber \\
    \dot{\what}_1&=\V_{1}^\top \bH_{\w_1} \left( \V_{2}\what_2 \otimes\bPhi\phat\right),\\
        \dot{\what}_j&=\V_{j}^\top\bH_{\w_j} (\bPhi\qhat \otimes \bPhi \phat) + \sum_{i=1}^{k}\V_{j}^\top \bH_{\w_{j,i}} \left(  \V_{i}\what_i \otimes \bPhi \phat \right),  \qquad \text{for} \qquad j=2, \ldots,k, \nonumber 
    \end{align}
where $\Dhat=\Dhat^\top \in \real^{r \times r}$ is the nonintrusive analogue of the symmetric ROM operator that would arise if we were to project intrusively.
Since none of the quadratic terms in the lifted FOM~\eqref{eq:gen_lift_fom} require approximation of spatial derivatives, the corresponding
reduced quadratic terms in~\eqref{eq:learned_lift_rom_odot} after projection can be computed analytically.  Hence, $\Dhat=\Dhat^\top\in \real^{r\times r}$ is the only matrix that we need to learn from data. We simplify the notation by re-writing the quadratic ROM form in~\eqref{eq:learned_lift_rom_odot} as
\begin{align} 
\label{eq:learned_lift_rom}
    \dot{\qhat}&=\phat,\nonumber \\
    \dot{\phat}&=\Dhat\qhat + \Hhat_{\p}(\what_1\otimes\what_2) ,\nonumber \\
    \dot{\what}_1&=\Hhat_{\w_1}(\what_2\otimes\phat),\\
        \dot{\what}_j&=\Hhat_{\w_j}(\qhat\otimes\phat) + \sum_{i=1}^k\Hhat_{\w_{j,i}}(\what_i\otimes \phat),  \qquad \text{for} \qquad j=2, \ldots,k. \nonumber 
    \end{align}
    where  $\Hhat_{\p}:=\bPhi^\top\bH_{\p}\left(\V_{1} \otimes \V_{2}\right)\in \real^{r \times r^2}$, $\Hhat_{\w_1}:=\V_{1}^\top \bH_{\w_1} \left( \V_{2} \otimes\bPhi \right)\in \real^{r \times r^2}$, $\Hhat_{\w_j}:=\V_{j}^\top\bH_{\w_j} (\bPhi \otimes \bPhi) \in \real^{r \times r^2}$, and $\Hhat_{\w_{j,i}}:=\V_{j}^\top\bH_{\w_{j,i}}\left(  \V_{i} \otimes \bPhi \right)\in \real^{r \times r^2}$ for $i=1, \ldots,k$ and $j=2, \ldots,k$. 
As mentioned above, the matrices $\Hhat_{\p}$, $\Hhat_{\w_1}$, and  $\Hhat_{\w_{j}}, \Hhat_{\w_{j,1}},\cdots,\Hhat_{\w_{j,k}}$ for $j=2, \ldots,k$ are merely representations of the quadratic products that we already derived analytically in~\eqref{eq:learned_lift_rom_odot}. Thus, there is no need to learn them.

Next, we use the analytically constructed quadratic terms in~\eqref{eq:learned_lift_rom} to infer $\Dhat$ from projections of the FOM snapshot data. 
Based on the postulated model form for the quadratic ROM in~\eqref{eq:learned_lift_rom}, we formulate the following constrained optimization problem for learning $\Dhat$:
\begin{equation}
\min_{\Dhat=\Dhat^\top} \lVert \Phatdot - \Hhat_{\p}(\What_1\otimes\What_2) - \Dhat\Qhat \rVert_F.
\end{equation}
Although this paper focuses on nonlinear conservative PDEs of the form~\eqref{eq:pde}, we note that the proposed approach can be flexibly adapted to learn structure-preserving ROMs for a more general class of coupled conservative PDEs. In Section~\ref{sec:kgz} we provide evidence for the wide scope of the proposed approach through the numerical example of two-dimensional Klein-Gordon-Zakharov equations, a system of coupled conservative PDEs that does not have a canonical Hamiltonian formulation.
\begin{proposition} The structure-preserving Lift \& Learn ROM~\eqref{eq:learned_lift_rom} conserves the perturbed lifted FOM energy 
\begin{equation}
\label{eq:pertrubed_energy}
\widehat{E}_{\mathrm{lift}}(\qhat,\phat,\what_1, \cdots,  \what_{k}):=E_{\mathrm{lift}}(\bPhi\qhat,\bPhi\phat,\V_{1}\what_1, \cdots,\V_{k}\what_{k}) + \Delta E_{\mathrm{lift}}(\qhat)
\end{equation}
with a perturbation of the form $\Delta E_{\mathrm{lift}}(\qhat)=\frac{1}{2}\qhat^\top (\bPhi^\top\D\bPhi-\Dhat)\qhat$ where $\D=\D^\top \in \real^{n \times n}$ is from~\eqref{eq:gen_lift_fom}.
\end{proposition}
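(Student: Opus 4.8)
The plan is to reduce the claimed invariant to a transparent quadratic form in the reduced coordinates and then differentiate it along the ROM flow~\eqref{eq:learned_lift_rom}, using only the symmetry of $\Dhat$ and the analytically known structure of the lifted quadratic terms. \textbf{Step 1 (simplify the energy).} Since the bases $\bPhi$ and $\V_{1}$ have orthonormal columns by construction, $\bPhi^\top\bPhi=\Ir$ and $\V_{1}^\top\V_{1}=\Ir$, so evaluating $E_{\mathrm{lift}}$ from~\eqref{eq: lift_energy} at $(\bPhi\qhat,\bPhi\phat,\V_{1}\what_1,\ldots,\V_{k}\what_k)$ yields $\tfrac12\phat^\top\phat-\tfrac12\qhat^\top(\bPhi^\top\D\bPhi)\qhat+\what_1^\top\what_1$, where $\D$ is the (unknown) FOM Laplacian from~\eqref{eq:gen_lift_fom}. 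Adding the perturbation $\Delta E_{\mathrm{lift}}(\qhat)=\tfrac12\qhat^\top(\bPhi^\top\D\bPhi-\Dhat)\qhat$ cancels the $\bPhi^\top\D\bPhi$ contribution, so $\widehat{E}_{\mathrm{lift}}(\qhat,\phat,\what_1,\ldots,\what_k)=\tfrac12\phat^\top\phat-\tfrac12\qhat^\top\Dhat\qhat+\what_1^\top\what_1$, i.e., exactly the reduced mirror image of~\eqref{eq: lift_energy} with $\D$ replaced by the learned $\Dhat$ — which is precisely why the perturbation is defined as it is.

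\textbf{Step 2 (differentiate along the ROM).} Along trajectories of~\eqref{eq:learned_lift_rom}, $\frac{d}{dt}\widehat{E}_{\mathrm{lift}}=\phat^\top\dot{\phat}-\qhat^\top\Dhat\dot{\qhat}+2\what_1^\top\dot{\what}_1$. Substituting $\dot{\qhat}=\phat$, $\dot{\phat}=\Dhat\qhat+\Hhat_{\p}(\what_1\otimes\what_2)$, and $\dot{\what}_1=\Hhat_{\w_1}(\what_2\otimes\phat)$, the terms $\phat^\top\Dhat\qhat$ and $\qhat^\top\Dhat\phat$ cancel because $\Dhat=\Dhat^\top$, leaving $\phat^\top\Hhat_{\p}(\what_1\otimes\what_2)+2\what_1^\top\Hhat_{\w_1}(\what_2\otimes\phat)$. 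Observe that $\what_2,\ldots,\what_k$ never enter $\widehat{E}_{\mathrm{lift}}$, so the (possibly nonlinear-in-reduced-coordinates) dynamics of $\what_j$ for $j\ge2$ are irrelevant to the argument, mirroring the FOM-level computation behind~\eqref{eq: lift_energy}.

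\textbf{Step 3 (the crux: cancel the quadratic coupling).} Here I would insert the analytic forms of the quadratic terms, obtained by comparing~\eqref{eq:learned_lift_rom} with the projected expressions in~\eqref{eq:learned_lift_rom_odot}: $\Hhat_{\p}(\what_1\otimes\what_2)=-\bPhi^\top(\V_{1}\what_1\odot\V_{2}\what_2)$ and $\Hhat_{\w_1}(\what_2\otimes\phat)=\tfrac12\V_{1}^\top(\V_{2}\what_2\odot\bPhi\phat)$. Using $\phat^\top\bPhi^\top=(\bPhi\phat)^\top$ and $\what_1^\top\V_{1}^\top=(\V_{1}\what_1)^\top$, the remaining expression becomes $-(\bPhi\phat)^\top(\V_{1}\what_1\odot\V_{2}\what_2)+(\V_{1}\what_1)^\top(\V_{2}\what_2\odot\bPhi\phat)$, which vanishes by the elementwise-product identity $a^\top(b\odot c)=b^\top(c\odot a)$ valid for any $a,b,c\in\real^n$ (both sides equal $\sum_i a_ib_ic_i$). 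Hence $\frac{d}{dt}\widehat{E}_{\mathrm{lift}}=0$ along the ROM flow. The only delicate point — and the reason the statement is true at all — is the coefficient bookkeeping in this last step: the $\tfrac12$ in $\dot{\what}_1$, the $2$ from differentiating $\what_1^\top\what_1$, and the unit coefficient of $\V_{1}\what_1\odot\V_{2}\what_2$ in $\dot{\phat}$ must conspire to cancel. This balance is the reduced image of the FOM-level identity that makes $w_1^2$ the correct energy-quadratizing surrogate for $g(q)$ (Section~\ref{sec:splifting}), and it survives projection precisely because both quadratic terms are built from the \emph{same} block-diagonal basis $\bar{\V}$ of~\eqref{eq:basis}; I would close by remarking that the argument uses only $\Dhat=\Dhat^\top$, not $\Dhat=\bPhi^\top\D\bPhi$, so the conservation property holds for the learned operator irrespective of how accurately the least-squares fit reproduces the intrusive reduced Laplacian.
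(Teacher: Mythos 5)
Your proof is correct and follows essentially the same route as the paper's: simplify $\widehat{E}_{\mathrm{lift}}$ to $\tfrac12\phat^\top\phat-\tfrac12\qhat^\top\Dhat\qhat+\what_1^\top\what_1$ via orthonormality of the basis blocks, differentiate along the ROM flow, cancel the linear terms by symmetry of $\Dhat$, and cancel the quadratic coupling using the identity $a^\top(b\odot c)=b^\top(c\odot a)$. Your added remarks (the irrelevance of $\what_2,\ldots,\what_k$ to the invariant and the fact that only $\Dhat=\Dhat^\top$, not $\Dhat=\bPhi^\top\D\bPhi$, is needed) are accurate but do not change the argument.
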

\begin{proof}
Given a lifted quadratic FOM of the form~\eqref{eq:gen_lift_fom}, we evaluate the lifted FOM energy function $E_{\text{lift}}$~\eqref{eq: lift_energy} at the lifted state approximation to obtain
\begin{align}
\label{eq:lift_energy}
E_{\text{lift}}(\bPhi\qhat,\bPhi\phat,\V_{1}\what_1, \cdots,\V_{k}\what_{k})&=\frac{1}{2}(\bPhi\phat)^\top(\bPhi\phat) - \frac{1}{2}(\bPhi\qhat)^\top \D(\bPhi\qhat) + \frac{1}{\kappa^2}(\V_{1}\what_1)^\top(\V_{1}\what_1) \nonumber\\
&=\frac{1}{2}\phat^\top\phat - \frac{1}{2}\qhat^\top (\bPhi^\top\D\bPhi )\qhat + \frac{1}{\kappa^2}\what_1^\top\what_1.
\end{align}
Defining $\Delta E_{\mathrm{lift}}(\qhat):=\frac{1}{2}\qhat^\top (\bPhi^\top\D\bPhi-\Dhat)\qhat$ we see that the energy associated with the lifted ROM~\eqref{eq:learned_lift_rom} is--analogous to~\eqref{eq: lift_energy}--given by
\begin{equation*}
\widehat{E}_{\text{lift}}(\qhat,\phat,\what_1, \cdots,  \what_{k})=\frac{1}{2}\phat^\top\phat - \frac{1}{2}\qhat^\top (\bPhi^\top\D\bPhi )\qhat + \frac{1}{\kappa^2}\what_1^\top\what_1 + \frac{1}{2}\qhat^\top (\bPhi^\top\D\bPhi-\Dhat)\qhat=\frac{1}{2}\phat^\top\phat - \frac{1}{2}\qhat^\top \Dhat\qhat +\frac{1}{\kappa^2} \what_1^\top\what_1. 
\end{equation*}
We now compute the time-derivative of $\widehat{E}_{\text{lift}}(\qhat,\phat,\what_1, \cdots,  \what_{k})$
\begin{align*} 
    \frac{\dd}{\dd t}\widehat{E}_{\text{lift}}(\qhat,\phat,\what_1, \cdots,\what_{k})&=\phat^\top\phatdot - (\Dhat\qhat)^\top\qhatdot +\frac{2}{\kappa^2} \what_1^\top\dot{\what}_1 \\
    &=-\bar{\kappa}\phat^\top\bPhi^\top\left( \V_{1}\what_1 \odot  \V_{2}\what_2\right) +  \bar{\kappa} \what_1^\top\V_{1}^\top \left( \V_{2}\what_2 \odot \bPhi\phat\right)\\
    &= 0. 
        \end{align*}
Thus, the quadratic ROM learned via structure-preserving Lift \& Learn conserves $\widehat{E}_{\text{lift}}(\qhat,\phat,\what_1, \cdots,\what_{k})$.
\end{proof}
We showed in~\cite{sharma2025nonlinear} that the structure-preserving quadratic ROM obtained via intrusive lifting conserves the lifted FOM energy exactly, which means that it possesses a quadratic invariant of the form $\widetilde{E}_{\text{lift}}(\qhat,\phat,\what_1, \cdots,\what_{k}):=E_{\text{lift}}(\bPhi\qhat,\bPhi\phat,\V_{1}\what_1, \cdots,\V_{k}\what_{k})$. Building on this result, the quadratic invariant $\widehat{E}_{\text{lift}}(\qhat,\phat,\what_1, \cdots,\what_{k})$ in~\eqref{eq:pertrubed_energy} can be interpreted as a perturbation of $\widetilde{E}_{\text{lift}}(\qhat,\phat,\what_1, \cdots,\what_{k})$. Moreover, this perturbation term is bounded as follows
\begin{equation}
|\Delta E_{\text{lift}}(\qhat)|=\frac{1}{2}\qhat^\top (\widetilde{\D}-\Dhat)\qhat \leq \frac{1}{2} \lVert  \widetilde{\D}-\Dhat \rVert \lVert \qhat \rVert^2,
\end{equation}
where $\widetilde{\D}:=\bPhi^\top\D\bPhi \in \real^{r \times r}$ is the linear ROM operator obtained via intrusive projection. Thus, the structure-preserving Lift \& Learn ROM trajectories conserve a perturbed lifted FOM energy where the magnitude of the perturbation depends on the difference between the learned linear ROM operator $\Dhat$ and the intrusively projected linear ROM operator $\widetilde{\D}$.
\begin{remark}
The nonlinear FOM for conservative PDEs of the form~\eqref{eq:pde} can also be written in the Lagrangian form where the governing equations are a set of $n$ coupled second-order ordinary differential equations. The Lagrangian Operator Inference method~\cite{sharma2024preserving} for nonlinear wave equations also uses knowledge of the nonlinear potential energy at the PDE level to learn low-dimensional nonlinear Lagrangian ROMs from high-dimensional FOM data. However, these learned Lagrangian ROMs suffer from computational efficiency issues as the evaluation of the nonlinear ROM vector field scales
with the FOM dimension. To tackle this challenge and have efficient ROMs, the Lagrangian
Operator Inference approach would need an additional structure-preserving hyper-reduction step,  which to the best of the authors' knowledge remains an open problem. In contrast, the
proposed structure-preserving Lift \& Learn approach yields computationally efficient quadratic ROMs that do not require a second level of approximation.
\end{remark}
\begin{remark}
While our knowledge of the symbolic form of the quadratic lifted PDE~\eqref{eq:gen_lift_pde} enables us to derive the quadratic ROM form~\eqref{eq:learned_lift_rom_odot}, we do not formally define, or numerically solve, the lifted PDE~\eqref{eq:gen_lift_pde}. 
\end{remark}
\subsection{Comparison of structure-preserving Lift \& Learn and Hamiltonian Operator Inference}
\label{sec:comp}
In this section and in the numerical results, we compare the proposed structure-preserving Lift \& Learn approach with Hamiltonian Operator Inference (HOpInf)~\cite{sharma2022hamiltonian}, a structure-preserving learning approach for canonical Hamiltonian PDEs. The HOpInf method also uses knowledge of the functional form of the nonlinear potential energy at the PDE level to postulate a Hamiltonian model of the form
\begin{equation}
\dot{\qhat}=\Dhat_{\q}\phat,\qquad \dot{\phat}=\Dhat_{\p}\qhat+\bPhi^\top\f_{\text{non}}(\bPhi\qhat),
\label{eq:lopinf_rom}
\end{equation}
where the symmetric reduced matrices $\Dhat_{\q}=\Dhat_{\q}^\top \in \real^{r \times r}$ and $\Dhat_{\p}=\Dhat_{\p}^\top \in \real^{r \times r}$ are inferred from data. The symmetric constraints on  $\Dhat_{\q}$ and $\Dhat_{\p}$ ensure that the linear components of the learned Hamiltonian ROM retains the symmetric property of the linear FOM operator introduced during the structure-preserving discretization.

Given $K$ reduced state snapshots and the reduced time derivative data, HOpInf formulates the following separate symmetric linear least-squares problems for learning $\Dhat_{\q}$ and $\Dhat_{\p}$:
\begin{equation}
\min_{\Dhat_{\q}=\Dhat_{\q}^\top} \lVert \Qhatdot - \Dhat_{\q}\Phat \rVert_F, \qquad
\min_{\Dhat_{\p}=\Dhat_{\p}^\top} \lVert \Phatdot - \Fhat_{\text{non}}(\Qhat) - \Dhat_{\p}\Qhat \rVert_F.
\end{equation}
The HOpInf approach, similar to Lagrangian Operator Inference,  still requires evaluating the nonlinear vector field~\eqref{eq:lopinf_rom} that scales with the FOM dimension. To reduce the online computational cost of simulating~\eqref{eq:lopinf_rom}, the HOpInf approach needs an additional structure-preserving hyper-reduction step. The authors in~\cite{pagliantini2023gradient} presented a gradient-preserving DEIM strategy that yields nonlinear Hamiltonian ROMs that conserve the nonlinear FOM energy in an asymptotic sense at a computational cost independent of the FOM dimension. We therefore consider the HOpInf approach combined with structure-preserving DEIM (spDEIM) as the state of the art and use it to compare our structure-preserving Lift \& Learn method with. 
%
%
\begin{figure}[tbp]
\small
\captionsetup[subfigure]{oneside,margin={1.8cm,0 cm}}
\begin{subfigure}{.45\textwidth}
       \setlength\fheight{6 cm}
        \setlength\fwidth{\textwidth}
%
%
\begin{tikzpicture}

\begin{axis}[%
width=0.983\fheight,
height=0.65\fheight,
at={(0\fheight,0\fheight)},
scale only axis,
xmin=0,
xmax=20,
xlabel style={font=\color{white!15!black}},
xlabel={Reduced dimension $2r$},
ymode=log,
ymin=0.001,
ymax=5,
yminorticks=true,
ylabel style={font=\color{white!15!black}},
ylabel={Relative state error},
axis background/.style={fill=white},
xmajorgrids,
ymajorgrids,
legend style={at={(0.25,1.15)}, anchor=south west, legend cell align=left, align=left, draw=white!15!black}
]
\addplot [
  color=orange!90!black,
  dashed,
  line width=2.0pt,
  mark=triangle*,
  mark size=4.0pt,
  mark options={solid, orange!90!black}
]
  table[row sep=crcr]{%
2	0.561723897305558\\
4	1.04790892463046\\
6	0.0189122741250397\\
8	0.00308752623679536\\
10	0.0012578942694591\\
12	0.00115035392853978\\
14	0.00113726753249264\\
16	0.00112221055513186\\
18	0.00108339797455637\\
20	0.00107750107054505\\
};
\addlegendentry{sp Lift \& Learn}

\addplot [
  color=purple!80!black,
  densely dotted,
  line width=2.0pt,
  mark=+,
  mark size=4.0pt,
  mark options={solid, purple!80!black}
]
  table[row sep=crcr]{%
2	0.407629517746982\\
4	0.133327078236933\\
6	0.00821573923591957\\
8	0.00234461206692522\\
10	0.00141931810918762\\
12	0.00241315308922548\\
14	0.00261707172830494\\
16	0.00256717474557063\\
18	0.00272067026875688\\
20	0.00277497503509196\\
};
\addlegendentry{HOpInf with spDEIM}

\end{axis}

\begin{axis}[%
width=1.269\fheight,
height=0.952\fheight,
at={(-0.166\fheight,-0.131\fheight)},
scale only axis,
xmin=0,
xmax=1,
ymin=0,
ymax=1,
axis line style={draw=none},
ticks=none,
axis x line*=bottom,
axis y line*=left
]
\end{axis}
\end{tikzpicture}%
\caption{Relative state error over the training data }
\label{fig:hopinf_state_train}
    \end{subfigure}
    \hspace{0.2cm}
    \begin{subfigure}{.45\textwidth}
           \setlength\fheight{6 cm}
           \setlength\fwidth{\textwidth}
\raisebox{-15mm}{
%
%
\begin{tikzpicture}

\begin{axis}[%
width=0.983\fheight,
height=0.65\fheight,
at={(0\fheight,0\fheight)},
scale only axis,
xmin=0,
xmax=30.005,
xlabel style={font=\color{white!15!black}},
xlabel={Time $t$},
ymode=log,
ymin=1e-05,
ymax=0.1,
yminorticks=true,
ylabel style={font=\color{white!15!black}},
ylabel={FOM energy error},
axis background/.style={fill=white},
xmajorgrids,
ymajorgrids,
legend style={at={(-0.05,1.15)}, anchor=south west, legend cell align=left, align=left, draw=white!15!black}
]
\addplot [color=orange!90!black, dashed, line width=2.0pt]
  table[row sep=crcr]{%
0.105	2.40264659510103e-05\\
0.204999999999998	9.1239577366764e-05\\
0.305	0.000190759865560608\\
0.405000000000001	0.000312327282577485\\
0.504999999999999	0.000448966231809321\\
0.605	0.000595254409674337\\
0.705000000000002	0.000745402609774717\\
0.805	0.000893499478078754\\
0.905000000000001	0.0010349145364927\\
1.005	0.00116714290230106\\
1.105	0.00128956180933244\\
1.205	0.00140258205736643\\
1.305	0.00150683039079808\\
1.405	0.00160268466175353\\
1.505	0.00169017177543367\\
1.605	0.00176909711749146\\
1.705	0.00183926571617689\\
1.805	0.00190069786982292\\
1.905	0.00195378438112641\\
2.005	0.00199935444318556\\
2.105	0.00203864704976411\\
2.305	0.00210462421973715\\
2.905	0.00228586835635554\\
3.205	0.00237143411688691\\
3.505	0.00244083227870818\\
4.005	0.00253787943591419\\
5.305	0.00280006243004608\\
5.805	0.00294509134747897\\
6.405	0.00313718973017955\\
6.805	0.00329027916505993\\
7.405	0.00355361738797627\\
8.005	0.0038551667112671\\
9.205	0.00458407815344516\\
9.905	0.00506966304278931\\
10.305	0.00536905037904772\\
10.605	0.00562942591817259\\
11.105	0.00613161821556219\\
11.905	0.00703507557964544\\
12.205	0.00736470310288555\\
13.305	0.00862847002881324\\
13.605	0.00908518404199581\\
13.905	0.00962303498979506\\
14.805	0.0115221681536042\\
15.105	0.0121306614267041\\
15.405	0.012687160542194\\
15.705	0.0131856821906723\\
16.105	0.0137853511624001\\
16.705	0.0146454934877084\\
17.205	0.0153375071889826\\
17.605	0.0158354872476337\\
17.905	0.016139954140678\\
18.205	0.0163644714811085\\
18.505	0.0165010669635563\\
18.805	0.0165413111141788\\
19.105	0.0164770158102357\\
19.405	0.0162991937132716\\
19.605	0.016111768836592\\
19.805	0.0158686313745762\\
20.105	0.0154138397629566\\
20.405	0.014884299357297\\
20.805	0.0141246933262749\\
21.505	0.0128695341165014\\
22.505	0.0113379965507221\\
22.805	0.0108327611402245\\
23.105	0.0102875223218405\\
23.405	0.00971548903540618\\
24.205	0.00830765602125612\\
24.605	0.00772542246790808\\
24.905	0.00735009575080309\\
25.405	0.00681108835042039\\
26.505	0.00575526824023599\\
27.705	0.00479469419606743\\
28.805	0.0040871872013753\\
29.505	0.00371522000325229\\
30.005	0.00347361430159765\\
};
\addlegendentry{sp Lift \& Learn ROM $4r=40$}

\addplot [color=purple!80!black, dashdotted, line width=2.0pt]
  table[row sep=crcr]{%
0.100000000000001	0.000158586019406215\\
0.199999999999999	0.000605503901898175\\
0.300000000000001	0.00126843666290099\\
0.399999999999999	0.00206150737371403\\
0.5	0.00290850713793489\\
0.600000000000001	0.00375080149258622\\
0.699999999999999	0.00454632798678745\\
0.800000000000001	0.00526751326805196\\
0.899999999999999	0.00590005371936898\\
1	0.00644114002096786\\
1.1	0.00689631778165634\\
1.2	0.00727566942969913\\
1.3	0.00759046030959711\\
1.4	0.00785094366696981\\
1.5	0.00806540129372379\\
1.6	0.00824012249049134\\
1.7	0.0083799231389321\\
1.8	0.00848885805439535\\
1.9	0.00857087625768218\\
2.1	0.00867177473180549\\
2.4	0.00873900074277501\\
3.1	0.00885181171805979\\
3.3	0.00882984987341826\\
3.6	0.008727570702181\\
4	0.00856843084764946\\
4.3	0.00849944343723049\\
5.4	0.00834749356562983\\
5.7	0.00839088181188075\\
6.1	0.00846385739382429\\
6.4	0.00847845382853763\\
7.2	0.00847469381374923\\
7.5	0.00854132057427748\\
7.8	0.00866239563724139\\
8.2	0.00887811864486213\\
8.6	0.00915199549547552\\
8.9	0.0094121802202558\\
9.7	0.0101844269460791\\
10	0.0104995710538454\\
10.2	0.0107683439225212\\
10.4	0.0110941070665333\\
10.8	0.0118650222810324\\
11.2	0.0127406601054289\\
11.5	0.0135216722113896\\
11.7	0.0141376333757819\\
11.9	0.0148436438500603\\
12.2	0.0160716151201434\\
12.8	0.0190209401887635\\
13.3	0.0218236892231403\\
13.9	0.0255417891011901\\
14.8	0.0323698962515237\\
15.2	0.0361787226612016\\
15.7	0.0416618859461123\\
15.9	0.0439148333956538\\
16.1	0.0460893650413112\\
16.3	0.0481356173087839\\
16.5	0.0500458921261105\\
16.8	0.052729013589687\\
17.2	0.0561995490264059\\
17.6	0.0596151911397589\\
17.9	0.0619378627481917\\
18.2	0.0638776084891111\\
18.5	0.0654573874113887\\
18.9	0.0672617650805767\\
19.2	0.0684110496755716\\
19.4	0.0689456989163764\\
19.6	0.0691579235899977\\
19.8	0.0689507411055244\\
20	0.0682921289232079\\
20.2	0.0672316866970382\\
20.5	0.0651431995878228\\
21.4	0.0584622651265646\\
21.7	0.056216103765852\\
22	0.0537143317913833\\
22.3	0.0509477706288626\\
22.6	0.0480325409592161\\
23	0.0441137329508231\\
23.3	0.0412043891320549\\
23.6	0.0383252902707907\\
24.1	0.033740411661995\\
24.7	0.0288200928152982\\
25	0.026468249241109\\
25.2	0.024897965719834\\
25.5	0.0225745501567651\\
25.9	0.0197656986907084\\
26.1	0.0185794349952956\\
26.3	0.0175546821746089\\
26.5	0.0166733454551213\\
26.7	0.0159060393284313\\
27	0.0149045225533406\\
27.5	0.0134682050934316\\
28.3	0.0115323545026413\\
28.6	0.0109080780134718\\
28.9	0.0103805321468186\\
29.1	0.0100934062885578\\
29.3	0.00985854905939118\\
29.5	0.00967414496779281\\
29.7	0.00953567466398654\\
30	0.00939091436827864\\
};
\addlegendentry{HOpInf with spDEIM ROM $2r=20$}

\addplot [color=black, line width=3.0pt, forget plot]
  table[row sep=crcr]{%
10	1e-05\\
10	0.1\\
};
\end{axis}
\end{tikzpicture}
\caption{FOM energy error}
\label{fig:hopinf_energy}
    \end{subfigure}
\caption{Comparison of structure-preserving Lift \& Learn and HOpInf with spDEIM. Plots (a) and (b) show that both approaches learn accurate and stable ROMs with bounded FOM energy error. 
}
 \label{fig:lopinf}
\end{figure}
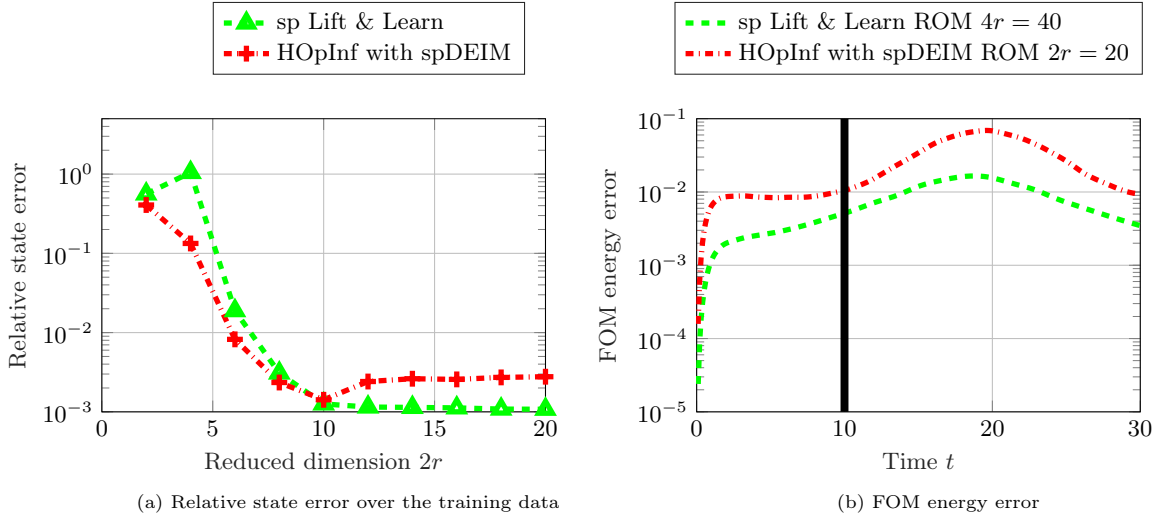
\subsection{Revisiting the one-dimensional sine-Gordon equation}
\label{sec:comp_numerical}
For a numerical illustration, we revisit the sine-Gordon equation example from Section~\ref{sec:background}. The proposed structure-preserving Lift \& Learn method learns $4r$-dimensional quadratic ROMs of the form 
\begin{align} 
\label{eq:ll_sg}
    \dot{\qhat}&=\phat, \nonumber \\
    \dot{\phat}&=\Dhat\qhat +\Hhat_{\p}\left(\what_1 \otimes  \what_2\right) , \\
    \dot{\what}_1&=\Hhat_{\w_{1}} \left( \what_2 \otimes \phat\right), \nonumber\\
    \dot{\what}_2&=\Hhat_{\w_{2,1}} \left( \what_1 \otimes \phat\right),\nonumber
\end{align} 
where the quadratic ROM operators $\Hhat_{\p}\in \real^{r \times r^2}$, $\Hhat_{\w_1}\in \real^{r \times r^2}$, and $\Hhat_{\w_{2,1}}\in \real^{r \times r^2}$ are computed analytically, and the symmetric linear ROM operator $\Dhat=\Dhat^\top \in \real^{r \times r}$ is learned from data. 

In Figure~\ref{fig:lopinf}, we compare the accuracy and energy error behavior for structure-preserving Lift  \& Learn ROMs with $2r$-dimensional nonlinear Hamiltonian ROMs obtained via HOpInf with $r_{\text{spDEIM}}=2r$. The relative state error comparison in Figure~\ref{fig:hopinf_state_train} shows that both approaches achieve similar accuracy in the training data regime up to $2r=10$. For $2r>10$, we observe that the accuracy in the training data does not decrease as favorably with an increase in the reduced dimension.  The energy error comparison in Figure~\ref{fig:hopinf_energy} shows that both approaches achieve bounded FOM energy error that stays below $10^{-1}$ for all times. Compared to Figure~\ref{fig:comp_energy}, the bounded energy error $200\%$ past the training time interval for the structure-preserving Lift \& Learn ROM shows that the proposed approach learns ROMs that respect the underlying physics of the problem.
%

\section{Numerical results}
\label{sec:numerical}
In this section, we investigate the numerical performance of the proposed structure-preserving Lift \& Learn method for three nonlinear conservative PDEs. In Section~\ref{sec:exp}, we study the one-dimensional nonlinear wave equation with exponential nonlinearity. In Section~\ref{sec:sg_2d} we consider the two-dimensional sine-Gordon equation.  Finally, in Section~\ref{sec:kgz}, we consider a nonlinear conservative FOM with $960{,}000$ degrees of freedom to derive structure-preserving ROMs for the two-dimensional Klein-Gordon-Zakharov equations, a system of coupled conservative PDEs that does not have a canonical Hamiltonian formulation. Below, we list practical details regarding the numerical implementation and the error measures used in the numerical experiments.

\begin{itemize}
\item The numerical experiments in Section~\ref{sec:exp} and Section~\ref{sec:sg_2d} are implemented in MATLAB 2022a on a quad-core Intel i7 processor with 2.3 GHz and 32 GB memory. The numerical experiments for the two-dimensional Klein-Gordon-Zakharov equations in Section~\ref{sec:kgz} are implemented in MATLAB 2022b using compute nodes of the Triton Shared Computing Cluster~\cite{san2022triton} equipped with $8$ processing cores of Intel Xeon Platinum 64-core CPU at 2.9 GHz and 1 TB memory.
\item The constrained optimization problems to learn the linear reduced operators in this work are solved using the CVX 2.2~\cite{cvx}, a MATLAB-based modeling system for convex optimization. The symmetric constraints and the optimization objectives for a given constrained optimization problem are specified using standard MATLAB expression syntax in the CVX package.
\item For nonlinear Hamiltonian ROMs obtained via HOpInf~\cite{sharma2022hamiltonian}, we use the implicit midpoint rule for numerical time integration. The implicit midpoint rule is a second-order symplectic integrator that exhibits bounded energy error for nonlinear Hamiltonian systems~\cite{hairer2006geometric}.
\item For quadratic ROMs obtained via the proposed structure-preserving Lift \& Learn approach, we use Kahan's method for numerical time integration. Kahan's method is a second-order structure-preserving numerical integrator for quadratic vector fields, see~\cite{celledoni2012geometric} for more details about  Kahan's method. 
\item The \emph{relative state error} in $q$ is computed in the entire training or testing intervals as  
\begin{equation}\label{eq:err_state}
 \text{Relative state error in } q =\frac{\lVert \Q- \bPhi\widehat{\Q} \rVert^2_F}{\lVert \Q \rVert^2_F},
\end{equation}
where $\widehat{\Q}=[\qhat_1, \cdots, \qhat_K] \in \real^{r \times K}$ is the ROM snapshot data obtained from the ROM simulations and $\bPhi\widehat{\Q}\in \real^{n \times K}$ is the approximation of the FOM snapshot data $\Q$. 
\item We measure the common cost/accuracy tradeoff for ROMs using the \emph{efficacy} metric which is computed as
\begin{equation}\label{eq:eff}
{\text{Efficacy}}= \frac{1}{\text{relative state error in training data regime} \times \text{wall-clock time in seconds}} \ ,
\end{equation}
where the MATLAB wall-clock time is obtained by calculating the average over 20 runs. In comparisons based on this metric, the model reduction approach with higher efficacy is considered advantageous. The reported models are sufficiently accurate such that reporting efficacy is sensible. 
\item The \emph{FOM energy error} is computed as follows:
\begin{equation}\label{eq:err_fom}
  \text{FOM energy error}=  \left| E(\bPhi\qhat(t),\bPhi \phat(t)) -E(\bPhi\qhat(0),\bPhi \phat(0)) \right|,
\end{equation}
where $E(\bPhi\qhat(t),\bPhi \phat(t))$ is the FOM energy approximation obtained by evaluating the space-discretized nonlinear FOM energy $E$ for the FOM state approximation at time $t$. 
\end{itemize}
\subsection{Nonlinear wave equation with exponential nonlinearity}
\label{sec:exp}
The one-dimensional nonlinear wave equation with exponential nonlinearity considered here arises from the Johnson–Mehl–Avrami–Kolmogorov theory~\cite{avrami1939kinetics, johnson1939reaction, kolmogorov1937statistical} of nucleation and growth reactions for modeling the kinetics of phase change. 
\subsubsection{PDE formulation and the corresponding nonlinear conservative FOM}
We consider the FOM setup from~\cite{li2020linearly}. Let $\Omega=(0,\pi) \subset \real$ be the spatial domain and consider the one-dimensional nonlinear wave equation
\begin{equation}
\frac{\partial^2 \phi(x,t)}{\partial t^2}=\frac{\partial^2 \phi(x,t)}{\partial x^2}+\exp(-\phi(x,t)),
\label{eq:exp_wave}
\end{equation}
with scalar field variable $\phi(x,t)$ at spatial location $x \in \Omega$ and time $t \in (0,T]$. We consider homogenous Dirichelet boundary conditions $
\phi(0,t)=\phi(\pi,t)=0$
for $t \in (0,T]$. The corresponding initial conditions  are $(\phi(x,0), \frac{\partial \phi}{\partial t} (x,0))=(0.5x(\pi-x), 0)$.
We define $q(x,t):=\phi(x,t)$ and $p(x,t):=\frac{\partial \phi(x,t)}{\partial t}$ to
 rewrite~\eqref{eq:exp_wave} in first-order form and then discretize the system of first-order PDEs using $n=200$ equally spaced grid points to derive the nonlinear conservative FOM
\begin{equation*} 
    \dot{\q}=\p, \qquad
    \dot{\p} =\D\q + \exp (-\q),
    \label{eq:exp_fom}
\end{equation*}
where $\D=\D^\top$ denotes the symmetric discrete Laplacian. The nonlinear conservative FOM conserves the space-discretized energy 
\begin{equation*}
E(\q,\p)=\frac{1}{2} \p^\top \p -\frac{1}{2}\q^\top\D \q + \sum_{i=1}^n \left( \exp(-q_i) \right).
\end{equation*}
\subsubsection{Model form for structure-preserving Lift \& Learn based on energy quadratization} 
Based on the exponential term in the nonlinear FOM energy expression, we solve $ w^2=\kappa^2 g(q)$ with $g(q)=\exp(-q)$ and choose $\kappa=1$ to obtain the first auxiliary variable $w=\exp(-q/2)$, which quadratizes the nonlinear energy in the lifted variables. Since the time evolution equations for $\{\q,\p,\w\}$ are quadratic in terms of $\{\q,\p,\w \}$ in this example, we do not need to introduce additional auxiliary variables. The resulting lifted FOM  
\begin{align*} 
    \dot{\q}&=\p, \\
    \dot{\p}&=\D \q + \w \odot \w, \\
    \dot{\w}&=-\frac{1}{2}\w \odot \p ,
  \end{align*}
conserves the lifted FOM energy $E_{\text{lift}}(\q,\p,\w)=\frac{1}{2}\p^\top\p - \frac{1}{2}\q^\top \D\q + \w^\top\w $. We consider a block-diagonal basis matrix of the form
\begin{equation*}
\bar{\V} =\text{blkdiag}(\bPhi,\bPhi,\V)\in \real^{3n\times 3r},
\end{equation*}
where the PSD basis matrix $\bPhi \in \real^{n\times r}$ for $\q$ and $\p$ is computed using the cotangent lift algorithm and the POD basis matrix $\V \in \real^{n \times r}$ for the auxiliary variable $\w$ is computed via the singular value decomposition of the lifted snapshot data matrix $\textbf W \in \real^{n \times r}$. The model form for the nonintrusive quadratic ROM is 
\begin{align*} 
    \dot{\qhat}&=\phat, \\
    \dot{\phat}&=\Dhat\qhat +\Hhat_{\p}\left(\what \otimes  \what\right) , \\
    \dot{\what}&=\Hhat_{\w} \left( \what \otimes \phat\right),
    \end{align*} 
  where the reduced quadratic operators $\Hhat_{\p} \in \real^{r \times r^2}$ and $\Hhat_{\w}\in \real^{r \times r^2}$ are computed analytically.
\subsubsection{Numerical results}
 The training dataset is built by integrating the nonlinear FOM until time $t=10$ using the symplectic midpoint rule with $\dt=0.005$. We consider FOM snapshots from $t=10$ to $t=100$ as the test dataset to study the predictive capability of quadratic ROMs learned via structure-preserving Lift \& Learn. For comparison, we consider quadratic ROMs obtained via lifting and intrusive projection, and nonlinear ROMs obtained via HOpInf with $r_{\text{spDEIM}}=r$. 

In Figure~\ref{fig:exp_1d_state}, we evaluate the numerical performance of the aforementioned approaches by comparing the relative state error~\eqref{eq:err_state} for different reduced dimension.  The state error plots in Figure~\ref{fig:exp_state_train} and Figure~\ref{fig:exp_state_test} show that all three approaches yields ROMs with similar accuracy in both training and test data regimes. 

We compare the computational efficiency of structure-preserving Lift \& Learn against the HOpInf with spDEIM approach with $r_{\text{spDEIM}}=r$ through efficacy plots in Figure~\ref{fig:exp_1d_eff}. The $3r$-dimensional quadratic ROMs learned via structure-preserving Lift \& Learn achieve higher efficacy than the $2r$-dimensional nonlinear Hamiltonian ROMs obtained via HOpInf with spDEIM. Thus, compared to HOpInf with spDEIM, the proposed approach learns ROMs that achieve similar state error at a substantially lower computational cost for this example. The FOM energy error plots in Figure~\ref{fig:exp_1d_ener} show that the all three approaches achieve bounded FOM energy error that stays below $10^{-3}$ for all times. In this example, the FOM energy is approximately $2.80$, while the perturbation term $\Delta E_{\mathrm{lift}}$ discussed in~\eqref{eq:pertrubed_energy} remains on the order of $10^{-4}$–$10^{-5}$ throughout the simulation. These results demonstrate the ability of the learned quadratic ROM to provide accurate and stable predictions far outside the training data regime.
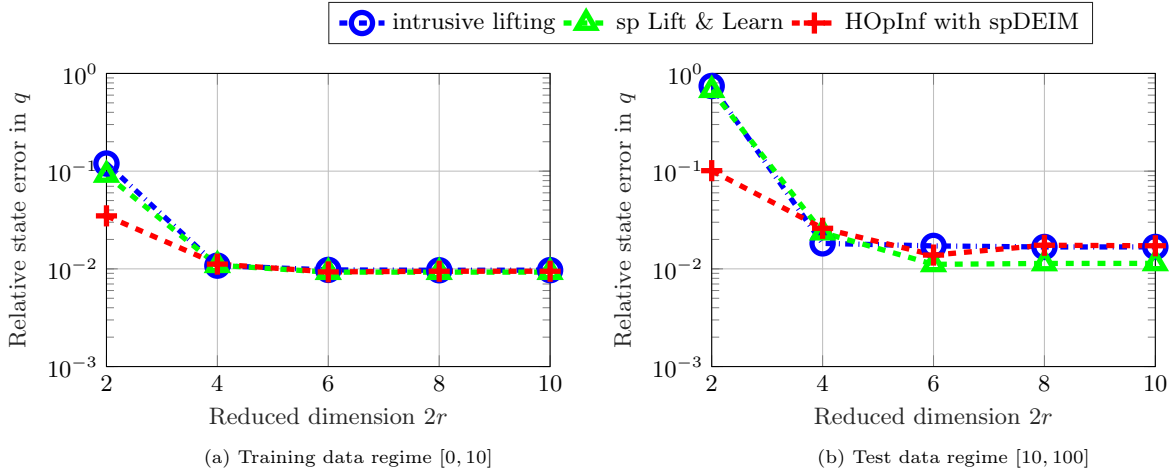
\begin{figure}[tbp]
\small
\captionsetup[subfigure]{oneside,margin={1.8cm,0 cm}}
\begin{subfigure}{.45\textwidth}
       \setlength\fheight{6 cm}
        \setlength\fwidth{\textwidth}
%
%
\begin{tikzpicture}

\begin{axis}[%
width=0.983\fheight,
height=0.65\fheight,
at={(0\fheight,0\fheight)},
scale only axis,
xmin=2,
xmax=10,
xlabel style={font=\color{white!15!black}},
xlabel={Reduced dimension $2r$},
ymode=log,
ymin=0.001,
ymax=1,
yminorticks=true,
ylabel style={font=\color{white!15!black}},
ylabel={Relative state error in $q$},
axis background/.style={fill=white},
xmajorgrids,
ymajorgrids,
legend style={draw=none, legend columns=-1},
legend style={at={(0.5,1.1)}, anchor=south west, legend cell align=left, align=left, draw=white!15!black},
legend style={font=\small}
]

\addplot [
  color=blue!70!black,
  dashdotted,
  line width=2.0pt,
  mark=o,
  mark size=4.0pt,
  mark options={solid, blue!70!black}
]
  table[row sep=crcr]{%
2	0.119672265262051\\
4	0.0107892836569169\\
6	0.00975451572147666\\
8	0.0096727992375151\\
10	0.0097000928488974\\
};
\addlegendentry{intrusive lifting}

\addplot [
  color=orange!90!black,
  dashed,
  line width=2.0pt,
  mark=triangle*,
  mark size=4.0pt,
  mark options={solid, orange!90!black}
]
  table[row sep=crcr]{%
2	0.0914758466266258\\
4	0.0108941983751602\\
6	0.00925365322445069\\
8	0.00925669604741247\\
10	0.00925942544683294\\
};
\addlegendentry{sp Lift \& Learn}

\addplot [
  color=purple!80!black,
  densely dotted,
  line width=2.0pt,
  mark=+,
  mark size=4.0pt,
  mark options={solid, purple!80!black}
]
  table[row sep=crcr]{%
2	0.0348654720072889\\
4	0.0111837540366834\\
6	0.00930809743491637\\
8	0.00945028807401968\\
10	0.00945174422474881\\
};
\addlegendentry{HOpInf with spDEIM}

\end{axis}
\end{tikzpicture}%
\caption{Training data regime $[0,10]$}
\label{fig:exp_state_train}
    \end{subfigure}
    \hspace{0.4cm}
    \begin{subfigure}{.45\textwidth}
           \setlength\fheight{6 cm}
           \setlength\fwidth{\textwidth}
\raisebox{-57mm}{
%
%
\begin{tikzpicture}

\begin{axis}[%
width=0.983\fheight,
height=0.65\fheight,
at={(0\fheight,0\fheight)},
scale only axis,
xmin=2,
xmax=10,
xlabel style={font=\color{white!15!black}},
xlabel={Reduced dimension $2r$},
ymode=log,
ymin=0.001,
ymax=1,
yminorticks=true,
ylabel style={font=\color{white!15!black}},
ylabel={Relative state error in $q$},
axis background/.style={fill=white},
xmajorgrids,
ymajorgrids,
legend style={at={(0.441,0.641)}, anchor=south west, legend cell align=left, align=left, draw=white!15!black}
]

\addplot [
  color=blue!70!black,
  dashdotted,
  line width=2.0pt,
  mark=o,
  mark size=4.0pt,
  mark options={solid, blue!70!black}
]
  table[row sep=crcr]{%
2	0.743897471868986\\
4	0.0182335027992255\\
6	0.0171739479273348\\
8	0.0167515636859981\\
10	0.0167739338285605\\
};

\addplot [
  color=orange!90!black,
  dashed,
  line width=2.0pt,
  mark=triangle*,
  mark size=4.0pt,
  mark options={solid, orange!90!black}
]
  table[row sep=crcr]{%
2	0.674395004619373\\
4	0.0234940796409283\\
6	0.0111198021435666\\
8	0.0113608910756335\\
10	0.0113788075448081\\
};

\addplot [
  color=purple!80!black,
  densely dotted,
  line width=2.0pt,
  mark=+,
  mark size=4.0pt,
  mark options={solid, purple!80!black}
]
  table[row sep=crcr]{%
2	0.101144055890839\\
4	0.0260106123877024\\
6	0.0138436460786237\\
8	0.0174409110135583\\
10	0.0172352926739779\\
};

\end{axis}
\end{tikzpicture}
\caption{Test data regime $[10,100]$}
\label{fig:exp_state_test}
    \end{subfigure}
\caption{Nonlinear wave equation with exponential nonlinearity. Quadratic ROMs obtained via structure-preserving Lift \& Learn achieve similar state error to nonlinear ROMs obtained via intrusive lifting and HOpInf with spDEIM in both training and test data regimes.}
 \label{fig:exp_1d_state}
\end{figure}
\begin{figure}[tbp]
\small
\captionsetup[subfigure]{oneside,margin={1.8cm,0 cm}}
\begin{subfigure}{.45\textwidth}
       \setlength\fheight{6 cm}
        \setlength\fwidth{\textwidth}
%
%
\begin{tikzpicture}

\begin{axis}[%
width=0.987\fheight,
height=0.65\fheight,
at={(0\fheight,0\fheight)},
scale only axis,
xmin=2,
xmax=10,
xlabel style={font=\color{white!15!black}},
xlabel={Reduced dimension $2r$},
ymode=log,
ymin=10,
ymax=10000,
yminorticks=true,
ylabel style={font=\color{white!15!black}},
ylabel={Efficacy},
axis background/.style={fill=white},
xmajorgrids,
ymajorgrids,
legend style={at={(0.1,1.2)}, anchor=south west, legend cell align=left, align=left, draw=white!15!black},
legend style={font=\small}
]
\addplot [
  color=orange!90!black,
  dashed,
  line width=2.0pt,
  mark=triangle*,
  mark size=4.0pt,
  mark options={solid, orange!90!black}
]
  table[row sep=crcr]{%
2	1092.78299351485\\
4	6425.34584454716\\
6	5820.37840559622\\
8	4498.84366463997\\
10	3309.51753465284\\
};
\addlegendentry{sp Lift \& Learn}

\addplot [
  color=purple!80!black,
  densely dotted,
  line width=2.0pt,
  mark=+,
  mark size=4.0pt,
  mark options={solid, purple!80!black}
]
  table[row sep=crcr]{%
2	24.6051184078054\\
4	51.326078562473\\
6	55.6335416125225\\
8	50.6805923146553\\
10	47.3875424676396\\
};
\addlegendentry{HOpInf with spDEIM}

\end{axis}
\end{tikzpicture}%
\caption{Efficacy}
\label{fig:exp_1d_eff}
    \end{subfigure}
    \hspace{0.4cm}
    \begin{subfigure}{.45\textwidth}
           \setlength\fheight{6 cm}
           \setlength\fwidth{\textwidth}
\input{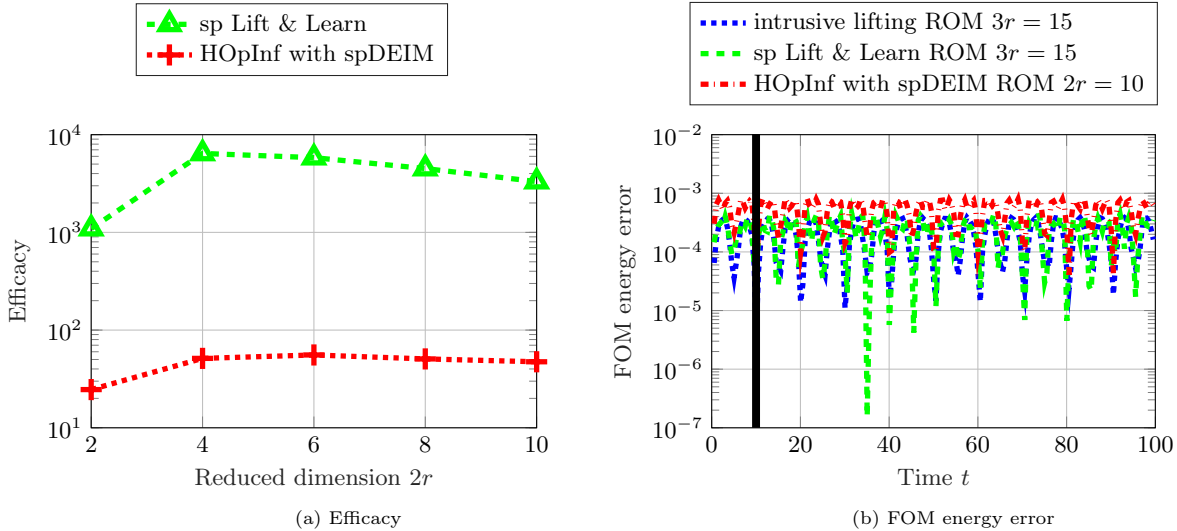}
\caption{FOM energy error}
\label{fig:exp_1d_ener}
    \end{subfigure}
\caption{Nonlinear wave equation with exponential nonlinearity. The efficacy comparison in plot (a) shows that the structure-preserving Lift \& Learn approach achieves higher efficacy than the HOpInf with spDEIM approach. Plot (b) shows that all ROMs demonstrate bounded energy error. The solid black line in plot (b) indicates the end of the training data regime.}
 \label{fig:exp_1d}
\end{figure}
\subsection{Two-dimensional sine-Gordon wave equation}
\label{sec:sg_2d}
We now consider the two-dimensional analogue of the sine-Gordon equation from Section~\ref{sec:background}. 
\subsubsection{PDE formulation and the corresponding nonlinear conservative FOM}
We consider the FOM setup from~\cite{bo2022arbitrary}. Let $\Omega=(-7,7) \times (-7,7) \subset \real^2 $ be the spatial domain and consider the two-dimensional sine-Gordon equation
\begin{equation}\label{eq:sg_2d_pde}
\frac{\partial^2 \phi}{\partial t^2}(x,y,t)=\frac{\partial^2 \phi}{\partial x^2} (x,y,t)+ \frac{\partial^2 \phi}{\partial y^2}(x,y,t) - \sin(\phi(x,y,t)),
\end{equation}
for $t \in (0,T]$. The boundary conditions are periodic and the corresponding initial conditions are
\begin{equation*}
\phi(x,y,0)=4\tan^{-1}\left(\exp\left((3-\sqrt{x^2+y^2}\right)\right), \qquad  \frac{\partial \phi}{\partial t}(x,y,0)=0.
\end{equation*}
We define $q(x,y,t):=\phi(x,y,t)$ and $p(x,y,t):=\partial \phi(x,y,t)/\partial t$ and rewrite the second-order conservative PDE~\eqref{eq:sg_2d_pde} in the first-order form.
We then discretize the computational domain $\Omega$ with $n_x=n_y=100$ equally spaced grid points in both spatial directions leading to a nonlinear FOM of dimension $2n=20,000$
\begin{equation*} 
    \dot{\q}=\p, \qquad
    \dot{\p} =\D\q -\sin( \q),
    \label{eq:kg_fom}
\end{equation*}
where $\D=\D^\top$ denotes the symmetric discrete Laplacian in the two-dimensional setting. The nonlinear FOM conserves the space-discretized energy 
\begin{equation*}
E(\q,\p)=\frac{1}{2} \p^\top \p -\frac{1}{2}\q^\top\D \q + \sum_{i=1}^n (1-\cos(q_i)).
\end{equation*}
\subsubsection{Model form for structure-preserving Lift \& Learn based on energy quadratization}
Since the two-dimensional sine-Gordon equation~\eqref{eq:sg_2d_pde} has the same form of nonlinearity as its one-dimensional counterpart in equation~\eqref{eq:sg_pde}, the nonintrusive ROM form has the same structure as the nonintrusive ROM for the one-dimensional sine-Gordon example in equation~\eqref{eq:ll_sg}.  
\subsubsection{Numerical results}
We build the training dataset by integrating the nonlinear FOM for the two-dimensional sine-Gordon equation from $t=0$ to $t=10$ and we consider FOM snapshots from $t=10$ to $t=12.5$ (predicting 25\% outside training data) as the test dataset. We compare the relative state error of the structure-preserving Lift \& Learn ROMs against intrusive lifting ROMs and HOpInf ROMs with $r_{\text{spDEIM}}=6r$ in Figure~\ref{fig:sg_2d_state}. The relative state error plots in Figure~\ref{fig:sg_state_train} show that the quadratic ROMs learned via the proposed structure-preserving Lift \& Learn approach achieve accuracy similar to the intrusive lifting ROMs in the training data regime. Notably, the relative state error for HOpInf ROMs with $r_{\text{spDEIM}}=6r$ does not decrease as favorably with an increase for $2r>20$. This could be remedied by further increasing $r$, which, however, would make the HOpInf ROM increasingly more costly to simulate. The comparison plots in Figure~\ref{fig:sg_state_test} show that all three approaches achieve similar accuracy in the test data regime. 

In Figure~\ref{fig:sg_2d_eff}, we compare the efficacy of quadratic ROMs obtained via structure-preserving Lift \& Learn and nonlinear Hamiltonian ROMs obtained via HOpInf with spDEIM. We observe that both approaches exhibit a similar trend where the efficacy increases with an increase with the reduced dimension up to $2r=20$. For $2r>20$, the efficacy decreases with an increase in the reduced dimension. The FOM energy error plots in Figure~\ref{fig:sg_2d_ener} compare energy error performance of the proposed approach against intrusive lifting and HOpInf with spDEIM. All three approaches demonstrate similar energy error behavior with the intrusive lifting ROM of dimension $4r=80$ performing marginally better than the other two nonintrusive approaches. For prediction beyond $t\approx12.5$, the energy error can become relatively large but remains bounded, reflecting the limited predictive capability of a reduced basis constructed from training data on $t\in[0,10]$. The FOM energy in this example is approximately $151$ in the time interval $[0,10]$, while the perturbation term $\Delta E_{\mathrm{lift}}$ discussed in~\eqref{eq:pertrubed_energy} remains several orders of magnitude smaller (below $10^{-11}$) over the entire simulation.
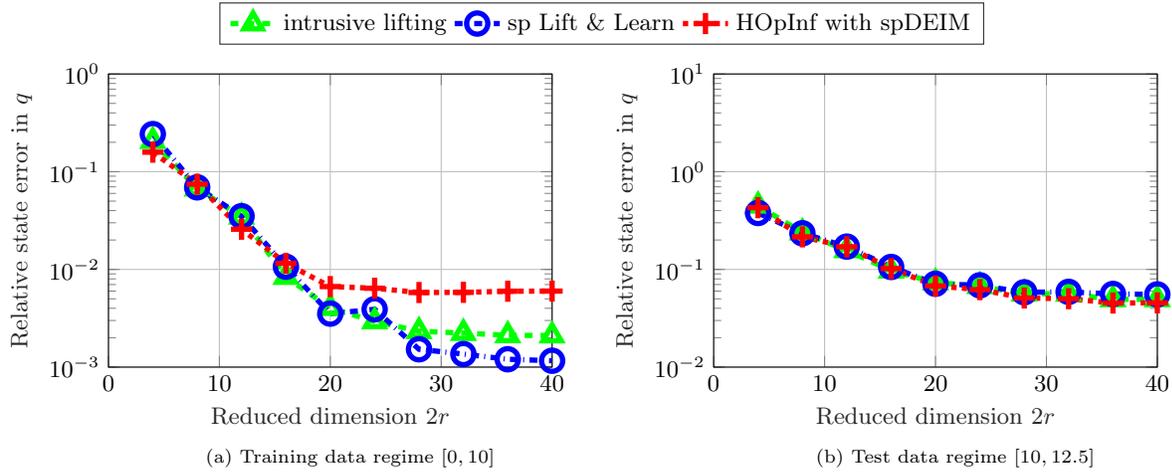
\begin{figure}[tbp]
\small
\captionsetup[subfigure]{oneside,margin={1.8cm,0 cm}}
\begin{subfigure}{.45\textwidth}
       \setlength\fheight{6 cm}
        \setlength\fwidth{\textwidth}
%
%
\begin{tikzpicture}

\begin{axis}[%
width=0.983\fheight,
height=0.65\fheight,
at={(0\fheight,0\fheight)},
scale only axis,
xmin=0,
xmax=40,
xlabel style={font=\color{white!15!black}},
xlabel={Reduced dimension $2r$},
ymode=log,
ymin=0.001,
ymax=1,
yminorticks=true,
ylabel style={font=\color{white!15!black}},
ylabel={Relative state error in $q$},
axis background/.style={fill=white},
xmajorgrids,
ymajorgrids,
legend style={draw=none, legend columns=-1},
legend style={at={(0.25,1.1)}, anchor=south west, legend cell align=left, align=left, draw=white!15!black},
legend style={font=\small}
]
\addplot [
  color=blue!70!black,
  dashdotted,
  line width=2.0pt,
  mark=o,
  mark size=4.0pt,
  mark options={solid, blue!70!black}
]
  table[row sep=crcr]{%
4	0.205899014359303\\
8	0.06710953093224\\
12	0.0345802765123352\\
16	0.00836253324136981\\
20	0.00401692461776651\\
24	0.00293872021773688\\
28	0.00232797869508457\\
32	0.00223045968051165\\
36	0.00211385003166397\\
40	0.00210560227895953\\
};
\addlegendentry{intrusive lifting}

\addplot [
  color=orange!90!black,
  dashed,
  line width=2.0pt,
  mark=triangle*,
  mark size=4.0pt,
  mark options={solid, orange!90!black}
]
  table[row sep=crcr]{%
4	0.24266766129569\\
8	0.0691722107715322\\
12	0.0350473268638889\\
16	0.0106455848990292\\
20	0.00352828840194326\\
24	0.00390853353636164\\
28	0.00152206863655869\\
32	0.00136041476180307\\
36	0.0012034997637606\\
40	0.0011620193887574\\
};
\addlegendentry{sp Lift \& Learn}

\addplot [
  color=purple!80!black,
  densely dotted,
  line width=2.0pt,
  mark=+,
  mark size=4.0pt,
  mark options={solid, purple!80!black}
]
  table[row sep=crcr]{%
4	0.15838049867907\\
8	0.0744673208960969\\
12	0.0257796485179741\\
16	0.0115366832427231\\
20	0.00667352503460737\\
24	0.00641696452514929\\
28	0.00579934213802949\\
32	0.00582420708595129\\
36	0.00597926193656035\\
40	0.00599723525665712\\
};
\addlegendentry{HOpInf with spDEIM}

\end{axis}
\end{tikzpicture}%
\caption{Training data regime $[0,10]$}
\label{fig:sg_state_train}
    \end{subfigure}
    \hspace{0.4cm}
    \begin{subfigure}{.45\textwidth}
           \setlength\fheight{6 cm}
           \setlength\fwidth{\textwidth}
\raisebox{-57mm}{
%
%
\begin{tikzpicture}

\begin{axis}[%
width=0.983\fheight,
height=0.65\fheight,
at={(0\fheight,0\fheight)},
scale only axis,
xmin=0,
xmax=40,
xlabel style={font=\color{white!15!black}},
xlabel={Reduced dimension $2r$},
ymode=log,
ymin=0.01,
ymax=10,
yminorticks=true,
ylabel style={font=\color{white!15!black}},
ylabel={Relative state error in $q$},
axis background/.style={fill=white},
xmajorgrids,
ymajorgrids,
]
\addplot [
  color=blue!70!black,
  dashdotted,
  line width=2.0pt,
  mark=o,
  mark size=4.0pt,
  mark options={solid, blue!70!black}
]
  table[row sep=crcr]{%
4	0.448515831339607\\
8	0.238467136337287\\
12	0.156704830249936\\
16	0.0968759243782355\\
20	0.0736076030240395\\
24	0.0672613175553044\\
28	0.0564363401515626\\
32	0.0562527642080719\\
36	0.049322693439924\\
40	0.0494462863495051\\
};

\addplot [
  color=orange!90!black,
  dashed,
  line width=2.0pt,
  mark=triangle*,
  mark size=4.0pt,
  mark options={solid, orange!90!black}
]
  table[row sep=crcr]{%
4	0.378018218184115\\
8	0.234285714590927\\
12	0.171080932168652\\
16	0.105929805607251\\
20	0.0712833541784429\\
24	0.068711608526565\\
28	0.0582902504637456\\
32	0.0589014487242964\\
36	0.0560710647817749\\
40	0.0554768114995593\\
};

\addplot [
  color=purple!80!black,
  densely dotted,
  line width=2.0pt,
  mark=+,
  mark size=4.0pt,
  mark options={solid, purple!80!black}
]
  table[row sep=crcr]{%
4	0.427919721645593\\
8	0.215335824348628\\
12	0.170504269102923\\
16	0.10201405122822\\
20	0.067762748828086\\
24	0.0620642971970352\\
28	0.0509982792430625\\
32	0.0501449306746006\\
36	0.0452607599172993\\
40	0.0455072683909793\\
};

\end{axis}

\begin{axis}[%
width=1.273\fheight,
height=0.955\fheight,
at={(-0.169\fheight,-0.133\fheight)},
scale only axis,
xmin=0,
xmax=1,
ymin=0,
ymax=1,
axis line style={draw=none},
ticks=none,
axis x line*=bottom,
axis y line*=left
]
\end{axis}
\end{tikzpicture}
\caption{Test data regime $[10,12.5]$}
\label{fig:sg_state_test}
    \end{subfigure}
\caption{Two-dimensional sine-Gordon equation. The relative state error comparison in plot (a) shows that proposed structure-preserving Lift \& Learn approach achieves higher accuracy than the HOpInf with spDEIM approach in the training regime. Plot (b) shows that all three approaches achieve similar accuracy in the test data regime. }
 \label{fig:sg_2d_state}
\end{figure}
\begin{figure}[tbp]
\small
\captionsetup[subfigure]{oneside,margin={1.8cm,0 cm}}
\begin{subfigure}{.45\textwidth}
       \setlength\fheight{6 cm}
        \setlength\fwidth{\textwidth}
%
%
\begin{tikzpicture}

\begin{axis}[%
width=0.987\fheight,
height=0.65\fheight,
at={(0\fheight,0\fheight)},
scale only axis,
xmin=0,
xmax=40,
xlabel style={font=\color{white!15!black}},
xlabel={Reduced dimension $2r$},
ymode=log,
ymin=100,
ymax=10000,
yminorticks=true,
ylabel style={font=\color{white!15!black}},
ylabel={Efficacy},
axis background/.style={fill=white},
xmajorgrids,
ymajorgrids,
legend style={at={(0.1,1.25)}, anchor=south west, legend cell align=left, align=left, draw=white!15!black},
legend style={font=\small}
]
\addplot [
  color=orange!90!black,
  dashed,
  line width=2.0pt,
  mark=triangle*,
  mark size=4.0pt,
  mark options={solid, orange!90!black}
]
  table[row sep=crcr]{%
4	440.811903309089\\
8	580.86489336787\\
12	535.054673214124\\
16	1215.64743495144\\
20	1554.30451953931\\
24	1480.86638688627\\
28	1143.50665386102\\
32	671.286032684236\\
36	436.797040903393\\
40	317.502839115204\\
};
\addlegendentry{sp Lift \& Learn}

\addplot [
  color=purple!80!black,
  densely dotted,
  line width=2.0pt,
  mark=+,
  mark size=4.0pt,
  mark options={solid, purple!80!black}
]
  table[row sep=crcr]{%
4	148.702443393949\\
8	262.361457561601\\
12	654.216168758546\\
16	1263.6521739705\\
20	1840.39433780054\\
24	1551.55940840501\\
28	1618.8331205258\\
32	1338.0552707817\\
36	1172.35657805515\\
40	1040.24654012406\\
};
\addlegendentry{HOpInf with spDEIM}

\end{axis}

\begin{axis}[%
width=1.273\fheight,
height=0.955\fheight,
at={(-0.166\fheight,-0.133\fheight)},
scale only axis,
xmin=0,
xmax=1,
ymin=0,
ymax=1,
axis line style={draw=none},
ticks=none,
axis x line*=bottom,
axis y line*=left
]
\end{axis}
\end{tikzpicture}%
\caption{Efficacy}
\label{fig:sg_2d_eff}
    \end{subfigure}
    \hspace{0.4cm}
    \begin{subfigure}{.45\textwidth}
           \setlength\fheight{6 cm}
           \setlength\fwidth{\textwidth}
\input{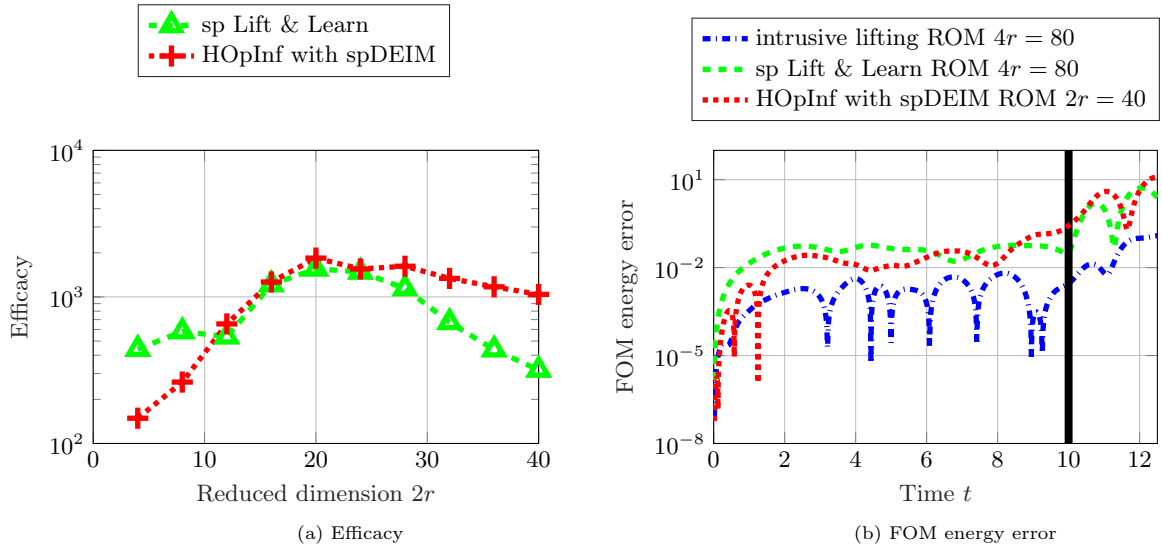}
\caption{FOM energy error}
\label{fig:sg_2d_ener}
    \end{subfigure}
\caption{Two-dimensional sine-Gordon wave equation. Plot (a) shows that the quadratic ROMs obtained via nonintrusive lifting approach achieve higher efficacy than the HOpInf ROMs with spDEIM. The energy error comparison in plot (b) shows that all approaches yield bounded energy error behavior in the training data regime with the intrusive lifting ROM achieving the lowest FOM energy error. The solid black line in plot (b) indicates the end of the training time interval.}
 \label{fig:sg_2d}
\end{figure}
\subsection{Two-dimensional Klein-Gordon-Zakharov equations}
\label{sec:kgz}
The Klein-Gordon-Zakharov (KGZ) equations~\cite{boling1995global} are a system of nonlinear dispersive PDEs used to describe the mutual interaction between the Langmuir waves and ion acoustic waves in a plasma. These coupled equations play a crucial role in the study of the dynamics of strong Langmuir turbulence in plasma physics~\cite{berge1996perturbative}.  Unlike the previous two numerical examples, the nonlinear conservative FOM for the KGZ equations is not of the form~\eqref{eq:cons_fom} as it does not have a canonical Hamiltonian formulation.
\subsubsection{System of coupled PDEs and the corresponding nonlinear conservative FOM}
We consider the FOM setup from~\cite{guo2023energy}. Let $\Omega=(-20,20) \times (-20,20) \subset \real^2$ be the spatial domain and consider the coupled nonlinear equations
\begin{align*}
\frac{\partial ^2\psi}{\partial t^2}(x,y,t)&=\frac{\partial ^2\psi}{\partial x^2}(x,y,t)+\frac{\partial ^2\psi}{\partial y^2}(x,y,t)-\psi(x,y,t) -\psi(x,y,t)\phi(x,y,t)-|\psi(x,y,t)|^2\psi(x,y,t), \\
 \frac{\partial ^2\phi }{\partial t^2}(x,y,t)&=\frac{\partial ^2\phi}{\partial x^2}(x,y,t)+\frac{\partial ^2\phi}{\partial y^2}(x,y,t) + \left(\frac{\partial ^2}{\partial x^2}+  \frac{\partial ^2}{\partial y^2}\right) |\psi (x,y,t)|^2,
\end{align*}
where $t \in (0,T]$ is time, $\psi(x,y,t)$ is a complex-valued scalar field that describes the fast time scale component of the electric field raised by electrons, and $\phi(x,y,t)$ is a real-valued scalar field that describes the deviation of the ion density from its equilibrium. The boundary conditions are periodic and the initial conditions are
\begin{align*}
\psi(x,y,0)&=\sech(-(x-2)^2-y^2) + \sech(-x^2-(y-2)^2), \qquad \frac{\partial \psi}{\partial t}(x,y,0)=0,\\
 \phi (x,y,0)&=\sech(-(x-2)^2-y^2) + \sech(-x^2-(y-2)^2), \qquad \frac{\partial \phi}{\partial t}(x,y,0)=0.
\end{align*}

To rewrite the KGZ equations in first-order form, we first write the complex-valued function $\psi$ in terms of its real and imaginary parts as $\psi=q_1+iq_2$ and then define $p_1=\partial q_1/\partial t$ and $p_2=\partial q_2/\partial t$. 
We then discretize the two-dimensional spatial domain $\Omega$ with $n_x=n_y=400$ equally spaced grid points in both spatial directions to derive the nonlinear FOM of dimension $6n=6n_xn_y=960{,}000$ as
\begin{align*}
\dot{\q_1}&=\p_1,\\
\dot{\q_2}&=\p_2,\\
\dot{\p_1}&=\D\q_1-\q_1-\bphi\odot\q_1 - (\q_1^2+\q_2^2)\odot \q_1,\\
\dot{\p_2}&=\D\q_2-\q_2-\bphi\odot\q_2 - (\q_1^2+\q_2^2)\odot\q_2,\\
\dot{\bvarphi}&=\bphi + (\q_1^2+\q_2^2),\\
\dot{\bphi}&=\D\bvarphi,
\end{align*}
with the space-discretized energy 
\begin{equation*} 
 E(\q_1,\q_2,\p_1,\p_2,\bvarphi,\bphi)=\frac{1}{2}\y^\top \mathbf{M} \y +  \bphi^\top (\q_1^2+\q_2^2)+\sum_{i=1}^n \frac{1}{2} (q_{1,i}^2+q_{2,i}^2)^2,
 \end{equation*} 
 where $\y=[\q_1^\top,\q_2^\top,\p_1^\top,\p_2^\top,\bvarphi^\top,\bphi^\top]^\top \in \real^{6n}$ and $\mathbf{M}=\text{blkdiag}(\In-\D,\In-\D,\In,\In,-\D,\In) \in \real^{6n \times 6n}$.
\subsubsection{Model form for structure-preserving Lift \& Learn based on energy quadratization}
We follow the energy-quadratization strategy and introduce an auxiliary variable $w=q_1^2+q_2^2$ to lift the nonlinear FOM to a quadratic lifted FOM 
\begin{align*}
\dot{\q_1}&=\p_1,\\
\dot{\q_2}&=\p_2,\\
\dot{\p_1}&=\D\q_1-\q_1-\bphi\odot\q_1 - \w \odot \q_1,\\
\dot{\p_2}&=\D\q_2-\q_2-\bphi \odot\q_2 -\w \odot \q_2,\\
\dot{\bvarphi}&=\bphi + \w,\\
\dot{\bphi}&=\D\bvarphi,\\
\dot{\w}   &=2 \q_1\odot \p_1 + 2\q_2 \odot \p_2,
\end{align*}
with a quadratic FOM energy in the lifted state variables
\begin{equation*} 
 E_{\text{lift}}(\q_1,\q_2,\p_1,\p_2,\bvarphi,\bphi,\w)=\frac{1}{2}\y^\top\mathbf{M} \y +  \bphi^\top \w+ \frac{1}{2} \w^\top\w.
\end{equation*}
To approximate the lifted FOM state, we consider a block-diagonal projection matrix $\bar{\V} $ of the form
\begin{equation*}
\bar{\V} =\text{blkdiag}(\bPhi,\bPhi,\bPhi,\bPhi,\V,\V,\V) \in \real^{7n \times 7r},
\end{equation*}
where $\bPhi \in \real^{n\times r}$ is computed from the FOM snapshots of $\q_1,\q_2, \p_1,$ and $\p_2$, $\V\in \real^{n \times r}$ is computed from the FOM snapshots of $\bvarphi$, $\bphi$, and the lifted $\w$ snapshots. The model form for the nonintrusive quadratic ROM is 
\begin{align*}
\dot{\qhat}_1&=\phat_1,\\
\dot{\qhat}_2&=\phat_2,\\
\dot{\phat}_1&=\Dhat_{\q,1}\qhat_1-\qhat_1 + \Hhat_{\p} (\bphihat\otimes \qhat_1)+ \Hhat_{\p}( \what\otimes  \qhat_1),\\
\dot{\phat}_2&=\Dhat_{\q,2}\qhat_2-\qhat_2+ \Hhat_{\p} (\bphihat\otimes  \qhat_2)+ \Hhat_{\p}( \what\otimes  \qhat_2),\\
\dot{\bvarphihat}&=\bphihat + \what,\\
\dot{\bphihat}&=\Dhat_{\pmb{\varphi}}\bvarphihat,\\
\dot{\what}   &=\Hhat_{\w}\left( \qhat_1\otimes  \phat_1 +  \qhat_2 \otimes\phat_2\right),
\end{align*}
where the quadratic reduced operators $\Hhat_{\p} \in \real^{r \times r^2}$ and $\Hhat_{\w} \in \real^{r \times r^2}$ are computed analytically, and the linear reduced matrices $\Dhat_{\q,1}=\Dhat_{\q,1}^\top\in \real^{r \times r}$, $\Dhat_{\q,2}=\Dhat_{\q,2}^\top\in \real^{r \times r}$,  and $\Dhat_{\pmb \varphi}=\Dhat_{\pmb \varphi}^\top\in \real^{r \times r}$ are learned from projections of the high-dimensional data. We point out once more that all quadratic forms introduced by the lifting procedure do not have to be learned, but are known.
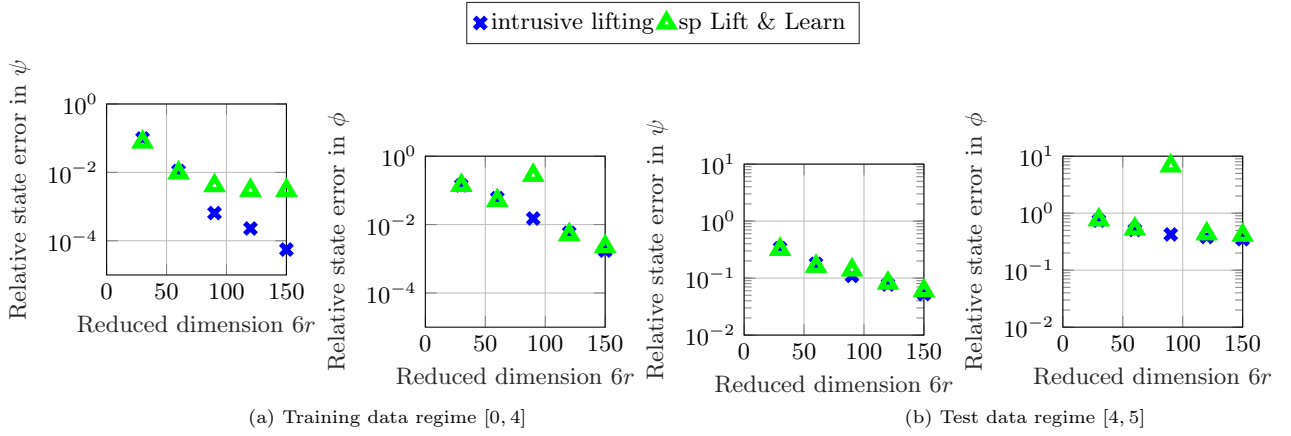
\begin{figure}[tbp]
\small
\captionsetup[subfigure]{oneside,margin={-2cm,0 cm}}
\begin{subfigure}{.23\textwidth}
       \setlength\fheight{3.5 cm}
        \setlength\fwidth{\textwidth}
\raisebox{0mm}{
%
%
\definecolor{mycolor1}{rgb}{1.00000,0.00000,1.00000}%
\begin{tikzpicture}

\begin{axis}[%
width=0.683\fheight,
height=0.65\fheight,
at={(0\fheight,0\fheight)},
scale only axis,
xmin=0,
xmax=150,
xlabel style={font=\color{white!15!black}},
xlabel={Reduced dimension $6r$},
ymode=log,
ymin=1e-05,
ymax=1,
yminorticks=true,
ylabel style={font=\color{white!15!black}},
ylabel={Relative state error in $\psi$},
axis background/.style={fill=white},
xmajorgrids,
xmajorgrids,
ymajorgrids,
legend style={draw=none, legend columns=-1},
legend style={at={(2,1.35)}, anchor=south west, legend cell align=left, align=left, draw=white!15!black}
]
\addplot [
  color=blue!70!black, 
  only marks,
  line width=2.0pt,
  mark=o,
  mark size=3.0pt,
  mark options={solid, blue!70!black}
]
  table[row sep=crcr]{%
30	0.097632676223942\\
60	0.0109450050074902\\
90	0.000645931701140772\\
120	0.00022907624093048\\
150	5.53977391437984e-05\\
165	4.21991557749112e-05\\
};
\addlegendentry{intrusive lifting}

\addplot [
  color=orange!90!black, 
  only marks,
  line width=2.0pt,
  mark=triangle*,
  mark size=3.0pt,
  mark options={solid, orange!90!black}
]
  table[row sep=crcr]{%
30	0.0771209864224302\\
60	0.00961195040712299\\
90	0.00410149976032425\\
120	0.00297044456162761\\
150	0.0029716988039546\\
};
\addlegendentry{sp Lift \& Learn}

\end{axis}

\begin{axis}[%
width=1.227\fheight,
height=0.723\fheight,
at={(-0.16\fheight,-0.08\fheight)},
scale only axis,
xmin=0,
xmax=1,
ymin=0,
ymax=1,
axis line style={draw=none},
ticks=none,
axis x line*=bottom,
axis y line*=left
]
\end{axis}
\end{tikzpicture}
\label{fig:state_train_q}
    \end{subfigure}
    \hspace{0.25cm}
        \begin{subfigure}{.23\textwidth}
       \setlength\fheight{3.5 cm}
        \setlength\fwidth{\textwidth}
\raisebox{0mm}{
%
%
\definecolor{mycolor1}{rgb}{1.00000,0.00000,1.00000}%
\begin{tikzpicture}

\begin{axis}[%
width=0.683\fheight,
height=0.65\fheight,
at={(0\fheight,0\fheight)},
scale only axis,
xmin=0,
xmax=150,
xlabel style={font=\color{white!15!black}},
xlabel={Reduced dimension $6r$},
ymode=log,
ymin=1e-5,
ymax=1,
yminorticks=true,
ylabel style={font=\color{white!15!black}},
ylabel={Relative state error in $\phi$},
axis background/.style={fill=white},
xmajorgrids,
ymajorgrids,
legend style={draw=none, legend columns=-1},
legend style={at={(0.75,1.1)}, anchor=south west, legend cell align=left, align=left, draw=white!15!black}
]
\addplot [
  color=blue!70!black, 
  only marks,
  line width=2.0pt,
  mark=o,
  mark size=3.0pt,
  mark options={solid, blue!70!black}
]
  table[row sep=crcr]{%
30	0.140215223699327\\
60	0.058967236471198\\
90	0.0150084692853283\\
120	0.00571783789007633\\
150	0.00173415616775359\\
165	0.000867673061806839\\
};

\addplot [
  color=orange!90!black, 
  only marks,
  line width=2.0pt,
  mark=triangle*,
  mark size=3.0pt,
  mark options={solid, orange!90!black}
]
  table[row sep=crcr]{%
30	0.139879479493397\\
60	0.0504610894153922\\
90	0.275504732545312\\
120	0.00508609536310761\\
150	0.00227249775099045\\
};

\end{axis}
\end{tikzpicture}
\caption{Training data regime $[0,4]$}
\label{fig:state_train_phi}
    \end{subfigure}
    \hspace{0.25cm}
    \begin{subfigure}{.23\textwidth}
           \setlength\fheight{3.5 cm}
           \setlength\fwidth{\textwidth}
\raisebox{0mm}{
%
%
\definecolor{mycolor1}{rgb}{1.00000,0.00000,1.00000}%
\begin{tikzpicture}

\begin{axis}[%
width=0.683\fheight,
height=0.65\fheight,
at={(0\fheight,0\fheight)},
scale only axis,
xmin=0,
xmax=150,
xlabel style={font=\color{white!15!black}},
xlabel={Reduced dimension $6r$},
ymode=log,
ymin=0.01,
ymax=10,
yminorticks=true,
ylabel style={font=\color{white!15!black}},
ylabel={Relative state error in $\psi$},
axis background/.style={fill=white},
xmajorgrids,
ymajorgrids,
legend style={at={(0.243,0.242)}, anchor=south west, legend cell align=left, align=left, draw=white!15!black}
]
\addplot [
  color=blue!70!black, 
  only marks,
  line width=2.0pt,
  mark=o,
  mark size=3.0pt,
  mark options={solid, blue!70!black}
]
  table[row sep=crcr]{%
30	0.333969451510639\\
60	0.177859571523406\\
90	0.109319283302549\\
120	0.0772185752599888\\
150	0.0524104004671025\\
165	0.0431307700631699\\
};


\addplot [
  color=orange!90!black, 
  only marks,
  line width=2.0pt,
  mark=triangle*,
  mark size=3.0pt,
  mark options={solid, orange!90!black}
]
  table[row sep=crcr]{%
30	0.318688330578915\\
60	0.160933982692348\\
90	0.138869724997589\\
120	0.0815959917154573\\
150	0.0601573553333776\\
};

\end{axis}
\end{tikzpicture}
\label{fig:state_test_q}
    \end{subfigure}
\hspace{0.25cm}
    \begin{subfigure}{.23\textwidth}
           \setlength\fheight{3.5 cm}
           \setlength\fwidth{\textwidth}
%
%
\definecolor{mycolor1}{rgb}{1.00000,0.00000,1.00000}%
\begin{tikzpicture}

\begin{axis}[%
width=0.683\fheight,
height=0.65\fheight,
at={(0\fheight,0\fheight)},
scale only axis,
xmin=0,
xmax=150,
xlabel style={font=\color{white!15!black}},
xlabel={Reduced dimension $6r$},
ymode=log,
ymin=0.01,
ymax=10,
yminorticks=true,
ylabel style={font=\color{white!15!black}},
ylabel={Relative state error in $\phi$},
axis background/.style={fill=white},
xmajorgrids,
ymajorgrids,
legend style={at={(0.22,0.222)}, anchor=south west, legend cell align=left, align=left, draw=white!15!black}
]
\addplot [
  color=blue!70!black, 
  only marks,
  line width=2.0pt,
  mark=o,
  mark size=3.0pt,
  mark options={solid, blue!70!black}
]
  table[row sep=crcr]{%
30	0.74202777726921\\
60	0.509024955419476\\
90	0.423784664574739\\
120	0.382584946028884\\
150	0.349676817326955\\
165	0.325878506991065\\
};

\addplot [
  color=orange!90!black, 
  only marks,
  line width=2.0pt,
  mark=triangle*,
  mark size=3.0pt,
  mark options={solid, orange!90!black}
]
  table[row sep=crcr]{%
30	0.764012944359469\\
60	0.528563670243657\\
90	6.75220606064465\\
120	0.43465061428603\\
150	0.410102258073255\\
};

\end{axis}

\begin{axis}[%
width=1.322\fheight,
height=0.991\fheight,
at={(-0.212\fheight,-0.167\fheight)},
scale only axis,
xmin=0,
xmax=1,
ymin=0,
ymax=1,
axis line style={draw=none},
ticks=none,
axis x line*=bottom,
axis y line*=left
]
\end{axis}
\end{tikzpicture}%
\caption{Test data regime $[4,5]$}
\label{fig:state_test_phi}
    \end{subfigure}
\caption{Klein-Gordon-Zakharov equations. The relative state error comparison for $\psi$ in plot (a) shows that both approaches achieve similar accuracy up to $6r=60$. For $6r>60$, the relative state error for structure-preserving Lift \& Learn does not decrease as favorably with an increase in the reduced dimension. The relative state error comparison for $\psi$ in plot (b) shows that both
approaches achieve similar state error performance in the test data regime.  The relative state error comparison for $\phi$ in plots (a) and (b) show that, except $6r=90$ for structure-preserving Lift \& Learn, intrusive and nonintrusive approaches achieve similar accuracy in both training and test data regimes.}
 \label{fig:kgz_state}
\end{figure}
\begin{figure}[tbp]
\small
\captionsetup[subfigure]{oneside,margin={1.8cm,0 cm}}
    \begin{subfigure}{.45\textwidth}
       \setlength\fheight{6 cm}
        \setlength\fwidth{\textwidth}
\input{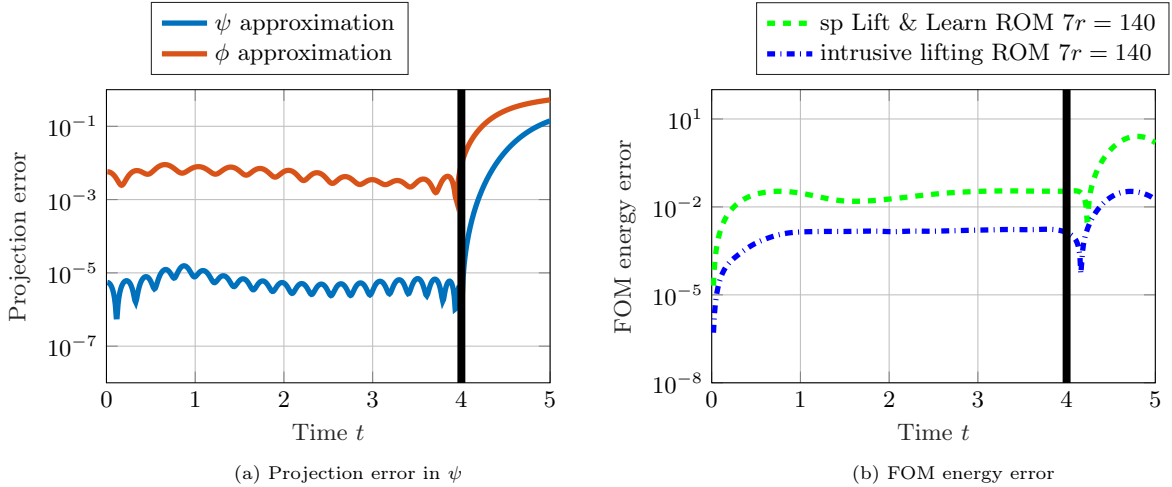}
\caption{Projection error in $\psi$}
\label{fig:proj_q}
    \end{subfigure}
          \hspace{0.4cm}
    \begin{subfigure}{.45 \textwidth}
           \setlength\fheight{6 cm}
           \setlength\fwidth{\textwidth}
\raisebox{-48mm}{
%
%
\begin{tikzpicture}

\begin{axis}[%
width=0.983\fheight,
height=0.65\fheight,
at={(0\fheight,0\fheight)},
scale only axis,
xmin=0,
xmax=5,
xlabel style={font=\color{white!15!black}},
xlabel={Time $t$},
ymode=log,
ymin=1e-08,
ymax=100,
yminorticks=true,
ylabel style={font=\color{white!15!black}},
ylabel={FOM energy error},
axis background/.style={fill=white},
xmajorgrids,
ymajorgrids,
yminorgrids,
legend style={at={(0.1,1.05)}, anchor=south west, legend cell align=left, align=left, draw=white!15!black}
]
\addplot [color=orange!90!black, dashed, line width=2.0pt]
  table[row sep=crcr]{%
0.0199999999999996	2.05947238373483e-05\\
0.0300000000000002	8.23001988874239e-05\\
0.04	0.000184881093605327\\
0.0499999999999998	0.000327947700384357\\
0.0599999999999996	0.000510959652438032\\
0.0700000000000003	0.000733231027456895\\
0.0800000000000001	0.000993936740910612\\
0.0899999999999999	0.00129212010781884\\
0.0999999999999996	0.00162670143098239\\
0.12	0.00240018151833737\\
0.14	0.00330365432187591\\
0.16	0.00432508987202709\\
0.18	0.00545150281515362\\
0.2	0.00666929047438315\\
0.22	0.00796453792148895\\
0.24	0.00932327628437178\\
0.26	0.0107316872265665\\
0.28	0.0121762537455424\\
0.3	0.0136438641733685\\
0.32	0.0151218815535731\\
0.34	0.0165981935938453\\
0.36	0.0180612586477264\\
0.38	0.0195001605761081\\
0.4	0.0209046802809234\\
0.42	0.0222653850414827\\
0.44	0.023573729711984\\
0.46	0.024822157694548\\
0.48	0.0260041856342468\\
0.51	0.0276413151186398\\
0.54	0.0291038741960665\\
0.57	0.03038363743231\\
0.6	0.0314764571902561\\
0.63	0.0323808619918827\\
0.66	0.0330967192751222\\
0.69	0.0336243606383323\\
0.72	0.0339644255446592\\
0.75	0.0341184335383423\\
0.78	0.0340898357273591\\
0.81	0.033885123089699\\
0.84	0.0335145500034287\\
0.87	0.0329921742011584\\
0.9	0.0323351631614741\\
0.94	0.0312827543824824\\
0.98	0.0300710167011539\\
1.02	0.0287435199899892\\
1.06	0.0273399417566861\\
1.11	0.0255346529091412\\
1.18	0.0230384483515354\\
1.28	0.0198668607957052\\
1.33	0.0185705465169894\\
1.37	0.0177011404062569\\
1.41	0.0169872767724064\\
1.44	0.0165536334039553\\
1.47	0.0162044094196472\\
1.5	0.0159351323986175\\
1.53	0.0157398484444275\\
1.56	0.0156116280313563\\
1.6	0.0155325261023154\\
1.64	0.0155441051685966\\
1.68	0.0156342508613898\\
1.72	0.0157952288152137\\
1.77	0.0160893579533876\\
1.82	0.0164797581987295\\
1.88	0.0170586725551539\\
1.96	0.0179722551265923\\
2.06	0.0192911911351666\\
2.14	0.0205169275405524\\
2.25	0.0224747074933475\\
2.35	0.0243819008346736\\
2.43	0.0258843942481826\\
2.52	0.0275295583670732\\
2.6	0.0289370314327289\\
2.67	0.0300728206170379\\
2.73	0.0309366037511063\\
2.8	0.0318089109256471\\
2.89	0.0327707621737636\\
2.99	0.0337043449434395\\
3.07	0.0343231950065455\\
3.15	0.0347706101908943\\
3.23	0.0350333588555486\\
3.32	0.0351432842846952\\
3.42	0.0350826333885243\\
3.53	0.0348355197636102\\
3.71	0.0343830296038277\\
3.97	0.034178554874602\\
4	0.0344324899817502\\
4.04	0.0350050204084527\\
4.07	0.0354251602177964\\
4.09	0.0355053178440039\\
4.1	0.0354191456178069\\
4.11	0.0352087862542475\\
4.12	0.0348377746252799\\
4.13	0.0342640533070289\\
4.14	0.0334396540566604\\
4.15	0.0323104251379164\\
4.16	0.0308158135146868\\
4.17	0.0288887108026484\\
4.18	0.0264553715770535\\
4.19	0.0234354121786646\\
4.2	0.019741897536428\\
4.21	0.0152815227386145\\
4.22	0.00995489513667507\\
4.23	0.00365692166940789\\
4.24	0.00372269514405616\\
4.25	0.0122988504579371\\
4.26	0.0221903189899649\\
4.27	0.0335189265034887\\
4.28	0.0464086134714944\\
4.29	0.0609843951947596\\
4.31	0.0956927636411002\\
4.33	0.138620256621034\\
4.35	0.190679037716805\\
4.37	0.252670843137444\\
4.39	0.325238875175715\\
4.41	0.408820083129063\\
4.43	0.503600142894115\\
4.45	0.609473514947513\\
4.47	0.726010903007114\\
4.49	0.852436248636451\\
4.51	0.98761508467213\\
4.53	1.13005564487333\\
4.55	1.27792360835976\\
4.57	1.42907077137933\\
4.59	1.58107731672673\\
4.61	1.73130672657759\\
4.63	1.87697179207988\\
4.65	2.01520964551706\\
4.67	2.1431633070401\\
4.69	2.25806692079576\\
4.71	2.35733167031515\\
4.73	2.43862931754495\\
4.75	2.49997040262425\\
4.77	2.53977436299303\\
4.79	2.55692916407617\\
4.81	2.55083845738291\\
4.83	2.52145476939367\\
4.85	2.46929774833619\\
4.87	2.39545702845481\\
4.89	2.3015797873867\\
4.91	2.18984355018669\\
4.93	2.06291521656343\\
4.95	1.92389764450577\\
4.97	1.77626540760914\\
4.99	1.62379155398808\\
5	1.54698006961588\\
};
\addlegendentry{sp Lift \& Learn ROM $7r=140$}

\addplot [color=blue!70!black, dashdotted, line width=2.0pt]
  table[row sep=crcr]{%
0.0199999999999996	5.11169905621501e-07\\
0.0300000000000002	2.0367763181639e-06\\
0.04	4.55338080428191e-06\\
0.0499999999999998	8.02281434971519e-06\\
0.0599999999999996	1.239348468971e-05\\
0.0700000000000003	1.76021381685132e-05\\
0.0800000000000001	2.35760102896165e-05\\
0.0899999999999999	3.02352851076648e-05\\
0.0999999999999996	3.74957728354275e-05\\
0.11	4.52717062989905e-05\\
0.12	5.34785539048243e-05\\
0.14	7.08692168836932e-05\\
0.16	8.91144769366291e-05\\
0.18	0.000107828506370424\\
0.2	0.000126832298369663\\
0.22	0.000146161802745155\\
0.24	0.000166044643906389\\
0.27	0.000197735992028356\\
0.31	0.000245776718193156\\
0.36	0.000318715021808203\\
0.4	0.000387671967146162\\
0.43	0.000444217033123095\\
0.46	0.000503426460254559\\
0.49	0.000564112424899576\\
0.52	0.000625535162484993\\
0.55	0.000687474792125612\\
0.58	0.000750074822990427\\
0.62	0.000834889004713658\\
0.66	0.000920884503871094\\
0.69	0.000985136507415518\\
0.72	0.00104785497507692\\
0.75	0.00110767545927047\\
0.78	0.00116335828176034\\
0.81	0.00121398152185975\\
0.84	0.00125900369072497\\
0.87	0.00129820617662517\\
0.9	0.00133157900980677\\
0.93	0.00135923001274932\\
0.96	0.00138137107222064\\
1	0.00140301228278986\\
1.04	0.00141718352364478\\
1.1	0.00142989315660998\\
1.28	0.00145680033662302\\
1.34	0.00145705322564936\\
1.49	0.00145213072621118\\
1.56	0.00146094606040606\\
1.79	0.00149842793858625\\
1.83	0.00149570676090661\\
1.87	0.0014867333943721\\
1.92	0.00146796005574288\\
2	0.00143675298163089\\
2.04	0.00143052710661322\\
2.07	0.0014324730051294\\
2.11	0.00144337512317179\\
2.24	0.00149288040193369\\
2.28	0.0014963077487073\\
2.33	0.00149166817662262\\
2.43	0.00147932219896574\\
2.48	0.00148206032280996\\
2.54	0.00149329362095614\\
2.67	0.00152824643814256\\
2.9	0.00159161887519531\\
3.15	0.00164819507753691\\
3.28	0.00168998202948842\\
3.33	0.00169650943310444\\
3.38	0.00169491553096577\\
3.46	0.00168211376131239\\
3.53	0.00167360387952158\\
3.58	0.00167537932086816\\
3.63	0.00168538457831801\\
3.7	0.00170944704720569\\
3.75	0.00172493405683781\\
3.78	0.0017280634974486\\
3.81	0.00172266276678783\\
3.83	0.00171270876975541\\
3.85	0.00169665919105228\\
3.87	0.00167382089271541\\
3.89	0.0016437055307506\\
3.91	0.00160604142734428\\
3.93	0.00156074124386123\\
3.95	0.00150781520937926\\
3.97	0.00144722419062873\\
3.99	0.00137867310732872\\
4.01	0.00130135259700182\\
4.03	0.00121364468739102\\
4.04	0.00116509295912238\\
4.05	0.00111281566496473\\
4.06	0.00105624896707013\\
4.07	0.000994725400996685\\
4.08	0.000927468040081293\\
4.09	0.000853586231801273\\
4.1	0.000772073148923482\\
4.11	0.0006818053743973\\
4.12	0.000581544708520596\\
4.13	0.000469942350464408\\
4.14	0.000345545563613996\\
4.15	0.000206806887513267\\
4.16	5.20959083314665e-05\\
4.17	0.000120286453725384\\
4.18	0.000312091236464767\\
4.19	0.000525102458650508\\
4.2	0.000761116078906525\\
4.21	0.00102191714757282\\
4.22	0.00130925545538503\\
4.24	0.00197021283379399\\
4.26	0.00275629604357619\\
4.28	0.00367757292144233\\
4.3	0.00474106035397653\\
4.32	0.00595000309005172\\
4.34	0.00730329497276216\\
4.36	0.00879508713682298\\
4.38	0.0104146158815456\\
4.4	0.0121462683657455\\
4.42	0.0139698882273888\\
4.44	0.0158613070740694\\
4.46	0.0177930728648925\\
4.48	0.01973533370734\\
4.5	0.0216568264592843\\
4.52	0.0235259143572421\\
4.54	0.0253116169143323\\
4.56	0.0269845784010659\\
4.58	0.0285179278600117\\
4.6	0.0298879930731027\\
4.62	0.0310748422793404\\
4.64	0.0320626397393016\\
4.66	0.0328398134818826\\
4.68	0.0333990448736358\\
4.7	0.0337370993063677\\
4.72	0.033854524796493\\
4.74	0.0337552503412007\\
4.76	0.0334461184069914\\
4.78	0.032936386051083\\
4.8	0.0322372271622362\\
4.82	0.0313612645281774\\
4.84	0.0303221553257072\\
4.86	0.0291342476390855\\
4.88	0.0278123191765144\\
4.9	0.0263714028657387\\
4.92	0.0248266978003812\\
4.94	0.0231935583491577\\
4.96	0.021487549335975\\
4.98	0.019724551203285\\
5	0.0179208960915048\\
};
\addlegendentry{intrusive lifting ROM $7r=140$}

\addplot [color=black, line width=3.0pt]
  table[row sep=crcr]{%
4	5.11169905621501e-010\\
4	100\\
};

\end{axis}
\end{tikzpicture}
\caption{FOM energy error}
\label{fig:energy}
    \end{subfigure}
\caption{Klein-Gordon-Zakharov equations. The time-evolution of projection error  for $\psi$ and $\phi$ in plot  (a) shows that the basis matrix provides accurate approximations in the training data regime. However, the projection error for both field variables grows substantially outside the training data regime. Despite the projection error growth in the test data regime, the energy error comparison in plot (b) shows that both structure-preserving Lift \& Learn and intrusive lifting approaches yield ROMs with bounded energy error.}
 \label{fig:kgz_proj}
\end{figure}
\subsubsection{Numerical results}
We build a training dataset consisting of FOM snapshots  from $t=0$ to $t=4$. 
For test dataset, we consider FOM snapshots from $t=4$ to $t=5$ (predicting 25\% outside training data) to study the generalization capability of the learned ROMs.  Due to the non-canonical nature of KGZ equations, the HOpInf with spDEIM approach is not applicable to this example. Therefore, we only consider structure-preserving Lift \& Learn ROMs and structure-preserving intrusive lifting ROMs for this numerical example.  

In Figure~\ref{fig:kgz_state}, we compare the relative state error of the structure-preserving Lift \& Learn approach against the intrusive lifting approach for the complex-valued scalar field $\psi(x,y,t)$ and the real-valued scalar field $\phi (x,y,t)$. For training data, the relative state error comparison for $\psi$ in Figure~\ref{fig:state_train_phi} shows that the quadratic ROMs obtained via structure-preserving Lift \& Learn and intrusive lifting achieve similar accuracy up to $6r=60$. For $6r>60$, we observe that the state error for $\psi$ levels off in the training data regime for the structure-preserving Lift \& Learn ROMs. In Figure~\ref{fig:state_test_phi}, both nonintrusive and intrusive ROMs exhibit similar accuracy in the test data regime for $\psi$.  The relative state error comparison for $\phi$ in Figure~\ref{fig:state_train_phi}  shows that structure-preserving Lift \& Learn ROMs and intrusive lifting ROMs achieve similar accuracy in the training data regime for all reduced dimensions except $6r=90$. For $6r=90$, we observe that the learned quadratic ROM encounters conditioning issues during numerical time integration.\footnote{We investigated adding a Tikhonov-type regularization term to the learning problem for $6r=90$ as is commonly done in unconstrained Operator Inference~\cite{mcquarrie2021data}. However, this did not lead to improved accuracy or stability in our numerical experiments. Since the proposed structure-preserving Lift \& Learn formulation enforces hard constraints derived from the underlying physics, it already provides physics-based regularization and additional regularization may therefore be ineffective.} In Figure~\ref{fig:state_test_phi}, we observe that both approaches fail to achieve relative state error below $10^{-1}$ for $\phi$ in the test data regime. 

In Figure~\ref{fig:proj_q}, we show the time evolution of the projection error in $\psi$ and $\phi$, which illustrates that the basis matrix provides accurate approximations with relative error below $10^{-2}$ for both field variables in the training data regime. However, the basis matrix yields substantially higher projection error in the test data regime with relative error higher than $10^{-1}$ for both $\psi$ and $\phi$ at $t=5$. This growth in the projection error is due to the transport-dominated nature of the problem. Notwithstanding the substantial projection error growth outside the training data regime, the energy error comparison in Figure~\ref{fig:energy} shows that both structure-preserving Lift \& Learn and intrusive lifting yield quadratic ROMs with bounded FOM energy error in the training data regime. 

We compare the FOM solution and the structure-preserving Lift \& Learn ROM solution, and also include the corresponding absolute pointwise error, for $\psi(x,y,t)$ and $\phi(x,y,t)$ in Figure~\ref{fig:kgz_psi} and Figure~\ref{fig:kgz_phi}, respectively. Figure~\ref{fig:kgz_psi} shows that the structure-preserving Lift \& Learn ROM of dimension $7r=140$ accurately captures the time-evolution of the complex-valued scalar field $\psi(x,y,t)$ and provides accurate prediction at $t=4.5$, which is 12.5\% outside the training time interval. Figure~\ref{fig:kgz_phi} shows that the structure-preserving Lift \& Learn ROM of dimension $7r=140$ provides accurate approximation of the scalar solution field $\phi(x,y,t)$ over the two-dimensional computational domain at $t=1.5$, $t=3$, and $t=4.5$. In terms of computational efficiency, numerical time integration of the structure-preserving quadratic ROM of size $7r=140$ requires approximately $1.48$~\si{\s} compared to the approximate FOM run time of $89$~\si{\min}, which is a factor of $3608\times$ speedup. Thus, the quadratic ROM learned via the proposed method provides accurate and stable predictions at a substantially lower computational cost.

\begin{figure}[tbp]
\centering
\newcolumntype{C}[1]{>{\centering\arraybackslash}m{#1}}

\setlength{\tabcolsep}{8pt}
\renewcommand{\arraystretch}{1.0}

\begin{tabular}{C{0.2\textwidth} C{0.23\textwidth} C{0.23\textwidth} C{0.23\textwidth}}

\makecell{\small Conservative\\ \small Nonlinear FOM} &

\begin{minipage}[b]{\linewidth}
  \centering
  \includegraphics[width=\linewidth]{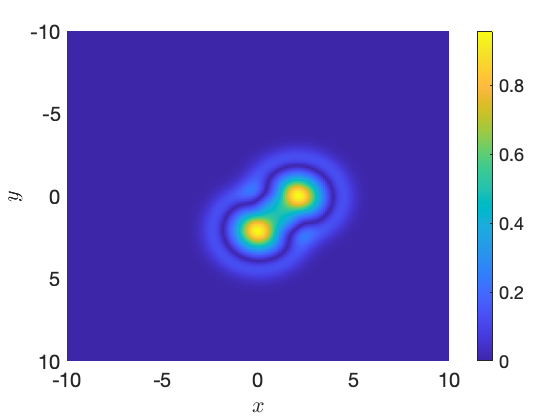}
\end{minipage} &

\begin{minipage}[b]{\linewidth}
  \centering
  \includegraphics[width=\linewidth]{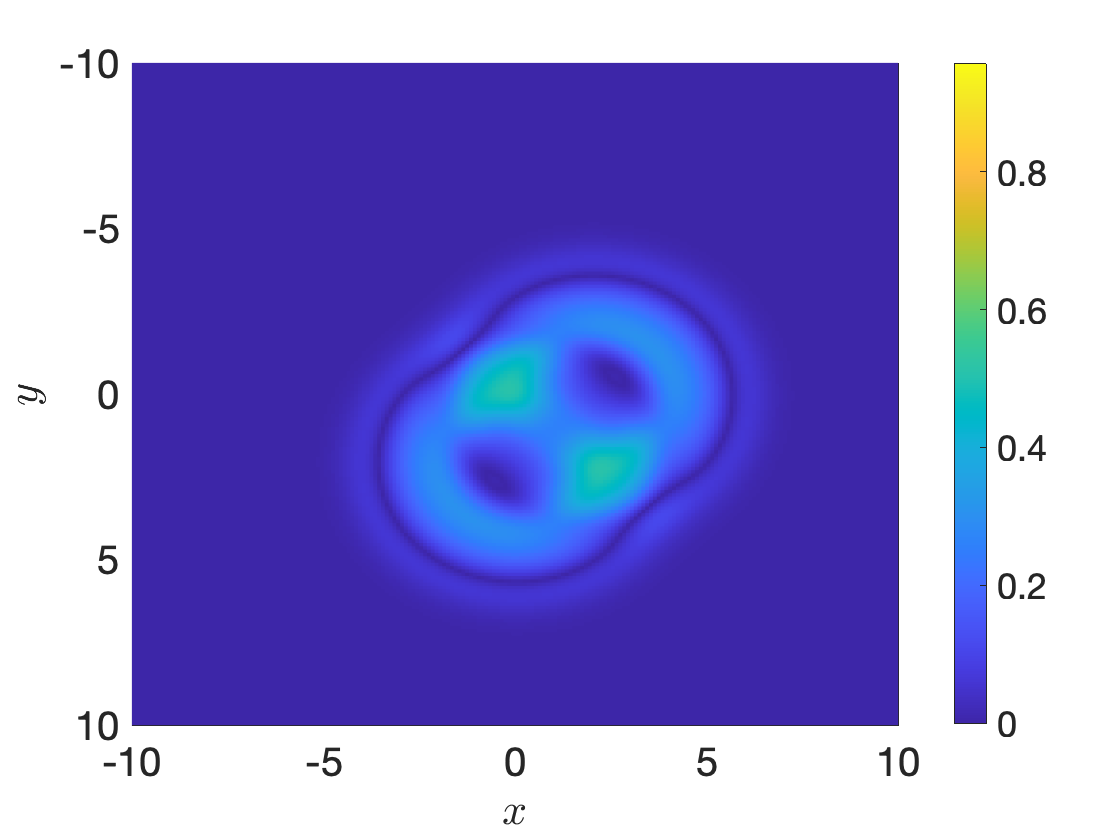}
\end{minipage} &

\begin{minipage}[b]{\linewidth}
  \centering
  \includegraphics[width=\linewidth]{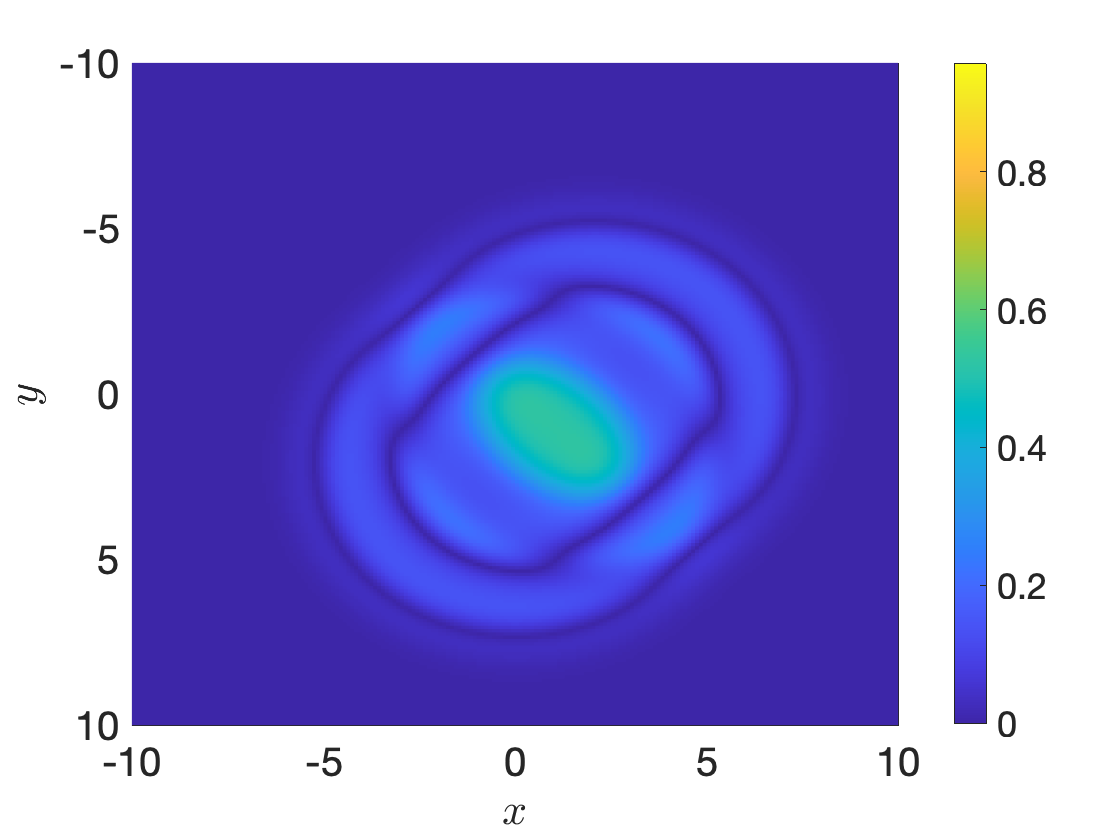}
\end{minipage}
\\[0.6em] 

\makecell{\small Structure-preserving\\ \small Lift \& Learn ROM} &

\begin{minipage}[b]{\linewidth}
  \centering
  \includegraphics[width=\linewidth]{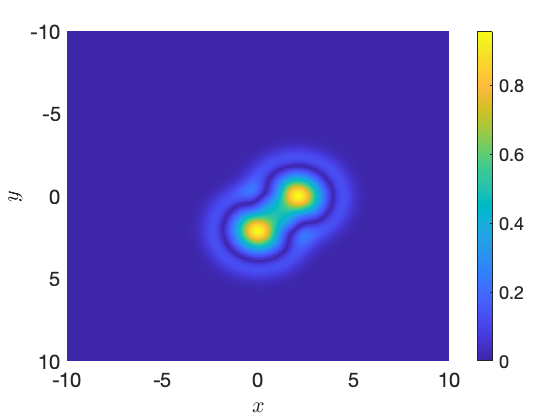}
\end{minipage} &

\begin{minipage}[b]{\linewidth}
  \centering
  \includegraphics[width=\linewidth]{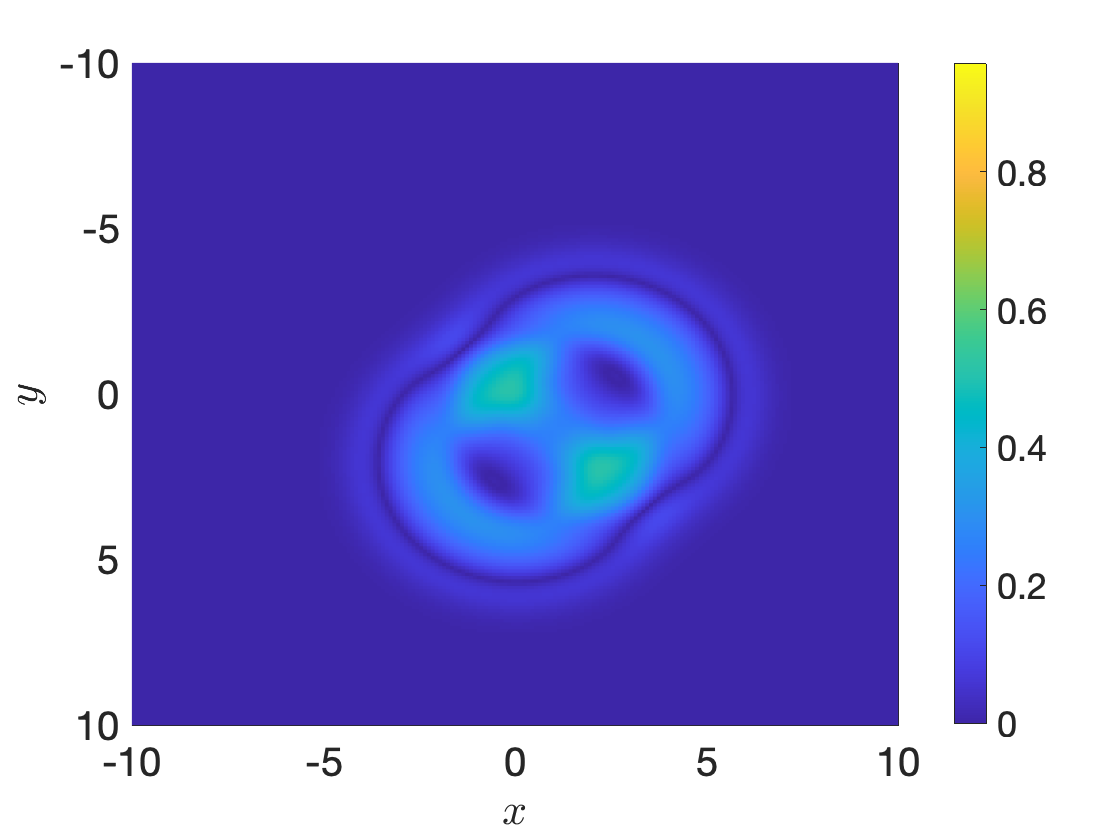}
\end{minipage} &

\begin{minipage}[b]{\linewidth}
  \centering
  \includegraphics[width=\linewidth]{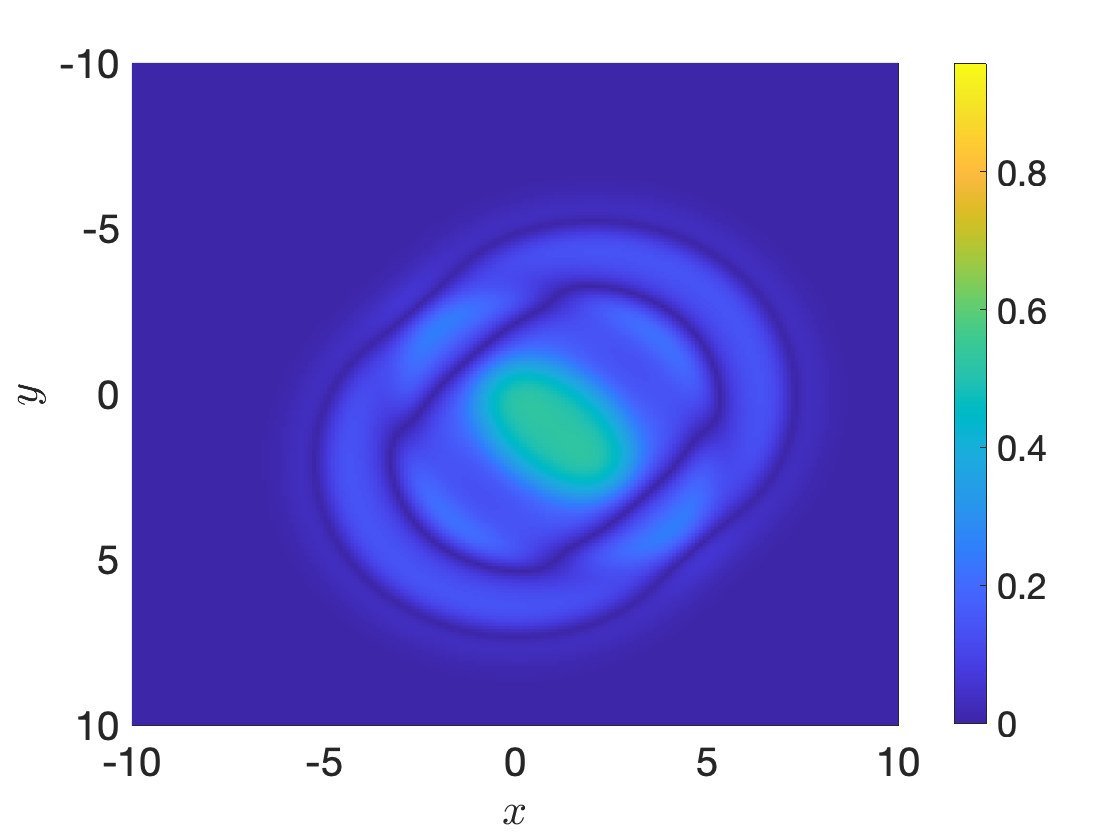}
\end{minipage}
\\[0.6em] 
\makecell{\small Absolute pointwise error \\ \small  for sp Lift \& Learn ROM} &

\begin{minipage}[b]{\linewidth}
  \centering
  \includegraphics[width=\linewidth]{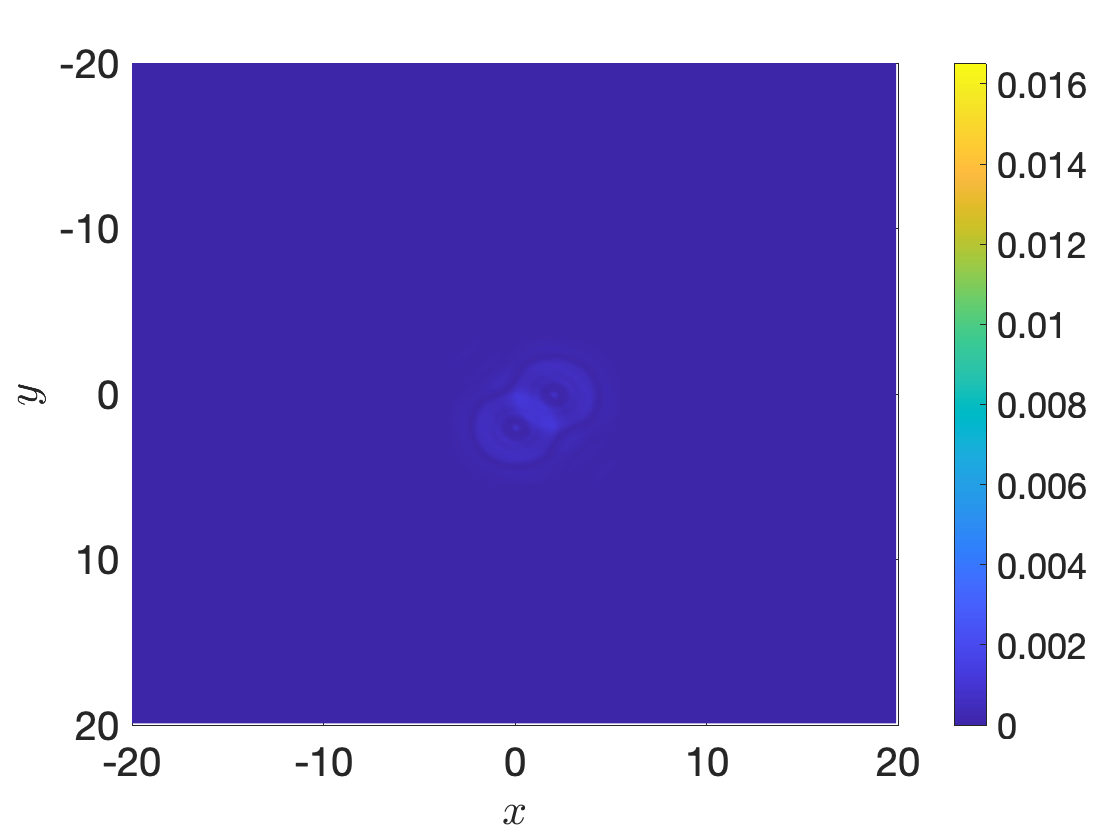}
  \subcaption{$t=1.5$}
\end{minipage} &

\begin{minipage}[b]{\linewidth}
  \centering
  \includegraphics[width=\linewidth]{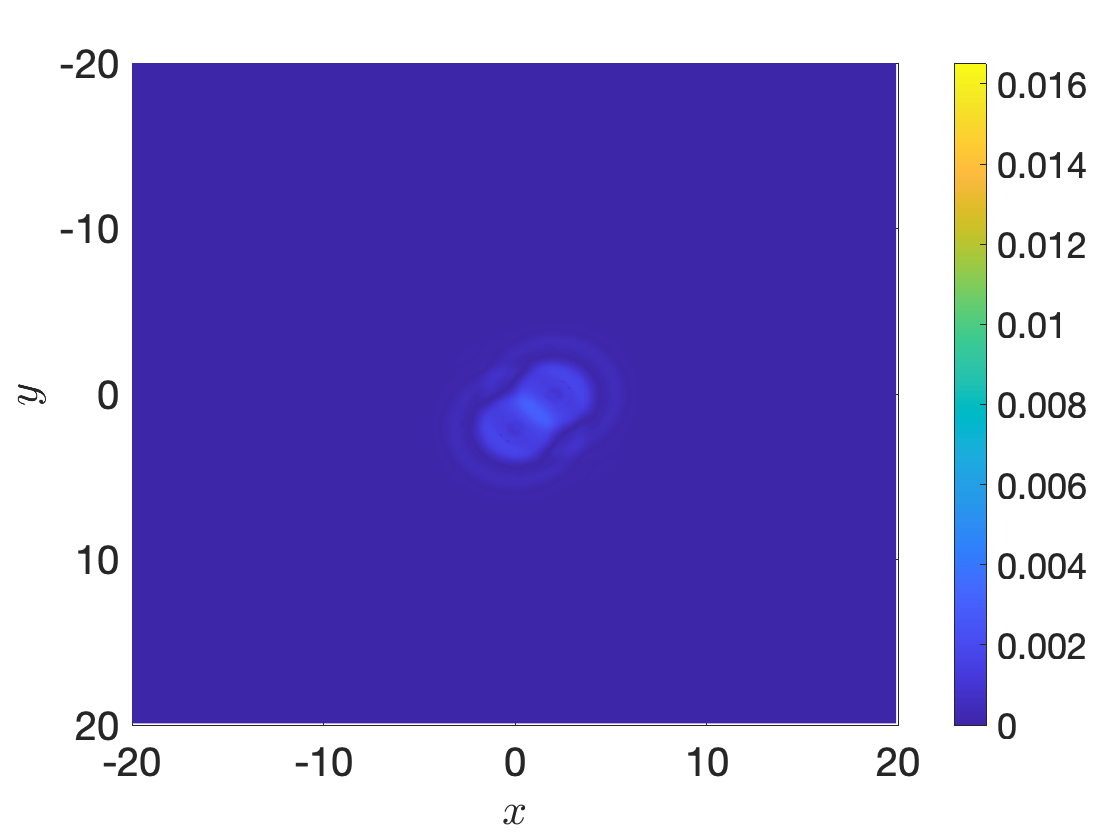}
  \subcaption{$t=3$}
\end{minipage} &

\begin{minipage}[b]{\linewidth}
  \centering
  \includegraphics[width=\linewidth]{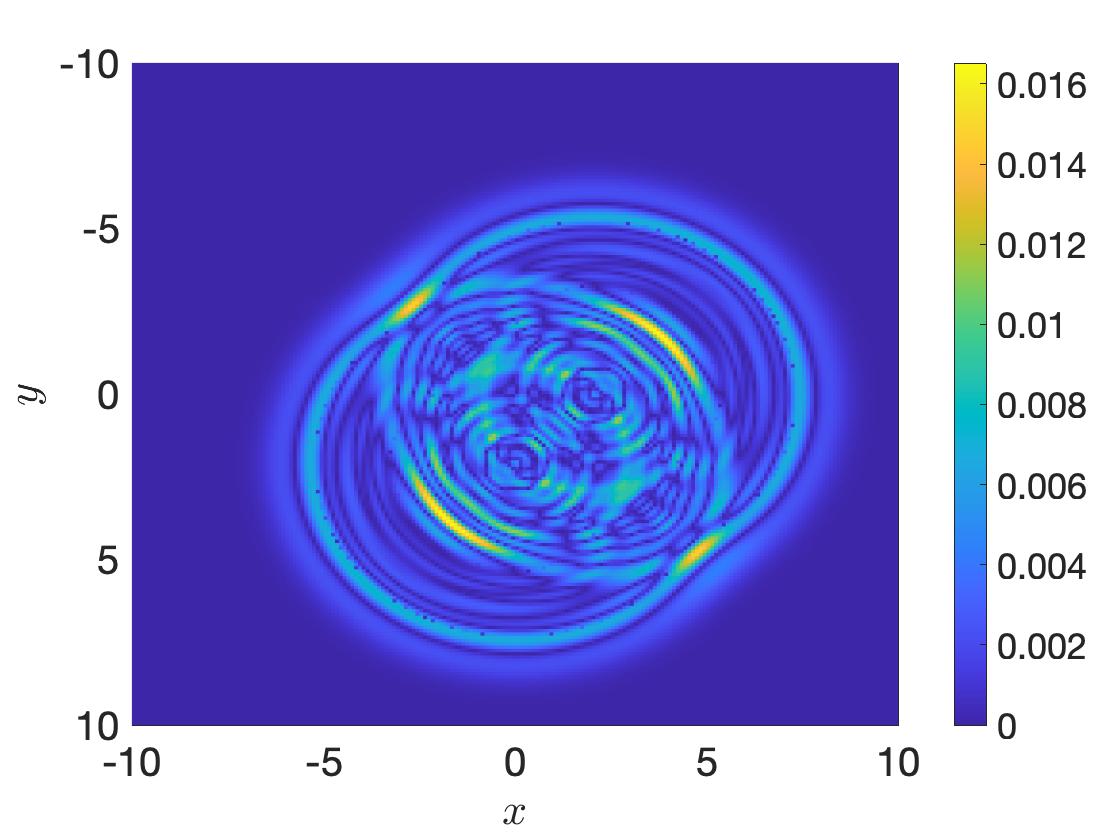}
  \subcaption{$t=4.5$}
\end{minipage}
\\
\end{tabular}
    \caption{Klein-Gordon-Zakharov equations. Plots compare the absolute value of the complex-valued field $\psi(x,y,t)$ at selected time instances $t\in \{1.5, 3, 4.5\}$ for the nonlinear conservative FOM (top row) and the structure-preserving quadratic ROM of dimension $7r=140$ (middle row).  The bottom row shows the absolute pointwise error between the structure-preserving Lift \& Learn ROM and the conservative FOM. The data-driven quadratic ROM learned via structure-preserving Lift \& Learn provides accurate approximate solutions for $|\psi(x,y,t)|$ at all three time instances. }
\label{fig:kgz_psi}
\end{figure}
\begin{figure}[tbp]
\centering
\newcolumntype{C}[1]{>{\centering\arraybackslash}m{#1}}

\setlength{\tabcolsep}{8pt}
\renewcommand{\arraystretch}{1.0}

\begin{tabular}{C{0.2\textwidth} C{0.23\textwidth} C{0.23\textwidth} C{0.23\textwidth}}

\makecell{\small Conservative\\ \small Nonlinear FOM} &

\begin{minipage}[b]{\linewidth}
  \centering
  \includegraphics[width=\linewidth]{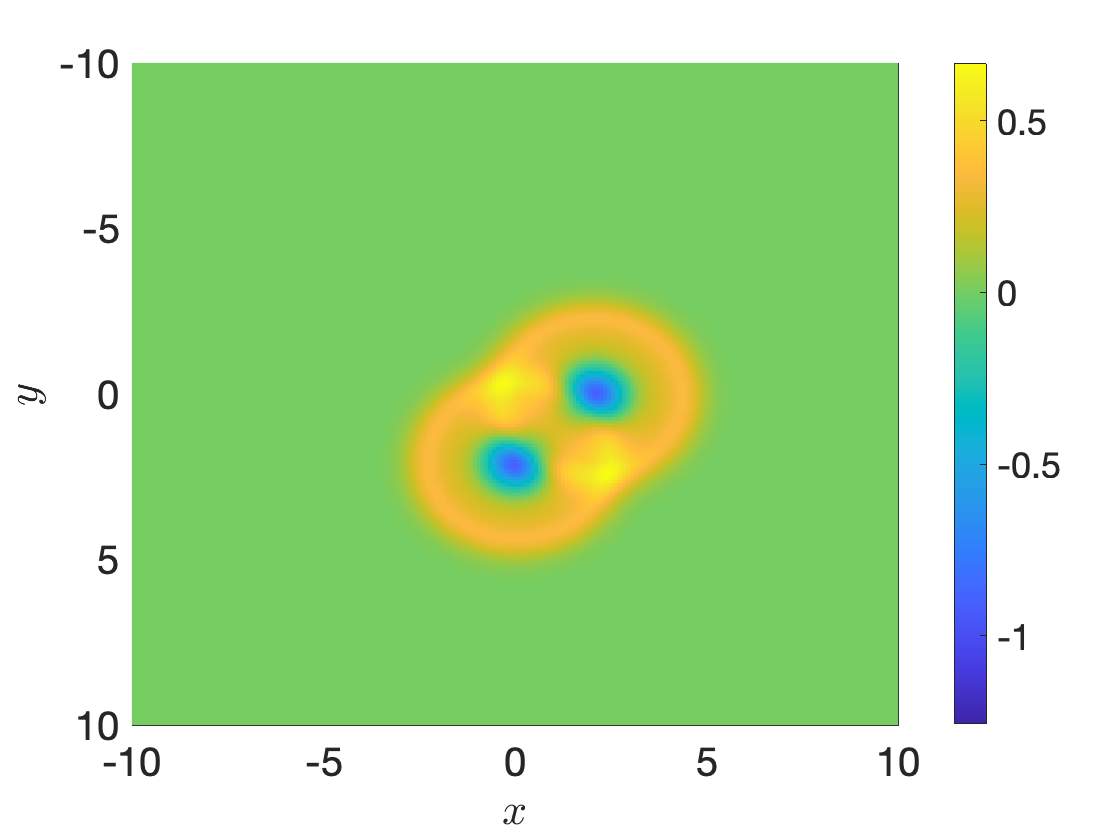}
\end{minipage} &

\begin{minipage}[b]{\linewidth}
  \centering
  \includegraphics[width=\linewidth]{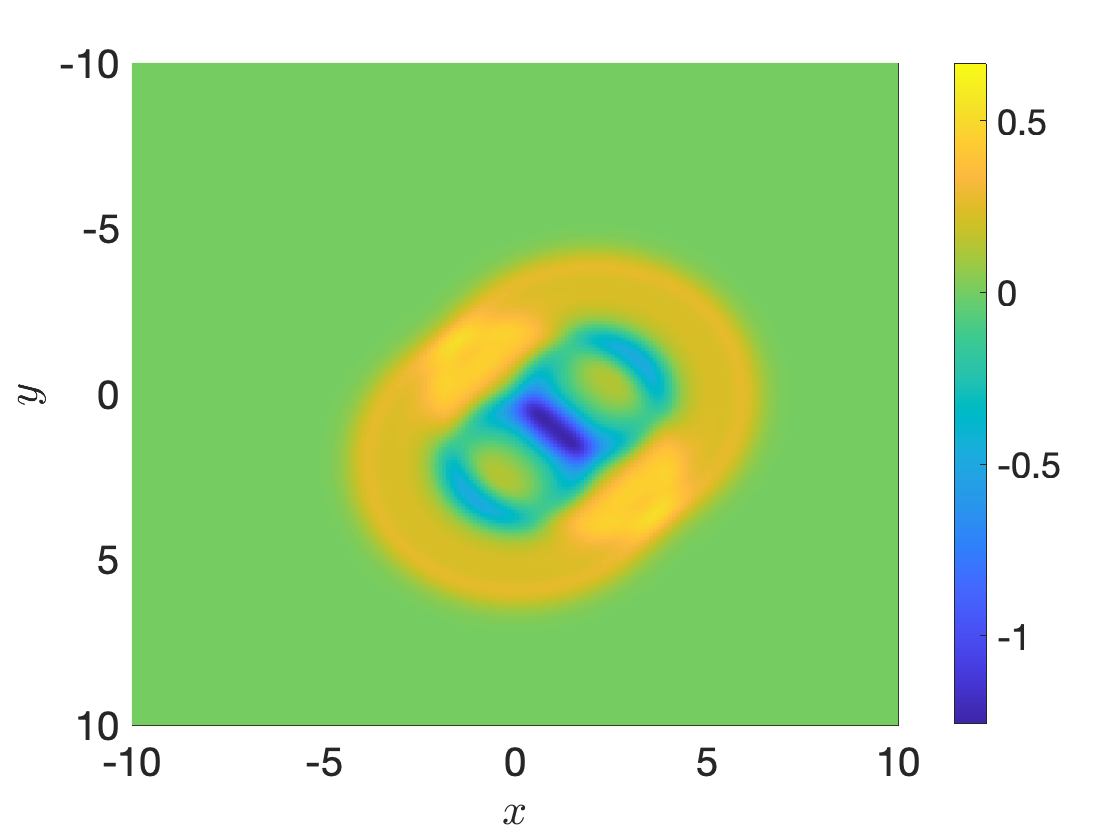}
\end{minipage} &

\begin{minipage}[b]{\linewidth}
  \centering
  \includegraphics[width=\linewidth]{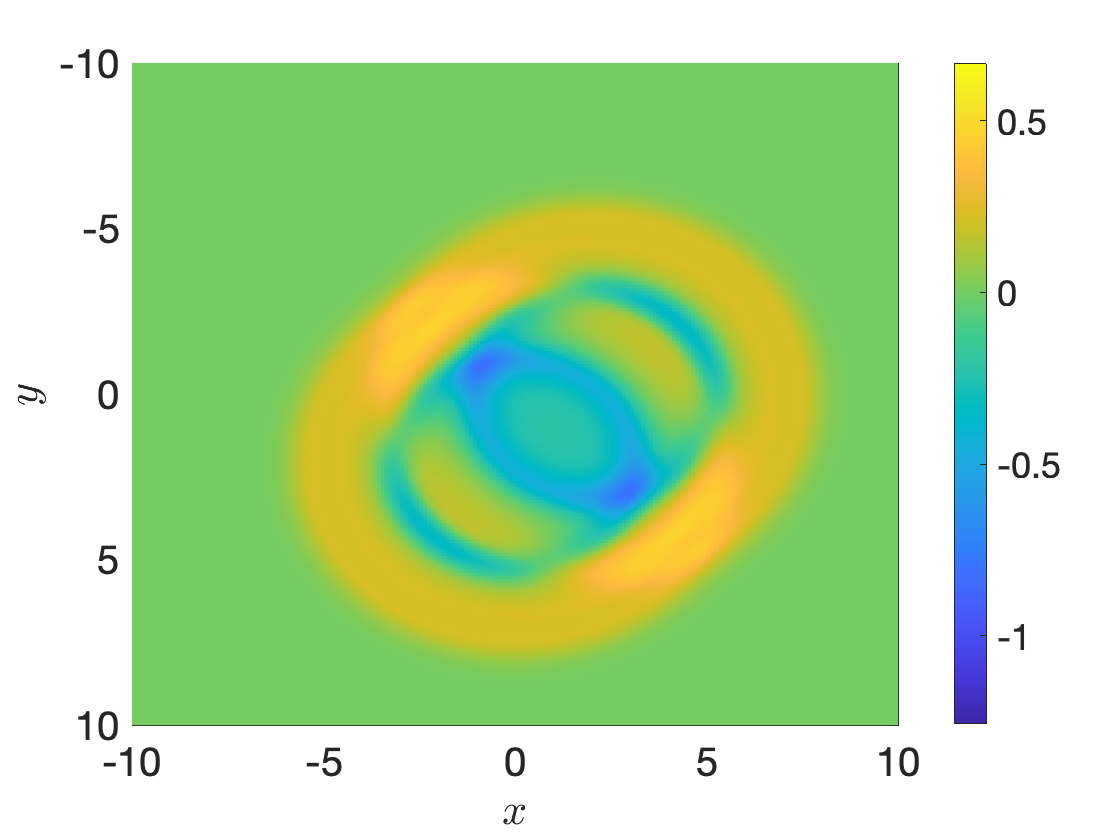}
\end{minipage}
\\[0.6em] 

\makecell{\small Structure-preserving\\ \small Lift \& Learn ROM} &

\begin{minipage}[b]{\linewidth}
  \centering
  \includegraphics[width=\linewidth]{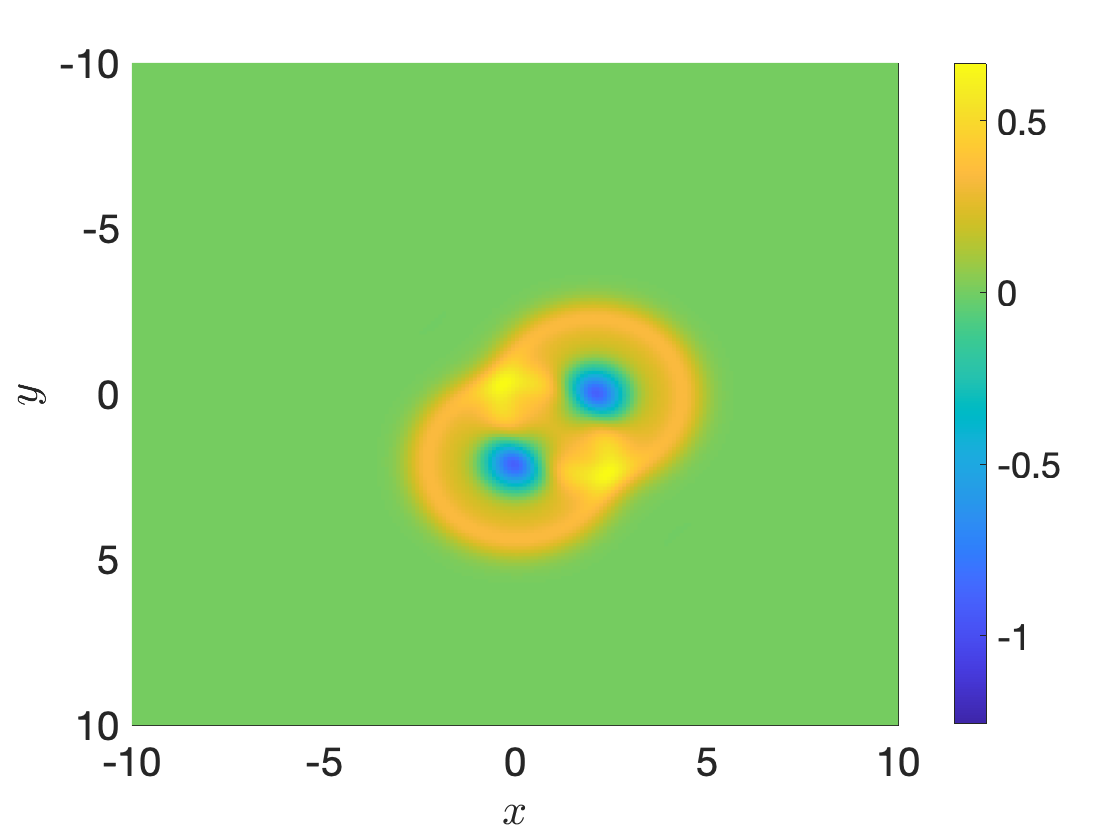}
\end{minipage} &

\begin{minipage}[b]{\linewidth}
  \centering
  \includegraphics[width=\linewidth]{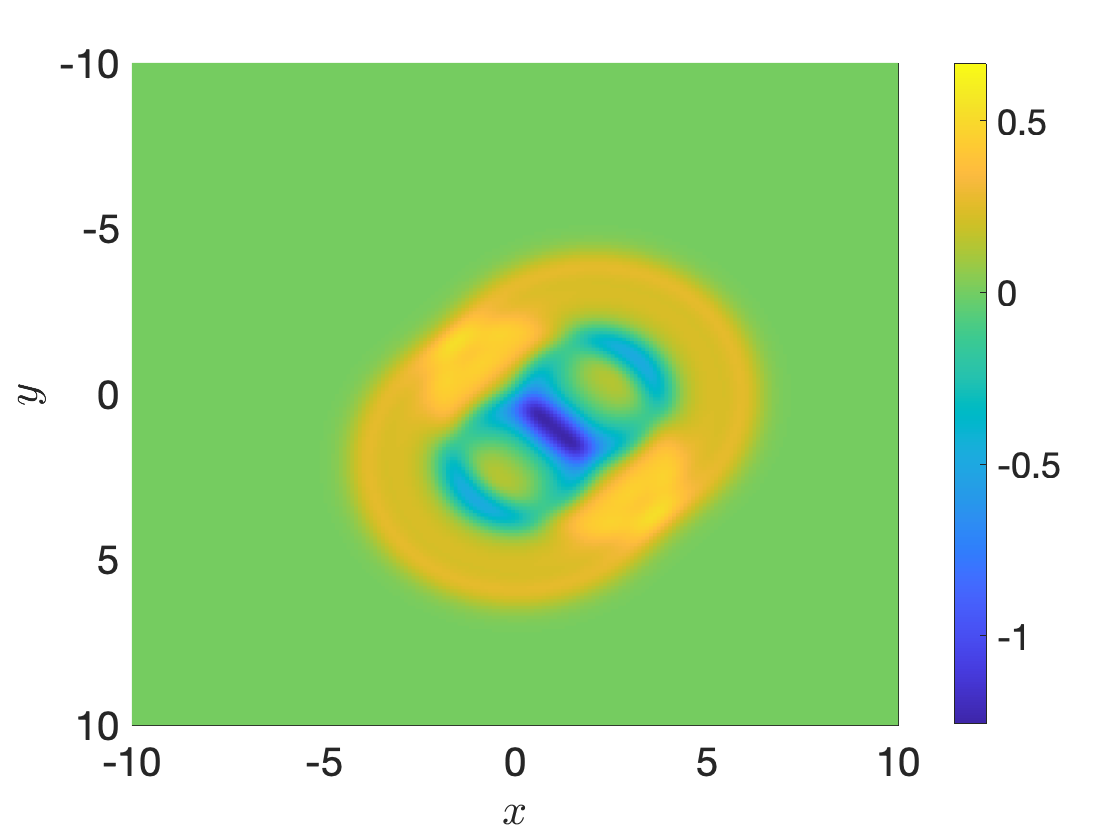}
\end{minipage} &

\begin{minipage}[b]{\linewidth}
  \centering
  \includegraphics[width=\linewidth]{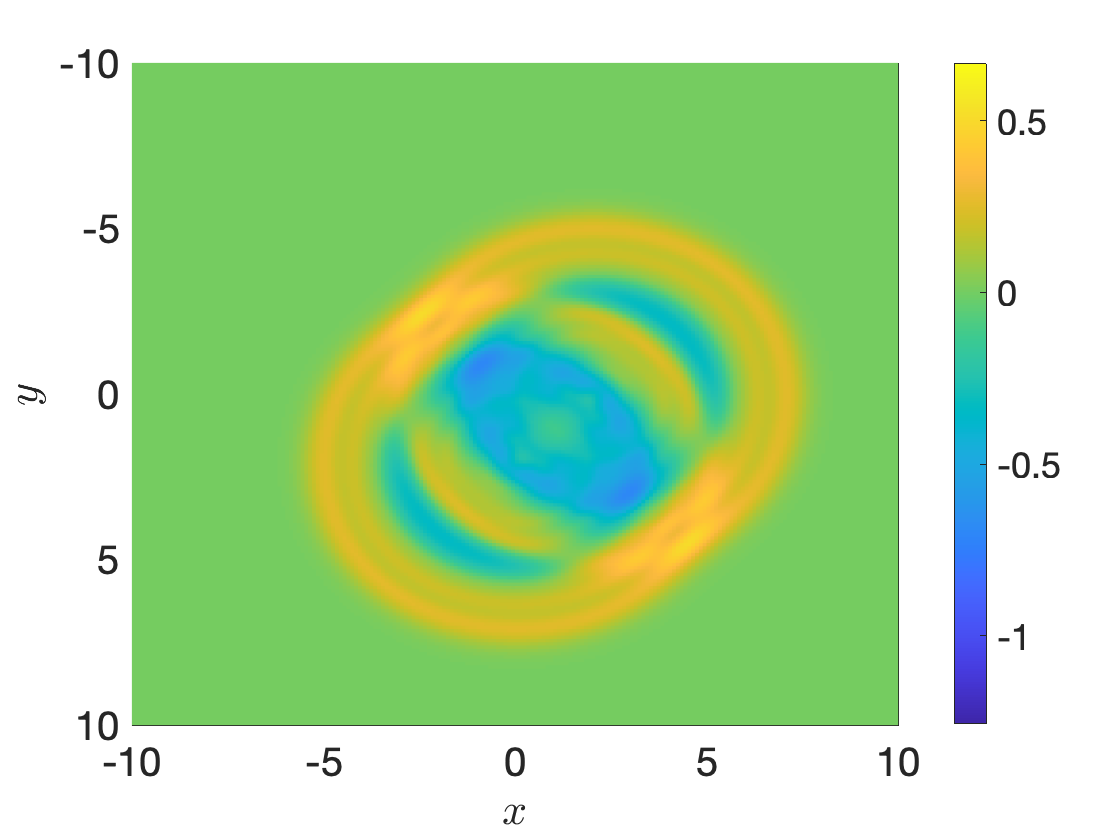}
\end{minipage}
\\[0.6em] 
\makecell{\small  Absolute pointwise error\\ \small  for sp Lift \& Learn ROM} &

\begin{minipage}[b]{\linewidth}
  \centering
  \includegraphics[width=\linewidth]{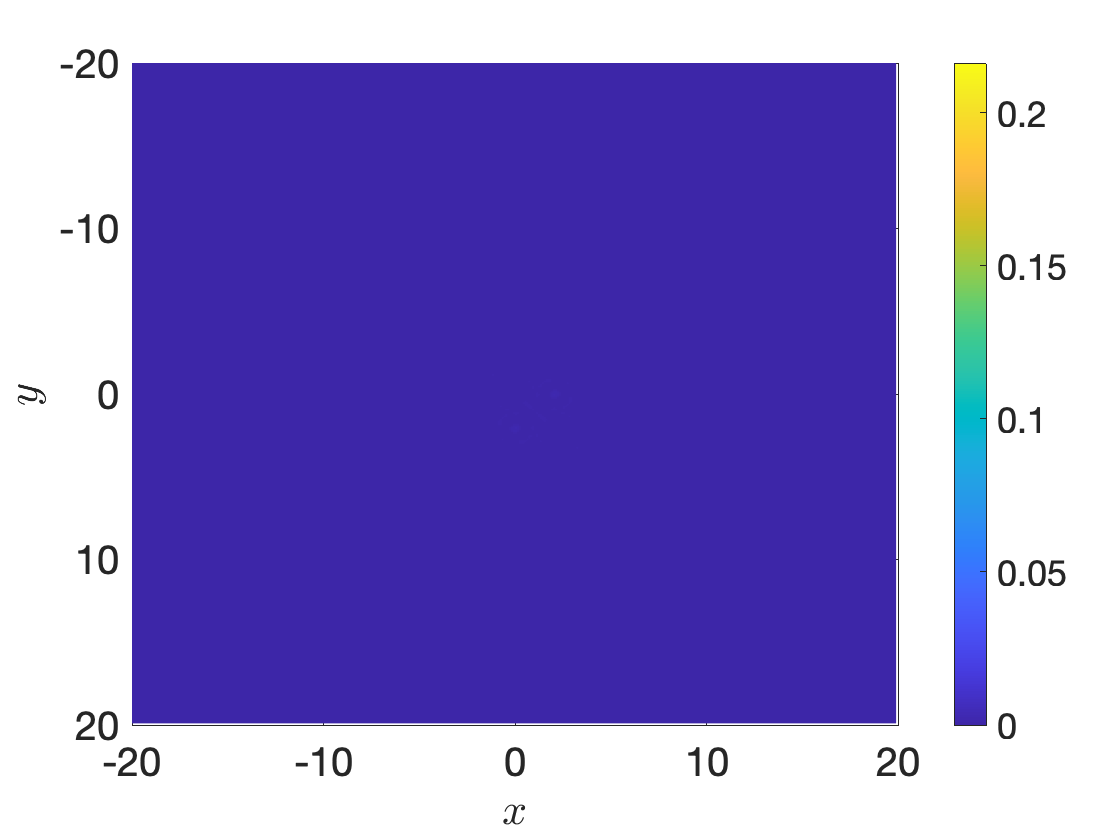}
  \subcaption{$t=1.5$}
\end{minipage} &

\begin{minipage}[b]{\linewidth}
  \centering
  \includegraphics[width=\linewidth]{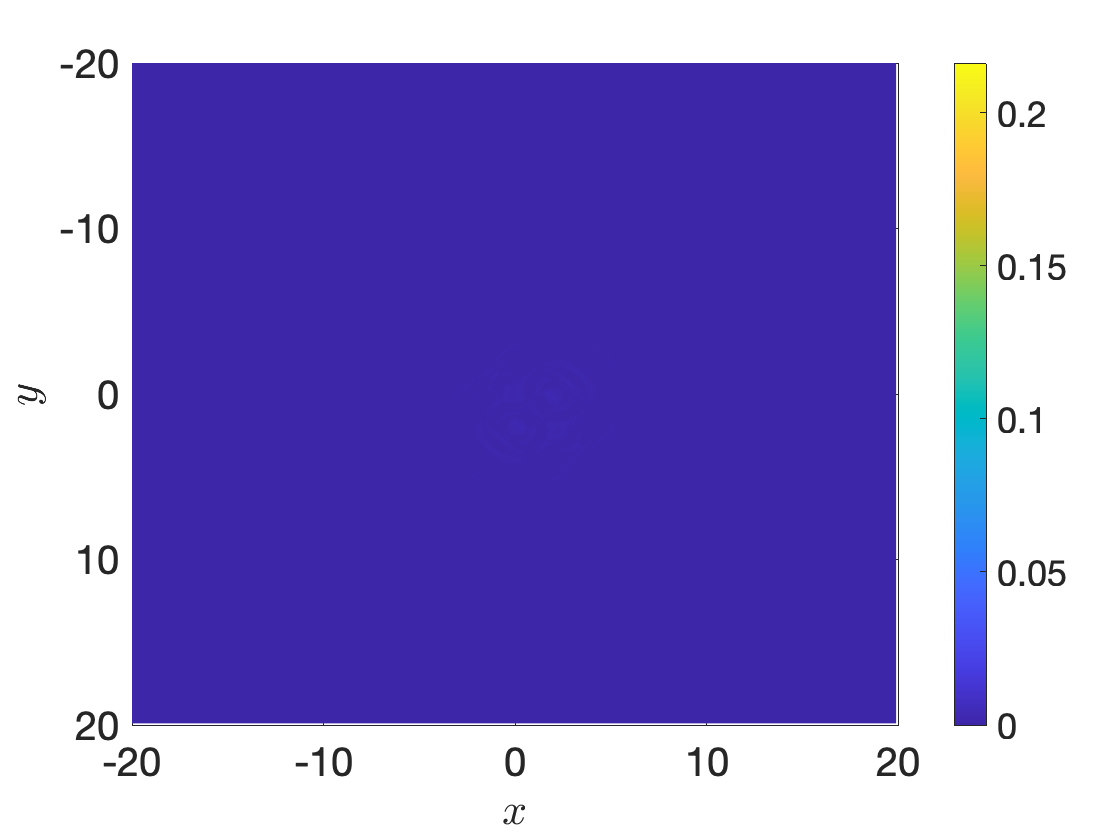}
  \subcaption{$t=3$}
\end{minipage} &

\begin{minipage}[b]{\linewidth}
  \centering
  \includegraphics[width=\linewidth]{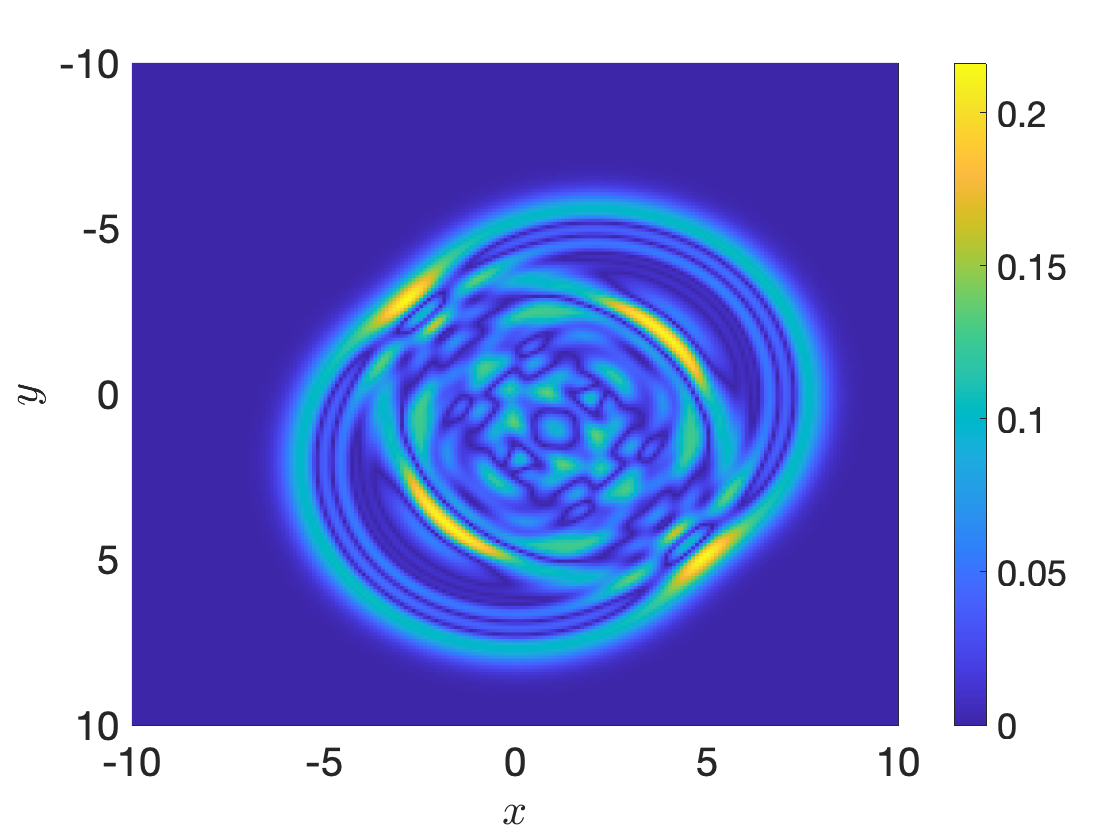}
  \subcaption{$t=4.5$}
\end{minipage}
\\
\end{tabular}
    \caption{Klein-Gordon-Zakharov equations. Plots compare the evolution of the scalar field $\phi(x,y,t)$ at selected time instances $t\in \{1.5, 3, 4.5\}$ for the nonlinear conservative FOM (top row) and the structure-preserving quadratic ROM of dimension $7r=140$ (middle row). The bottom row shows the absolute pointwise error between the structure-preserving Lift \& Learn ROM and the conservative FOM. The proposed structure-preserving Lift \& Learn method provides accurate approximation of $\phi(x,y,t)$ solution over the two-dimensional computational domain.}
\label{fig:kgz_phi}
\end{figure}
\section{Conclusions}
\label{sec:conclusion}
We have presented \textit{structure-preserving Lift \& Learn}, a nonintrusive model reduction method which uses lifting transformations to learn low-dimensional quadratic ROMs that respect the conservative nature of the high-dimensional problem. In contrast to the standard Lift \& Learn approach~\cite{qian2020lift} that was designed to match the algebraic structure of the lifted model (here: quadratic), the proposed structure-preserving Lift \& Learn approach enforces additional conservative properties in the learning process. It does this by leveraging prior knowledge of the conservative PDE through the energy-quadratization strategy, which enables the derivation of quadratic reduced terms analytically. The remaining linear reduced operators are then learned via a constrained optimization problem. We presented a theoretical result that shows the learned quadratic ROMs conserve a perturbed lifted FOM energy exactly.

Numerical experiments on two canonical Hamiltonian PDEs demonstrate the ability of the proposed approach to learn structure-preserving ROMs that achieve accuracy and computationally efficiency similar to nonintrusive Hamiltonian Operator Inference~\cite{sharma2022hamiltonian} ROMs with spDEIM~\cite{pagliantini2023gradient}. The numerical results also show that the proposed structure-preserving Lift \& Learn approach yields generalizable quadratic ROMs that provide accurate predictions with bounded energy error outside the training data regime. The final numerical example with the Klein-Gordon-Zakharov equations shows the proposed approach can be flexibly adapted to learn structure-preserving ROMs for a wider class of coupled conservative PDEs that do not have a canonical Hamiltonian formulation.

Future research directions motivated by this work are: automating the process of deriving quadratic reduced terms in structure-preserving Lift \& Learn via the optimal quadratization framework~\cite{bychkov2024exact}; studying the long-term boundedness of nonintrusive quadratic ROMs using the theoretical framework from~\cite{schlegel2015long}; deriving error bounds for the difference between nonlinear FOM energy approximation obtained via intrusive and nonintrusive quadratic ROMs; and combining the proposed method with the probabilistic learning framework in~\cite{galioto2024bayesian} to use it for learning structure-preserving ROMs from noisy data.
\section*{Acknowledgments}
 H. Sharma and B. Kramer were in part financially supported by the Applied and Computational Analysis Program of the Office of Naval Research under award N000142212624 and the National Science Foundation under award NSF-CMMI 2144023. We thank Iman Adibnazari for his help with running numerical experiments for the two-dimensional Klein-Gordon-Zakharov equations on the Triton Shared Computing Cluster (TSCC) allocation supported by ONR DURIP Award No. N000142412222.  
\bibliographystyle{vancouver}
\bibliography{main}
\end{document}